\PassOptionsToPackage{super,sort&compress,comma}{natbib}
\PassOptionsToPackage{hidelinks}{hyperref}
\documentclass{dukecei}

\usepackage{microtype}
\usepackage{float}
\usepackage{tikz}
\usetikzlibrary{arrows.meta,positioning,calc,fit,backgrounds,shapes.geometric,decorations.pathreplacing,patterns,shadows.blur}
\definecolor{linkblue}{HTML}{1A5DAD}
\definecolor{inkP}{HTML}{5B3A8C}
\definecolor{inkT}{HTML}{1F7A6D}
\definecolor{inkA}{HTML}{B26B00}
\definecolor{inkG}{HTML}{5A5A5A}
\definecolor{fillP}{HTML}{EEE7F6}
\definecolor{fillT}{HTML}{E2F1EE}
\definecolor{fillA}{HTML}{F6ECD9}
\definecolor{fillG}{HTML}{ECECEC}
\definecolor{fillcert}{HTML}{DDEFE0}
\definecolor{fillsusp}{HTML}{F7E4CC}
\definecolor{fillmiss}{HTML}{F3D6D6}

\theoremstyle{plain}
\newtheorem{sproposition}{Proposition}

\newtheorem{stheorem}{Theorem}

\newtheorem{scorollary}{Corollary}

\newtheorem*{thmrestate}{Proposition 3 (record-edit monotonicity; restated from the main text)}
\newtheorem*{proprestateA}{Proposition 1 (restated from the main text)}
\newtheorem*{proprestateB}{Proposition 2 (restated from the main text)}
\theoremstyle{definition}
\newtheorem{sdefinition}{Definition}

\newtheorem{assumption}{Assumption}

\theoremstyle{remark}
\newtheorem{sremark}{Remark}

\newcommand{\audita}{\textsc{Audita}}
\newcommand{\rhointerval}{[\underline{\rho},\overline{\rho}]}
\newcommand{\Gcert}{G^{\mathrm{cert}}}

\newcommand{\llmchip}[3]{%
\begin{scope}[shift={(#1,#2)}]
  \foreach \dx in {-0.18,0,0.18}{\draw[inkP,line width=0.5pt] (\dx,0.26)--(\dx,0.335);\draw[inkP,line width=0.5pt] (\dx,-0.26)--(\dx,-0.335);}
  \foreach \dy in {-0.09,0.09}{\draw[inkP,line width=0.5pt] (0.37,\dy)--(0.445,\dy);\draw[inkP,line width=0.5pt] (-0.37,\dy)--(-0.445,\dy);}
  \draw[inkP,fill=fillP,rounded corners=1.2pt,line width=0.7pt] (-0.37,-0.26) rectangle (0.37,0.26);
  \node[font=\fontsize{5.5}{6}\selectfont\bfseries,text=inkP] at (0,0.075) {LLM};
  \node[font=\fontsize{4.6}{5}\selectfont,text=inkG] at (0,-0.115) {#3};
\end{scope}}

\DeclareCaptionLabelSeparator{vbar}{~$|$~}
\renewcommand{\figurename}{Figure}

\title{\audita{}: certified auditing and causal attribution of adverse
outcomes in autonomous multi-agent systems}

\author{
    Zhixu Du$^{1,\ast}$\footnote{Correspondence E-mail: zhixu.du@duke.edu},
    Yiran Chen$^{1}$\\[1em]
    \normalsize $^{1}$Department of Electrical and Computer Engineering,
    Duke University, Durham, North Carolina, USA
}

\begin{document}

\maketitle
\thispagestyle{firstpagestyle}

\begin{abstract}
Physical automation is scaling toward fleets of embodied machines commanded by
an AI brain. Early deployments already run factories and warehouses at
production rates, and their adoption is accelerating. But
when their joint decisions cause harm, everyone involved has reason to blame
everyone else, the machine vendor, the algorithm provider, the factory
operator, the insurer, and the regulator, and no method can \emph{defensibly} divide the
responsibility between them. Existing methods read logs whose origin they
cannot verify and name a single culprit, misrepresenting outcomes that are
overdetermined, preempted, or caused by an omission. We present \audita{}, an
audit layer pairing a tamper-evident record of every inter-agent command with a
certified, graded causal-attribution engine. We prove its verdict cannot be
gamed and we establish the limit of
what an evidence-based auditor can certify. Empirically, it
reduces the standard baseline's responsibility error threefold;
on CulpaBench, our benchmark of accident-grounded structures, it recovers responsibility where
single-culprit baselines fail, and stays invariant under forgery. \audita{}
turns the question of who is to blame from an argument about logs into a
calculation over evidence.
\end{abstract}

% ===================== ARTICLE =====================
% audita-intro.tex — Introduction (NMI submission).

\section*{Introduction}
The coming operating model for physical automation is an AI brain
commanding fleets of embodied machines, and its pieces are already in place.
Language-model planners turn natural-language instructions into sequences of
robot actions\cite{ahn2022saycan,driess2023palme}, and vision-language-action
models put general-purpose control onto physical
platforms\cite{kim2024openvla,black2024pi0}. Foundation models are being built
that aim to command many bodies all at once\cite{nvidia2025groot}. On the software
side, orchestration frameworks connect specialised agents into pipelines that plan,
delegate, execute, and verify\cite{wu2023autogen,hong2023metagpt}. The interface between the two sides is now being standardised: the Model
Hardware Standard, opened in research preview in 2026, gives any agent one
shared specification for operating laboratory and manufacturing equipment
such as liquid handlers and robotic arms\cite{anthropic2026mhs}. As these
collectives enter fabrication cells, warehouses, and shared public spaces,
their joint decisions carry physical and economic consequence: a mistimed
handoff can injure a worker, and a hallucinated instruction can propagate
through three delegations before any actuator moves.

When such an adverse outcome occurs, a specific question follows, asked by the
operator deciding what to fix, the insurer pricing the loss, the regulator
demanding an account, and the vendor whose component stands accused:
\emph{which agents, through which commands, bear how much of the
responsibility, and which are demonstrably innocent?}

Today, that question has no defensible answer, because the only records that
exist were built for debugging, and those logs do not establish who wrote an
entry, in what order the entries were made, or whether any are missing. Every
one of them becomes contestable the moment liability is at stake. Regulation,
meanwhile, is already in place: the European Union's AI Act requires high-risk
systems to record events so that operations can be traced and
audited\cite{euaiact2024}, and collaborative-robot standards presume that
incidents can be reconstructed\cite{iso10218,isots15066}. The obligation is in
force. The mechanism that would discharge it does not exist.

Research attention has recently turned to the attribution half of the
problem. The Who\&When
benchmark formalised failure attribution for multi-agent systems and found the
best judge-style methods reach only $53.5\%$ agent-level and $14.2\%$
step-level accuracy\cite{zhang2025who}; a taxonomy showed that much failure
lies in inter-agent coordination rather than any single
model\cite{cemri2025mast}. Trained attributors inject and replay
faults\cite{zhang2025agentracer}, step-level counterfactual scoring converts
failures into validated repairs\cite{bonagiri2026causalflow}, and replay
engines validate an estimator against synthetic ground truth\cite{shah2026car}.
Concurrent work argues that agent systems cannot be accountable without an
auditability layer and maps its requirement dimensions\cite{nian2026auditable}
(a full survey is in the Supplementary Information). Beneath this progress sit
two structural absences.

First, \emph{an evidence layer}. Every method above reads a log whose
integrity it must assume, so its verdict is only as trustworthy as the log it
was computed from, and in any setting with liability at stake that log
\emph{will} be contested. Second, \emph{causal semantics that match how blame
behaves}: attribution targets a single culprit, but adverse outcomes are
routinely overdetermined (two commands, either sufficient), preempted (the
blamed command's effect never arrived), or caused by omission (a supervisor
who never issued the required halt), and responsibility is then a matter of
degree, not a pointer. A century of legal
doctrine\cite{wright1985,harthonore1985} and two decades of formal work on
actual causality\cite{halpernpearl2005,halpern2015,chocklerhalpern2004} treat
exactly these structures; the attribution literature for agent systems has yet
to absorb them.

We present \audita{} (Figure~\ref{fig:overview}), an audit layer that supplies
both and couples them, so that the causal analysis is computed only over
evidence that has survived verification.
Every inter-agent message is signed by its author, cites the messages it acts
on, and is sealed into an append-only, Merkle-committed
record\cite{bernstein2012,merkle1988,laurie2014}, which makes authorship,
order, and completeness cryptographically checkable. The adverse outcome
itself is not fixed in advance: the operator declares it as a predicate over
the recorded world state or over the work product, so one mechanism audits an
injury and a ruined batch alike.

Attribution then runs in two steps. A \emph{production gate} first checks whether the effect each command's author committed to
was actually realised and propagated into the state that made the outcome
true; commands whose effects never arrived are retired here, before any
counterfactual is computed. The survivors are then graded by counterfactual
dependence under the modified Halpern--Pearl definition of actual
causality\cite{halpern2015}. The grading covers every principal present in the
incident, and it returns a
responsibility degree $\rho\in[0,1]$\cite{chocklerhalpern2004} that represents
redundancy, preemption, and omission natively. Every reported cause is
re-executed under intervention\cite{kwon2023vllm}, so each claim arrives with
a receipt showing that removing that cause would have averted the outcome.

The result is reported in the four-part structure of negligence
doctrine\cite{wright1985}: which obligations were in force (\emph{duty}),
which of them the principal violated (\emph{breach}), whether the violation
made a difference to the outcome (\emph{causation}), and what the outcome cost
(\emph{harm}). Separating the four lets the verdict state what a single
culprit label cannot, for instance that a principal breached a duty but the
breach changed nothing. We intend the account as an \emph{input} to human and
legal judgment, not as a substitute for it.

Together, these components admit provable guarantees under three standard
assumptions (Methods): a re-executable stack, influence confined to the
recorded channel, and keys that are never stolen, though the insiders holding
them may be arbitrarily malicious. The guarantees state what such an
adversary cannot achieve. Causal \emph{involvement} can be manufactured
against a compliant principal; \emph{culpability} cannot: a finding requires
the accused's own certified conduct to breach a registered duty, so a
rule-following agent is never convicted (Theorem~\ref{thm:grounded}).
Shifting blame onto such an agent leaves a machine-checkable certificate
naming and grading the colluders (Theorem~\ref{thm:blameshift}), and the
protection survives attacks on the certificate itself
(Theorem~\ref{thm:hierarchy}). Destroying evidence yields flagged,
author-attributed uncertainty, never a clean acquittal
(Theorem~\ref{thm:exon}), and a completeness barrier fixes exactly what a
record-confined auditor can certify (Theorem~\ref{thm:barrier}).

We evaluate \audita{} in two settings: live
language-model multi-agent pipelines solving GSM8K\cite{cobbe2021gsm8k}, where
it cuts judge attribution error roughly threefold; and \emph{CulpaBench}, a
benchmark of physical incident structures grounded in public robot-accident
records, which we contribute as a community resource, and on which \audita{}
recovers planted responsibility exactly, refuses to blame anyone on genuine
accidents where judges always name someone, and holds its verdict invariant
under record forgery. \audita{} audits the \emph{event}, not the model: it
answers, for a concrete adverse outcome, who did what, whether it mattered, and
to what degree, with evidence that would survive a hostile audience. We argue
this layer is a precondition for autonomous collectives operating anywhere
accountability is demanded.

% audita-fig1.tex — Fig. 1: evidence layer (AI brain + fleet -> chained Merkle record
% with concrete episodes -> watching-party icons) + causal layer (investigation pipeline).
% NOTE: \llmchip is defined in audita-main.tex preamble.
\begin{figure}[!t]
\centering
\resizebox{\textwidth}{!}{%
\begin{tikzpicture}[font=\scriptsize,
  aud/.style={draw=inkA,fill=fillA,rounded corners=2pt,align=center,inner sep=2pt,font=\tiny,text width=19mm,minimum height=5.5mm,line width=0.5pt},
  watch/.style={inkA,dashed,line width=0.6pt},
  stage/.style={draw=inkP,fill=fillP,rounded corners=3pt,minimum width=27mm,minimum height=15mm,align=center,line width=0.7pt},
  cert/.style={draw=inkG,circle,fill=fillcert,minimum size=6mm,inner sep=0.5pt},
  susp/.style={draw=inkA,circle,fill=fillsusp,minimum size=6mm,inner sep=0.5pt},
  miss/.style={draw=inkG,circle,fill=fillmiss,minimum size=6mm,inner sep=0.5pt,dashed},
  big/.style={-{Stealth[length=3mm]},inkP,line width=1pt},
  blink/.style={-{Stealth[length=1.5mm]},inkP,line width=0.55pt}]

% ================= band backgrounds =================
\begin{scope}[on background layer]
  \fill[fillP!40,rounded corners=4pt] (-0.6,0.15) rectangle (17.6,4.95);
  \fill[fillT!40,rounded corners=4pt] (-0.6,-6.05) rectangle (17.6,-0.45);
\end{scope}

% ================= TOP BAND: evidence layer =================
\node[anchor=west,font=\bfseries\normalsize,text=inkP] at (-0.4,4.62) {Evidence layer: a public, verifiable command record};

% ---- machine fleet icons (left) ----
% robot arm
\begin{scope}[shift={(0.72,3.6)}]
  \draw[inkG,fill=fillG,line width=0.7pt] (-0.36,-0.4) rectangle (0.36,-0.32);
  \draw[inkG,fill=fillG,line width=0.7pt,rounded corners=0.8pt] (-0.13,-0.32) rectangle (0.13,-0.16);
  \draw[inkG,line width=1.7pt,line cap=round] (0,-0.14) -- (-0.17,0.2);
  \draw[inkG,line width=1.3pt,line cap=round] (-0.17,0.2) -- (0.17,0.36);
  \fill[inkG] (0,-0.14) circle (0.062);
  \fill[white,draw=inkG,line width=0.4pt] (-0.17,0.2) circle (0.05);
  \fill[inkG] (0.17,0.36) circle (0.04);
  \draw[inkG,line width=0.7pt] (0.17,0.36) -- (0.29,0.4);
  \draw[inkG,line width=0.6pt] (0.29,0.46) -- (0.38,0.48) (0.29,0.34) -- (0.38,0.32) (0.29,0.46) -- (0.29,0.34);
\end{scope}
% AGV cart with package
\begin{scope}[shift={(0.72,2.42)}]
  \draw[inkG,fill=fillG,rounded corners=1.6pt,line width=0.7pt] (-0.42,-0.12) rectangle (0.42,0.1);
  \fill[inkG] (-0.26,-0.14) circle (0.055) (0.26,-0.14) circle (0.055);
  \draw[inkG,fill=white,line width=0.5pt] (-0.34,0.1) rectangle (-0.22,0.19);
  \fill[inkT] (0.4,0.0) circle (0.025);
  \draw[inkA,fill=fillA,line width=0.6pt] (-0.1,0.1) rectangle (0.34,0.36);
  \draw[inkA,line width=0.45pt] (0.12,0.1) -- (0.12,0.36);
\end{scope}
% humanoid robot
\begin{scope}[shift={(0.72,1.32)}]
  \draw[inkG,fill=fillG,rounded corners=1.6pt,line width=0.7pt] (-0.11,0.22) rectangle (0.11,0.44);
  \draw[inkG,fill=white,line width=0.45pt,rounded corners=1pt] (-0.07,0.3) rectangle (0.07,0.37);
  \draw[inkG,fill=fillG,line width=0.7pt,rounded corners=1.6pt] (-0.16,-0.1) -- (-0.13,0.18) -- (0.13,0.18) -- (0.16,-0.1) -- cycle;
  \draw[inkG,line width=1.0pt,line cap=round] (-0.15,0.13) -- (-0.25,-0.08);
  \draw[inkG,line width=1.0pt,line cap=round] (0.15,0.13) -- (0.25,-0.08);
  \draw[inkG,line width=1.2pt,line cap=round] (-0.07,-0.1) -- (-0.07,-0.38);
  \draw[inkG,line width=1.2pt,line cap=round] (0.07,-0.1) -- (0.07,-0.38);
\end{scope}
\node[font=\tiny,text=inkG] at (0.72,0.6) {machine fleet};

% ---- fleet <-> brain arrows ----
\draw[-{Stealth[length=1.8mm]},inkT,line width=0.8pt] (2.38,2.86) -- (1.32,2.86);
\node[font=\tiny,text=inkT] at (1.8,3.14) {commands};
\draw[-{Stealth[length=1.8mm]},inkG,line width=0.8pt] (1.32,2.18) -- (2.38,2.18);
\node[font=\tiny,text=inkG] at (1.8,1.9) {sensor data};

% ---- AI brain: one controller, many LLMs (hub and spoke) ----
\node[ellipse,draw=inkP,fill=white,line width=1pt,minimum width=39mm,minimum height=30mm] (brain) at (4.3,2.55) {};
\llmchip{4.3}{3.42}{planner}
\llmchip{3.42}{2.5}{solver}
\llmchip{5.18}{2.5}{solver}
\llmchip{4.3}{1.6}{aggregator}
\draw[blink] (4.08,3.12) -- (3.55,2.85);
\draw[blink] (4.52,3.12) -- (5.05,2.85);
\draw[blink] (3.55,2.16) -- (4.08,1.9);
\draw[blink] (5.05,2.16) -- (4.52,1.9);
\begin{scope}[shift={(4.3,2.52)},scale=0.62]
  \draw[inkP,fill=fillP!75,line width=0.6pt]
    (0,0.3) .. controls (-0.2,0.35) and (-0.33,0.22) .. (-0.32,0.07)
    .. controls (-0.42,-0.02) and (-0.34,-0.2) .. (-0.2,-0.21)
    .. controls (-0.16,-0.3) and (-0.02,-0.32) .. (0,-0.24)
    .. controls (0.02,-0.32) and (0.16,-0.3) .. (0.2,-0.21)
    .. controls (0.34,-0.2) and (0.42,-0.02) .. (0.32,0.07)
    .. controls (0.33,0.22) and (0.2,0.35) .. (0,0.3) -- cycle;
  \draw[inkP,line width=0.45pt] (0,0.28) -- (0,-0.22);
  \draw[inkP,line width=0.4pt] (-0.19,0.1) .. controls (-0.09,0.15) .. (-0.07,0.02);
  \draw[inkP,line width=0.4pt] (0.19,0.08) .. controls (0.1,0.13) .. (0.08,0.0);
\end{scope}
\node[font=\tiny\bfseries,text=inkP] at (4.05,0.6) {AI brain: one controller, many LLMs};

% ---- signed commands into the ledger ----
\draw[-{Stealth[length=2.2mm]},inkT,line width=1.1pt] (6.35,2.5) -- (7.13,2.5);
\node[font=\tiny,text=inkT,align=center] at (6.68,3.16) {signed\\ commands};

% ---- blockchain record: chained epochs; newest block shows a concrete episode ----
% block t (newest): concrete commands from a warehouse episode
\draw[inkT,fill=white,line width=0.7pt] (7.2,1.65) rectangle (9.8,3.35);
\draw[inkT,fill=fillT,line width=0.7pt] (7.2,3.02) rectangle (9.8,3.35);
\node[font=\fontsize{4.8}{5}\selectfont,text=inkT] at (8.5,3.185) {prev \# $\cdot$ root $\sigma$};
\node[font=\fontsize{3.85}{4.3}\selectfont,text=inkG,anchor=west] at (7.31,2.76) {m\textsubscript{45} planner\,\scalebox{0.75}{$\to$}\,agv\textsubscript{2} move(aisle 2)};
\node[font=\fontsize{3.85}{4.3}\selectfont,text=inkG,anchor=west] at (7.31,2.46) {m\textsubscript{46} agv\textsubscript{2}\,\scalebox{0.75}{$\to$}\,arm\textsubscript{1} pick(crate 7)};
\node[font=\fontsize{3.85}{4.3}\selectfont,text=inkG,anchor=west] at (7.31,2.16) {m\textsubscript{47} checker\,\scalebox{0.75}{$\to$}\,agv\textsubscript{2} halt};
\draw[inkG!55,line width=0.5pt] (7.5,1.9) -- (9.5,1.9);
\node[font=\tiny,text=inkG] at (8.85,1.26) {epoch $t$};
% blocks t-1, t-2 (sealed history: earlier episodes of the same shift)
\foreach \bx/\ep/\ra/\rb/\rc in {
  11.1/{$t{-}1$}/{m\textsubscript{42} planner\,\scalebox{0.75}{$\to$}\,agv\textsubscript{1} move(dock)}/{m\textsubscript{43} agv\textsubscript{1}\,\scalebox{0.75}{$\to$}\,arm\textsubscript{2} pick(bin 4)}/{m\textsubscript{44} verifier\,\scalebox{0.75}{$\to$}\,planner report ok},
  13.55/{$t{-}2$}/{m\textsubscript{39} planner\,\scalebox{0.75}{$\to$}\,arm\textsubscript{2} align}/{m\textsubscript{40} arm\textsubscript{2}\,\scalebox{0.75}{$\to$}\,planner done}/{m\textsubscript{41} checker\,\scalebox{0.75}{$\to$}\,arm\textsubscript{2} resume}}{
  \draw[inkT,fill=white,line width=0.7pt] (\bx-1.15,1.65) rectangle (\bx+1.15,3.35);
  \draw[inkT,fill=fillT,line width=0.7pt] (\bx-1.15,3.02) rectangle (\bx+1.15,3.35);
  \node[font=\fontsize{4.8}{5}\selectfont,text=inkT] at (\bx,3.185) {prev \# $\cdot$ root $\sigma$};
  \node[font=\fontsize{3.85}{4.3}\selectfont,text=inkG,anchor=west] at (\bx-1.09,2.76) {\ra};
  \node[font=\fontsize{3.85}{4.3}\selectfont,text=inkG,anchor=west] at (\bx-1.09,2.46) {\rb};
  \node[font=\fontsize{3.85}{4.3}\selectfont,text=inkG,anchor=west] at (\bx-1.09,2.16) {\rc};
  \draw[inkG!55,line width=0.5pt] (\bx-0.9,1.9) -- (\bx+0.9,1.9);
  \node[font=\tiny,text=inkG] at (\bx,1.26) {\ep};
}
\draw[-{Stealth[length=1.6mm]},inkT,line width=0.7pt] (9.81,3.185) -- (9.94,3.185);
\draw[-{Stealth[length=1.6mm]},inkT,line width=0.7pt] (12.26,3.185) -- (12.39,3.185);
\node[font=\tiny,text=inkT] at (10.95,3.62) {each header cites the previous hash};
\node[font=\tiny,text=inkG] at (10.4,0.74) {append-only, Merkle-sealed command record};
\node[font=\fontsize{5}{5.5}\selectfont,text=inkG] at (10.4,0.44) {every message $m=(a,\pi,C,\varphi,\sigma)$ signed by its author (Definition~\ref{def:graph})};

% ---- the watching parties: icons on a half-circle around the record ----
% machine vendor: factory
\begin{scope}[shift={(14.55,4.35)},scale=1.9]
  \draw[inkA,line width=0.55pt,fill=fillA!60] (-0.1,-0.065) -- (-0.1,0.065) -- (-0.04,0.015) -- (-0.04,0.065) -- (0.02,0.015) -- (0.02,0.065) -- (0.1,0.065) -- (0.1,-0.065) -- cycle;
  \draw[inkA,line width=0.55pt] (0.05,0.065) -- (0.05,0.135) (0.09,0.065) -- (0.09,0.135) (0.05,0.135) -- (0.09,0.135);
\end{scope}
\node[font=\tiny,text=inkA] at (14.55,3.95) {machine vendor};
% algorithm provider: chip
\begin{scope}[shift={(16.1,3.5)},scale=1.9]
  \draw[inkA,line width=0.55pt,fill=fillA!60] (-0.075,-0.075) rectangle (0.075,0.075);
  \foreach \p in {-0.045,0,0.045}{\draw[inkA,line width=0.45pt] (\p,0.075)--(\p,0.115);\draw[inkA,line width=0.45pt] (\p,-0.075)--(\p,-0.115);}
\end{scope}
\node[font=\tiny,text=inkA] at (16.1,3.1) {algorithm provider};
% factory operator: hard-hat worker
\begin{scope}[shift={(16.85,2.56)},scale=1.9]
  \fill[inkA] (0,0.02) circle (0.05);
  \draw[inkA,line width=0.8pt] (-0.068,0.037) arc (155:25:0.075);
  \draw[inkA,line width=0.7pt] (-0.08,-0.13) arc (180:0:0.08);
\end{scope}
\node[font=\tiny,text=inkA] at (16.85,2.12) {factory operator};
% insurer: shield
\begin{scope}[shift={(16.1,1.5)},scale=1.9]
  \draw[inkA,line width=0.55pt,fill=fillA!60] (-0.09,0.09) -- (0.09,0.09) -- (0.09,0.01) .. controls (0.09,-0.07) and (0.04,-0.11) .. (0,-0.135) .. controls (-0.04,-0.11) and (-0.09,-0.07) .. (-0.09,0.01) -- cycle;
  \draw[inkA,line width=0.45pt] (0,0.05) -- (0,-0.06);
\end{scope}
\node[font=\tiny,text=inkA] at (16.1,1.1) {insurer};
% regulator: balance scale
\begin{scope}[shift={(14.55,1.05)},scale=1.9]
  \draw[inkA,line width=0.55pt] (0,-0.11)--(0,0.09) (-0.055,-0.11)--(0.055,-0.11) (-0.1,0.05)--(0.1,0.05);
  \draw[inkA,line width=0.45pt] (-0.1,0.05)--(-0.145,-0.02) (-0.1,0.05)--(-0.055,-0.02);
  \draw[inkA,line width=0.45pt] (-0.145,-0.02) arc (180:360:0.045);
  \draw[inkA,line width=0.45pt] (0.1,0.05)--(0.055,-0.02) (0.1,0.05)--(0.145,-0.02);
  \draw[inkA,line width=0.45pt] (0.055,-0.02) arc (180:360:0.045);
\end{scope}
\node[font=\tiny,text=inkA] at (14.55,0.65) {regulator};
% watch lines to the record
\draw[watch] (14.3,3.88) -- (14.05,3.44);
\draw[watch] (15.5,3.0) -- (14.78,2.85);
\draw[watch] (16.5,2.5) -- (14.78,2.5);
\draw[watch] (15.75,1.55) -- (14.78,2.1);
\draw[watch] (14.55,1.35) -- (14.52,1.62);
\node[font=\tiny\itshape,text=inkA,align=center] at (16.45,0.52) {any party can verify;\\ none can rewrite};

% ================= BOTTOM BAND: causal layer =================
\node[anchor=east,font=\bfseries\normalsize,text=inkP] at (17.4,-0.78) {Causal layer: one investigation, end to end};

\node[draw=inkA,fill=fillA,rounded corners=3pt,minimum width=18mm,minimum height=15mm,align=center] (inc)
  at (0.5,-3.1) {\textbf{Adverse}\\\textbf{outcome}\\\footnotesize declared\\predicate\\$Y{=}1$};
\node[stage,right=6mm of inc,minimum width=30mm,minimum height=22mm] (slice) {};
\node[font=\scriptsize\bfseries,text=inkP] at ([yshift=5.5mm]slice.north) {1. Slice + integrity partition};
\begin{scope}[shift={(slice.center)},scale=0.9]
  \node[cert] (n1) at (-1.1,0.4) {};
  \node[cert] (n2) at (-0.4,0.7) {};
  \node[susp] (n3) at (-0.4,-0.3) {};
  \node[cert] (n4) at (0.4,0.4) {};
  \node[miss] (n5) at (0.4,-0.6) {};
  \node[cert] (n6) at (1.1,0.1) {};
  \draw[-{Stealth[length=1mm]},inkG] (n1)--(n2); \draw[-{Stealth[length=1mm]},inkG] (n2)--(n4);
  \draw[-{Stealth[length=1mm]},inkG] (n3)--(n4); \draw[-{Stealth[length=1mm]},inkG] (n4)--(n6);
  \draw[-{Stealth[length=1mm]},inkG,dashed] (n5)--(n6);
\end{scope}
\node[font=\tiny,text=inkG] at ([yshift=-6mm]slice.south)
  {\tikz\node[cert,scale=0.6]{};\,certified \tikz\node[susp,scale=0.6]{};\,suspect \tikz\node[miss,scale=0.6]{};\,missing};
\node[stage,right=6mm of slice] (gate) {2. Production\\gate\\\footnotesize keep intact\\certified paths};
\node[stage,right=6mm of gate,minimum width=29mm] (eng) {3. Two-layer engine\\\footnotesize propose $(X^\ast,W^\ast)$\\certify by replay};
\node[draw=inkP,fill=white,rounded corners=3pt,minimum width=49mm,minimum height=30mm,line width=1pt,align=left,right=6mm of eng,yshift=0mm] (verd) {};
\node[font=\scriptsize\bfseries,text=inkP] at ([yshift=12mm]verd.center) {4. Graded verdict (per principal)};
\begin{scope}[shift={(verd.center)},yshift=-1mm]
  \node[font=\tiny,anchor=east] at (-1.0,0.7) {planner (m\textsubscript{45})};
  \draw[inkG] (-0.9,0.6) rectangle (1.3,0.85); \fill[inkP!55] (-0.9,0.6) rectangle (1.15,0.85);
  \draw[inkA,line width=0.8pt] (1.0,0.725)--(1.3,0.725);
  \node[font=\tiny,anchor=east] at (-1.0,0.25) {agv\textsubscript{2} (m\textsubscript{46})};
  \draw[inkG] (-0.9,0.15) rectangle (1.3,0.4); \fill[inkP!55] (-0.9,0.15) rectangle (1.15,0.4);
  \draw[inkA,line width=0.8pt] (1.0,0.275)--(1.3,0.275);
  \node[font=\tiny,text=inkG,anchor=west] at (1.36,0.275) {conduit};
  \node[font=\tiny,anchor=east] at (-1.0,-0.2) {verifier};
  \draw[inkG] (-0.9,-0.3) rectangle (1.3,-0.05); \fill[inkT!55] (-0.9,-0.3) rectangle (0.15,-0.05);
  \node[font=\tiny,text=inkT,anchor=west] at (1.36,-0.175) {duty!};
  \node[font=\tiny,anchor=east] at (-1.0,-0.65) {checker (m\textsubscript{47})};
  \draw[inkG] (-0.9,-0.75) rectangle (1.3,-0.5);
  \node[font=\tiny,text=inkG,anchor=west] at (-0.82,-0.625) {$\rho{=}0$, halt issued};
  \node[font=\tiny,text=inkG] at (0.5,-1.05) {bars: $\rho$ + interval; teal: omission};
\end{scope}
\draw[big] (inc)--(slice);
\draw[big] (slice)--(gate);
\draw[big] (gate)--(eng);
\draw[big] (eng)--(verd);
\draw[-{Stealth[length=2mm]},inkA,dashed] (verd.south) .. controls +(0,-8mm) and +(0,-8mm) .. (slice.south)
  node[midway,below,font=\scriptsize\itshape,text=inkA]{every bar links back to its certified evidence sub-graph};

% ---- coupling ----
\draw[inkP,dashed,line width=0.8pt] (7.35,1.5) -- (14.45,1.5);
\draw[-{Stealth[length=2.2mm]},inkP,dashed,line width=0.8pt]
  (7.35,1.5) .. controls +(-0.35,-1.75) and +(0,1.45) .. ([yshift=8.5mm]slice.north);
\node[font=\tiny,text=inkP,anchor=west] at (8.4,-0.14) {an incident triggers a backward slice across all sealed epochs};

\end{tikzpicture}}
\caption{\textbf{\audita{}'s two layers.}
\emph{Top, evidence layer.} An AI brain, one controller running many language
models (a planner, parallel solvers, an aggregator), commands a machine fleet
and receives its sensor data. Every inter-agent command is a signed message
$m=(a,\pi,C,\varphi,\sigma)$, author, payload, citations, committed effect,
signature, appended with its delivery receipt to a public record in the
blockchain sense (Definition~\ref{def:graph}). An \emph{epoch} is a fixed
window of this message stream: at its close the recorder signs the Merkle root
over the window's messages and chains it to the previous block's hash, so the
record is append-only and tamper-evident. The blocks show illustrative
episodes of one shift (agv, an automated guided vehicle; arm, a manipulator
arm): in the newest block the planner routes an AGV, the AGV hands a crate
to an arm, and the checker issues a halt; the sealed blocks hold earlier,
already-committed windows. The machine vendor, the algorithm provider, the
factory operator, the insurer, and the regulator all hold the same root: any
party can verify the record, and none can rewrite it.
\emph{Bottom, causal layer.} A declared adverse outcome $Y{=}1$ triggers a
backward slice of the entire sealed record, reaching across epochs to every
certified message with a citation path into the outcome; each message in the
slice is coloured by the verification partition (certified, suspect,
missing). The production gate
keeps only messages with an intact certified path into the outcome. Surviving
candidates enter the two-layer engine (Extended Data
Fig.~\ref{fig:engines}), which proposes minimal cause--witness sets and
certifies them by counterfactual replay. The output is a graded verdict per
principal: duty and breach findings, a responsibility interval $\rhointerval$
with verification receipts, and conduit or exoneration flags, with every bar
linked to the certified sub-graph that justifies it. Suspect and missing
evidence widens intervals but, by Proposition~\ref{prop:recordedit}, can never
raise the involvement bound of a fully certified principal; and by
Theorem~\ref{thm:grounded}, culpability additionally requires that principal's
own certified breach.}
\label{fig:overview}
\end{figure}

% audita-methods.tex

\section*{Methods}

\subsection*{Setting, threat model, and assumptions}
We consider a collective of \emph{principals}
$P=\{p_1,\dots,p_n\}$: language-model planners, tool agents, robot
controllers, and human supervisors, interacting through an asynchronous
message bus and, for embodied members, through actuators in a shared
environment. Principals hold signing keys; a designated (replicated)
\emph{recorder} appends messages to the audit record. The adversary may
control any subset of principals and the network in the Dolev--Yao
sense\cite{dolev1983}: it can read, delay, reorder, replay, drop, and inject
traffic, may equivocate (present different histories to different
parties\cite{lamport1982}), and may attack the record itself by forging
entries or deleting them after the fact. An insider who signs malicious
content with a \emph{valid} key is not an attack on the record: their
messages certify normally and are attributed to them, which is the desired
outcome, and their misconduct is surfaced by the verdict's breach analysis
rather than by integrity machinery.

The guarantees below rest on three explicit assumptions, each standard in
its home literature.
\begin{description}
\item[A1 (Replayable stack).] The audited deployment runs on components
that can be re-executed exactly: pinned model weights, deterministic
serving, and seeded environments. Under A1 every replay receipt in this
paper is exact and bitwise checkable. On hosted or non-deterministic
components the same checks degrade to statistical corroboration with
reported action-match.
\item[A2 (Recorded channel).] Inter-principal influence traverses the
recorded bus. Under A2 exonerations are complete. Influence outside the bus
(shared environment state, out-of-band instruction) is invisible to any
record-level auditor, and a dedicated experiment in Results measures the
attribution mass at stake when A2 is violated deliberately (Costs and limits in Results).
\item[A3 (Sound key custody).] Signing keys are never stolen, so certified
authorship equals conduct. A3 restricts nothing about behaviour: key
holders, including insiders, may be arbitrarily malicious. Key management
itself is inherited from the transparency-log tradition\cite{laurie2014}
rather than re-solved here.
\end{description}
The asymmetry in A3 is deliberate and load-bearing: the adversary keeps
every capability that matters for accountability, including acting
maliciously during the incident under a valid identity. The guarantees are
statements about what such an adversary still cannot achieve.

\paragraph{The accountable command graph.} The unit of evidence is the
message.

\begin{definition}[Accountable message and command graph]
\label{def:graph}
A message is a tuple $m=(a,\pi,C,\varphi,\sigma)$: author $a\in P$, payload
$\pi$, citation set $C$ (hashes of the messages $m$ acts upon), effect
predicate $\varphi$ (the world- or work-state change the author commits to,
enabling later verification), and signature $\sigma$ over
$(a,\pi,C,\varphi)$ with $a$'s key\cite{bernstein2012}. The record $R$ is the
multiset of received messages together with delivery receipts; epochs of $R$
are sealed by a Merkle commitment\cite{merkle1988} published append-only in
the style of transparency logs\cite{laurie2014}. The \emph{command graph}
$G(R)$ has messages as nodes and citation edges $m'\!\to\!m$ for each
$c\in C(m)$ resolving to $m'$.
\end{definition}

Citation is an admission: by citing $m'$, the author of $m$ attests that $m$ was
issued \emph{because of} $m'$, making the edge an authored causal claim rather
than an inferred correlation. Verification partitions the record into
$\Gcert$ (signature valid, citations resolve, receipts consistent, Merkle
path intact), a \emph{suspect} set (any check fails), and a \emph{missing}
set (cited or receipted but absent). All attribution downstream operates on
$\Gcert$; the suspect and missing sets enter only as explicit uncertainty
(Proposition~\ref{prop:recordedit}). Extended Data Figure~\ref{fig:anatomy} shows the anatomy and
the partition.

\paragraph{Adverse outcomes and incident slices.} \audita{} is
outcome-agnostic: the trigger is a declared predicate, not a category of
accident.

\begin{definition}[Adverse outcome and incident slice]
\label{def:outcome}
An \emph{adverse outcome} is a predicate $Y$ over the recorded world- and
work-state that evaluates true at some time $t^{\ast}$: for example, a safety
predicate (minimum human--robot separation below the collaborative-operation
threshold\cite{isots15066}) or a quality predicate (a work product failing its
declared acceptance test). The \emph{incident slice} $S(Y)$ is the causal
past of $Y$ in $\Gcert$: all certified messages with a citation path into the
actuation events referenced by $Y$, closed under the duties active in the
window (below).
\end{definition}

Because $Y$ is arbitrary, the same machinery audits an injury, a near-miss,
and a ruined batch; the facility case study instantiates both a safety and a
quality $Y$ (Results).

\subsection*{Certified causal attribution: gate, grade, certify}
We build a structural causal
model\cite{pearl2009,halpernpearl2005} $M_{S}$ whose binary variables are the
certified messages (present/absent), duty-indexed \emph{absence} variables for
required-but-unissued interventions (an omission such as a never-sent halt is
a first-class variable, not a gap), and the mechanism equations induced by the
citation structure and the recorded actuation semantics. Attribution is a
two-part test.

\emph{Gate (factual production).} A candidate set of messages $X$ enters
consideration only if each member has an intact production path to $Y$ in
$\Gcert$: its committed effect $\varphi$ was realised and propagated into the state that made $Y$ true. The gate retires
\emph{preempted} candidates (commands whose effect never reached the outcome)
before any counterfactual is computed, in the spirit of the law's insistence
on causation-in-fact\cite{wright1985,harthonore1985}.

\emph{Grade (counterfactual dependence).} Surviving candidates are graded
under the modified Halpern--Pearl definition of actual
causality\cite{halpern2015,halpern2016book}: $X^{\ast}$ is an actual cause of
$Y$ if there is a witness set $W^{\ast}$, frozen at its actual values, such
that setting $X^{\ast}$ to its counterfactual values falsifies $Y$, with
$X^{\ast}$ minimal. Following the responsibility calculus of Chockler and
Halpern\cite{chocklerhalpern2004}, each minimal cause--witness pair receives
the grade
\begin{equation}
\label{eq:rho}
\rho(X^{\ast},W^{\ast}) \;=\; \frac{1}{\lvert X^{\ast}\rvert + \lvert
W^{\ast}\rvert},
\end{equation}
recovering the classical degree of responsibility $1/(k{+}1)$ through the
size correspondence between the modified and original
definitions\cite{halpern2015}. A principal's responsibility is the strongest grade any of
their messages participates in:
\begin{equation}
\label{eq:rhop}
\rho_p \;=\; \max\Bigl\{\, \rho(X^{\ast},W^{\ast}) \;:\; X^{\ast}\ \text{a
minimal certified cause of } Y \text{ containing a message authored by } p
\,\Bigr\},
\end{equation}
and $\rho_p=0$ with an explicit \emph{exoneration note} if no such cause
exists. These degrees are not a normalised allocation and need not sum to one
across principals: each is a Chockler--Halpern degree of responsibility for
the outcome, not a share of it. Redundancy (two independently sufficient commands) yields
$\rho=\tfrac12$ each rather than an arbitrary single culprit; omission is
graded through its absence variable; a principal whose messages merely relay
upstream content faithfully is additionally flagged as a \emph{conduit}, so
that the verdict can distinguish originating from transmitting
responsibility. Because Eq.~\eqref{eq:rhop} takes a maximum over causes, the
grade is invariant to how a principal chunks its own output across messages;
a canonical contraction of same-author conjunctive groups makes this
invariance explicit and is proven in Supplementary Note~6.

\emph{Certify (counterfactual replay).} Grades proposed on the structural
model are not reported until re-executed. For each reported cause, the replay
engine intervenes on $X^{\ast}$ (holding $W^{\ast}$ at actuals), re-runs the
slice, and checks that the certified risk of $Y$ falls by at least a declared
margin $\delta$. On a \emph{pinned} stack (open-weight models, seeded
simulation, versioned tools, and the \emph{serial replay protocol}, under
which replays execute one at a time on an otherwise idle
server\cite{kwon2023vllm}) replay is exact, in the lineage of record-and-replay
systems\cite{ocallahan2017}. Serial execution is part of the pinned
definition, not an optimisation: concurrent batching perturbs floating-point
reduction order and voids bitwise exactness even in batch-invariant serving
modes, so a pinned receipt certifies the protocol along with the seeds. On a \emph{hosted} stack (closed models,
non-deterministic serving) the same check is \emph{statistically
corroborated}: repeated replays yield a confidence interval on the risk
reduction, reported together with an \emph{action-match} score measuring how
faithfully the replayed trajectory tracks the recorded one. Every reported
cause therefore carries a verification receipt, exact or statistical, and a
cause that fails its replay is not reported (Proposition~\ref{prop:sound}).
Deciding actual causation is NP-hard in general\cite{eiter2002}; \audita{}
computes exhaustively over incident slices and we \emph{measure} the
practical boundary (slice width at which exact computation exceeds an
interactive budget) rather than assert scalability. Extended Data Fig.~\ref{fig:engines} shows the propose--certify
pipeline.

\begin{proposition}[Soundness by certification]
\label{prop:sound}
Every cause reported by \audita{} is replay-verified: intervening on
$X^{\ast}$ with $W^{\ast}$ frozen reduces the certified risk of $Y$ by at
least $\delta$: exactly on a pinned stack, and at confidence $1-\alpha$ with
reported action-match on a hosted stack. Consequently certified false
positives are excluded on pinned stacks and bounded by $\alpha$ on hosted
ones; the engine's failure mode is abstention (a missed cause), not a false
accusation. Proof in Supplementary Note~2.
\end{proposition}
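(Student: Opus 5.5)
The argument is essentially by construction: the reporting rule is a filter, so I would make that explicit and then read off the two consequences. First I would fix the reporting pipeline as a function. The structural model $M_S$ proposes candidate pairs $(X^\ast,W^\ast)$, and a pair is emitted only if its replay check passes. So the set of reported causes is contained in the set of pairs whose replay receipt passed. The first claim, that every reported cause is replay-verified, then follows from this containment alone; no property of $M_S$ is needed. That is the point: soundness must not depend on whether the structural model is correct.

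Second, I would split by stack. On a pinned stack, A1 together with the serial replay protocol makes replay a deterministic function of the recorded inputs, the seeds, and the intervention. The receipt therefore reports the true counterfactual risk $\Pr[Y \mid \mathrm{do}(X^\ast\!=\!x'),\,W^\ast\!=\!w]$, which is $0$ or $1$ for a deterministic slice, against the actual risk. The reduction $\ge\delta$ holds exactly, and any party can recheck it bitwise. Define a certified false positive as a reported pair whose true intervention reduction is below $\delta$. On a pinned stack its probability is zero, because the receipt is the truth.

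On a hosted stack I would model the $N$ repeated replays as i.i.d. draws of the intervened outcome, conditional on the action-match tolerance. I would then use a one-sided test of $H_0$: reduction $<\delta$, e.g. a Clopper--Pearson or Hoeffding lower confidence bound at level $1-\alpha$. A pair is reported only if that bound exceeds $\delta$. Hence, per reported cause, the probability of reporting under $H_0$ is at most $\alpha$. If simultaneity over several causes is wanted, a union bound or Bonferroni correction at $\alpha/k$ gives the family-wise statement.

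Third, the abstention clause. A true cause can fail to be reported for two reasons: the gate or the exhaustive search did not propose it, or its test failed. Either failure removes it from the output and never adds a cause. So errors of the engine are one-sided: misses, not accusations.

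The main obstacle is the hosted case. The i.i.d. assumption is doing real work there, because replays of a non-deterministic service may not be samples from the same distribution that produced the recorded incident. I would state the guarantee conditionally: it is relative to the replay distribution, and the action-match score serves as the reported diagnostic of that gap. I would not claim $\alpha$-control with respect to the original run.
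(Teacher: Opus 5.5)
Your outline follows the paper's proof closely. The certifier is a filter on Engine~A's proposals. A1 plus the serial protocol makes the pinned check exact. A concentration bound handles the hosted stack, action-match is a drift diagnostic outside the guarantee, and abstention is the only failure mode. The gap is in the hosted case. As written, you replay only the \emph{intervened} outcome, so the reduction you test is effectively $1-q_1$, with the recorded occurrence of $Y$ standing in for the factual risk. On a hosted stack the factual risk $q_0$ under the replay distribution need not be $1$. Once you state the guarantee relative to the replay distribution, as you rightly do, the baseline must also be a replay-distribution quantity. Otherwise the test controls the wrong quantity. Take $q_0=q_1=0.1$: the intervention is inert, yet $1-q_1=0.9$ clears any $\delta$ comfortably below $0.9$ with high probability. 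The inert pair is then reported, which is exactly the false accusation the proposition bounds by $\alpha$.

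The repair is the paper's argument:
\begin{enumerate}
\item Replay the factual run $n$ times as well, and form empirical frequencies $\hat q_0,\hat q_1$.
\item Apply Hoeffding to each arm at failure probability $\alpha/2$, i.e.\ $\varepsilon(n,\alpha)=\sqrt{\ln(4/\alpha)/(2n)}$, and take a union bound.
\item Report only if $\hat q_0-\hat q_1\ge\delta+2\varepsilon$.
\end{enumerate}
On the joint good event, $q_0-q_1\ge(\hat q_0-\hat q_1)-2\varepsilon\ge\delta$. So a pair whose true reduction is below $\delta$ is reported with probability at most $\alpha$. A single Clopper--Pearson interval cannot give this; you need a bound on a difference of two proportions.

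In the pinned case, comparing against the recorded outcome is legitimate, but only because the factual replay reproduces the recorded trajectory exactly. The paper makes that an explicit check whose mismatch aborts certification. Your argument should include it too, since it is what licenses the baseline of $1$. The paper also does not condition the draws on action-match. It reports the score alongside, so the $\alpha$ bound is over unfiltered replays.
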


\begin{proposition}[Preemption retirement]
\label{prop:preempt}
A message with no intact production path to $Y$ in $\Gcert$ receives
$\rho=0$ and an exoneration note, regardless of its content. Proof in
Supplementary Note~2.
\end{proposition}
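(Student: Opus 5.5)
The argument is a direct unfolding of the pipeline definitions: the production gate is applied \emph{before} any candidate set reaches the grading step, and $\rho_p$ in Eq.~\eqref{eq:rhop} is a maximum over \emph{certified} causes only. Fix a message $m$ with no intact production path to $Y$ in $\Gcert$. I will show that $m$ belongs to no minimal certified cause $X^{\ast}$ that the engine grades. The conclusion then follows from the convention stated after Eq.~\eqref{eq:rhop}: the set over which the maximum is taken, restricted to causes containing $m$, is empty, so $m$ contributes grade $0$ and is issued an exoneration note.

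\emph{Step 1: the gate filters candidate sets.} By the definition of the gate, a candidate set $X$ enters consideration only if every member has an intact production path to $Y$ in $\Gcert$. Any $X \ni m$ therefore fails the gate, and no such set is passed to the Halpern--Pearl grading. This holds whether $m$ is certified but preempted (its committed effect $\varphi$ never propagated into the state making $Y$ true) or not in $\Gcert$ at all. In the second case $m$ is not even a variable of $M_S$, since $M_S$ is built from certified messages and absence variables only.

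\emph{Step 2: no later stage can reintroduce $m$.} The certify stage (Proposition~\ref{prop:sound}) only removes causes that fail replay and never adds new ones. Hence the reported causes are a subset of the graded ones, which are a subset of the gated ones, and none of these contain $m$. Consequently $\rho(X^{\ast},W^{\ast})$ is never computed for any $X^{\ast}\ni m$. The message-level grade of $m$ is $0$, and the exoneration note is emitted by construction. Payload content plays no role anywhere in this chain, because the gate tests only path existence and effect realisation; this yields the ``regardless of its content'' clause.

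\emph{Step 3, the main subtlety: message-level versus principal-level.} The statement concerns the message, not its author. The author $p$ may still receive $\rho_p>0$ through some other message, so the claim should be read as ``$m$ contributes nothing to $\rho_p$.'' A second point needs care. The witness set $W^{\ast}$ might contain $m$ frozen at its actual value, which would be a form of participation that is not in $X^{\ast}$. Eq.~\eqref{eq:rhop} counts only causes \emph{containing} a message by $p$, so witness membership confers no grade on $m$, and the proposition is unaffected.
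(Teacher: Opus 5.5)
The core of your argument matches the paper's proof: the gate fixes the candidate universe, a gated-out $m$ lies in no minimal cause $X^{\ast}$, the maximum in Eq.~\eqref{eq:rhop} therefore ranges over no set containing $m$, and the empty-maximum convention yields $0$ with an exoneration note. Two small remarks. First, in the paper's encoding gated-out messages do not enter $\mathcal{M}_S$ at all (Note~\ref{snote:model}), so $m$ cannot appear in a witness set $W^{\ast}$ and that concern does not arise. Second, the paper obtains the exoneration note from the empty maximum at the principal level, when the author has no gated message at all. At message level the note is a labelling convention, not something your Step~2 derives.

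The gap is the missing-set case, which the paper's proof handles explicitly and yours omits. Reported responsibility is not computed on a single model over $\Gcert$. It is the interval $\rhointerval$ over admissible completions $\mathcal{M}_c$. Suppose $m$'s path to $Y$ is broken only because a link on it, or $m$ itself, was deleted and now sits in the missing set. Then every completion that assigns that item ``present'' restores the path, $m$ passes the gate in $\mathcal{M}_c$, and $m$ can be graded positively there.

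So two of your claims hold only for the certified-only model and for suspect items: that every $X\ni m$ fails the gate (Step~1), and that an $m$ outside $\Gcert$ is not a variable of the model. Across completions both fail, and $m$ can push $\overline{\rho}$ above $0$. The paper resolves this by evaluating intactness per completion and asserting the conclusion only for the involvement bound $\underline{\rho}$, consistent with Proposition~\ref{prop:recordedit}. Your proof needs the same case split, and in that case it must weaken ``$\rho=0$'' to a statement about $\underline{\rho}$.

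Strictly, even the lower-end conclusion needs some admissible completion that leaves the broken link absent. Such a completion may not exist if consistency with receipts, or with the occurrence of $Y$, forces the item present. The paper does not argue this point, so you should state it as a hypothesis.
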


\begin{remark}
On monotone incident models, which include all scenario families in this
paper, the gate-and-grade test coincides with the minimal
sufficient-set analysis of the NESS (necessary element of a sufficient set)
test from tort doctrine\cite{wright1985}; the correspondence is stated and
proven in Supplementary Note~2. This is deliberate: the quantity \audita{}
computes is one a legal audience already recognises.
\end{remark}

\subsection*{Guarantees under record attack and conduct attack}
The composition of verification and attribution is designed to survive an
adversary with liability at stake. \audita{} reports each $\rho_p$ as an
interval $\rhointerval_p$: the range of Eq.~\eqref{eq:rhop} over all
\emph{admissible completions} of the certified record: assignments of
present/absent to the missing set consistent with the verified receipts and
commitments, with suspect items excluded from grading (treated as absent)
and reported separately. The lower end $\underline{\rho}_p$ is the
\emph{involvement bound}: the causal involvement that the certified evidence
alone forces. Involvement is deliberately not culpability: the verdict layer
(next subsection) finds a principal culpable only on breach \emph{and}
causation together, and the guarantees below attach to exactly that
composition. Formal statements, the attack constructions, and full proofs
are in Supplementary Note~3; the record-layer proposition is proven in
Supplementary Note~2.

\begin{proposition}[Record-edit monotonicity]
\label{prop:recordedit}
Let $p$ be a principal all of whose messages verify (fully certified), and
let an adversary modify the record by (i) injecting messages that fail
verification (forged signatures, unresolvable citations, broken Merkle
paths) and/or (ii) deleting messages (deletions against sealed epochs;
Supplementary Note~2). Then the involvement bound $\underline{\rho}_p$ does
not increase: forgeries leave $\Gcert$, and hence the entire reported
verdict, unchanged; deletions can only widen the reported interval
$\rhointerval_p$, never raise its lower end.
\end{proposition}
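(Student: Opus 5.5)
The proof splits along the two attack types in the statement and reads $\underline{\rho}_p$ as the minimum of Eq.~\eqref{eq:rhop} over the set $\mathcal{C}(R)$ of admissible completions of the record $R$. Everything then reduces to two facts. First, the certified part $\Gcert$ is determined by per-message checks. Second, deletions can only enlarge the completion set $\mathcal{C}$, and a minimum over a larger set cannot go up.

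\emph{Step 1 (forgeries).} Let $R' = R \cup F$, where every $f\in F$ fails at least one verification check: its signature, its citation resolution, its receipt consistency, or its Merkle path. Each check on a message $m\in R$ depends only on $m$, its key, its cited hashes, and the sealed commitments. Injecting $F$ does not change any of these, by collision resistance of the hash and because sealed roots are fixed. So every $m\in\Gcert(R)$ still verifies, and every $f\in F$ lands in the suspect set. Hence $\Gcert(R')=\Gcert(R)$.

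One further point is needed. The missing set is defined by citations and receipts, so a forged citation could in principle create new ``missing'' entries. I would rule this out by noting that a message with unresolvable citations is itself suspect, and suspect items are excluded from grading and contribute no constraints to admissible completions. The structural model $M_S$, the gate, the grading, and the replay receipts are all functions of $\Gcert$ together with the missing set induced by certified messages. Therefore the whole verdict is unchanged, including $\rhointerval_p$.

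\emph{Step 2 (deletions).} Let $R'=R\setminus D$. Under sealed epochs, a deleted certified message $d$ remains committed by the Merkle root and by receipts. It therefore moves from $\Gcert$ to the missing set; it cannot vanish without trace. Admissible completions of $R'$ assign present/absent to the enlarged missing set, subject to consistency with the same receipts and commitments. The completion that assigns ``present'' to all of $D$ reproduces each completion of $R$, so $\mathcal{C}(R)\subseteq\mathcal{C}(R')$. Since $\underline{\rho}_p=\min_{c\in\mathcal{C}}\rho_p(c)$ and $\overline{\rho}_p=\max_{c\in\mathcal{C}}\rho_p(c)$, enlarging $\mathcal{C}$ gives $\underline{\rho}_p(R')\le\underline{\rho}_p(R)$ and $\overline{\rho}_p(R')\ge\overline{\rho}_p(R)$. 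This is exactly the claimed widening.

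\emph{Main obstacle.} The delicate point is the inclusion $\mathcal{C}(R)\subseteq\mathcal{C}(R')$. I must check that deleting a message never \emph{adds} a consistency constraint that removes a completion. The risk is that the deleted message's own citations or receipts constrained other missing items, and their disappearance changes the constraint set in an unexpected direction.

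I would handle this by proving that the constraints are monotone. Each admissibility constraint is generated by a certified message or a sealed commitment. Deletion removes certified messages, and hence removes constraints, but never creates them. Sealed roots continue to certify the existence of $D$, so $D$ is classified as missing rather than silently dropped.

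Finally, $p$ being fully certified matters at this step. If $p$'s own messages were deleted, they would become missing, and completions could drop them; this can lower $\underline{\rho}_p$ but never raise it. That is consistent with the claim. The hypothesis is what rules out the forgery case being used to reclassify $p$'s messages as suspect: because $p$'s messages verify and the checks are per-message, forgeries cannot affect them.
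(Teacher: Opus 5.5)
Your proposal is correct and follows the paper's proof essentially step for step. Step~1 is the paper's verification-soundness lemma: injected non-verifying items leave $\Gcert$ and the missing set unchanged, so the completion set and every reported quantity are identical. Step~2 is its completion-monotonicity lemma: extending each completion of $R$ by marking the deleted certified items present reproduces the same slice model, so $\mathcal{C}(R)$ embeds in $\mathcal{C}(R')$, the minimum can only fall and the maximum can only rise. The only difference is emphasis. You spend your care on showing that the extended completion stays admissible. The paper spends its one justifying sentence on why the restored node induces exactly the original model: it participates through the citation edges declared by its certified citers.
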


Proposition~\ref{prop:recordedit} is a \emph{mechanism check}, and we label
it as one: non-verifying items never enter $\Gcert$, and a deletion enlarges
the completion set over which a minimum is taken. It is silent about the
strongest adversary the threat model admits, a coalition holding
\emph{valid} keys and acting during the incident. Such a coalition needs no
forgery. An insider running the live policy ``if the victim's message is
present, emit the harmful command, citing it; otherwise behave'' turns an
innocent principal into a true but-for cause of the outcome: every item
certifies, the production gate passes through the attacker's own citation,
and honest replay certifies $\rho_v = 1$ (Supplementary Note~3, Attack~I). A
second construction (Attack~II) achieves the same raise with a deviation that
lies on no path from the victim to the outcome and is not itself a cause of
the outcome: it rigs an otherwise-inevitable harm so that only the victim
could have prevented it. Causal involvement is therefore manufacturable by
validly-signed conduct, and no record machinery can prevent it. The following
results state what remains impossible.

\begin{definition}[Content-determined duty]
\label{def:cdd}
A duty for principal $p$ is a computable predicate $\Phi_p$ over exactly
(i)~the certified messages authored by $p$ (and $p$'s duty-registered
absences) and (ii)~their certified input frontiers. $\Phi_p$ takes no
ground-truth label and no third-party assertion \emph{about} $p$: another
principal's payload bears on $p$'s verdict only as an input that $p$ acted
on. The duties in this paper are of this form; for example, an aggregator
must output the declared normative aggregation of its certified inputs, and
a solver's emitted answer must match the answer derived in its own certified
working.
\end{definition}

\begin{theorem}[Culpability groundedness]
\label{thm:grounded}
Under any content-determined duty roster, for every valid-key coalition,
every principal $p$, and every coalition-induced run with any record edits of
Proposition~\ref{prop:recordedit}'s class applied: if the verdict finds $p$
culpable (breach with $\underline{\rho}_p>0$), then the breach finding is
witnessed by $p$'s own certified conduct violating $\Phi_p$. Equivalently, a
principal whose own certified conduct was duty-compliant is never found
culpable; the strongest verdict against such a principal is \emph{causal
involvement with exoneration for breach}.
\end{theorem}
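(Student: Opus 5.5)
The proof will be a locality argument. The breach finding for $p$ is computed from an input that the coalition cannot change without either breaking verification or changing $p$'s own conduct. The causal half of the culpability conjunction, $\underline{\rho}_p>0$, plays no role beyond being a conjunct. Attacks~I and~II show it can be manufactured, so the proof will not try to control it. It will show only that the breach conjunct is pinned to $p$'s own certified messages.

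\textbf{Step 1: fix the verdict's breach input.} Define the breach view of $p$ in a record $R$ as
\[
V_p(R) = \bigl(\text{certified messages of } p \text{ and its duty-registered absences},\ \text{their certified input frontiers}\bigr).
\]
By Definition~\ref{def:cdd}, the breach test is the computable predicate $\Phi_p$ evaluated on $V_p(R)$ and on nothing else. Written out, the verdict finds breach exactly when $\neg\Phi_p(V_p(R))$. I will check that the verdict layer in the Methods uses no other input to breach: no ground-truth label and no third-party assertion about $p$.

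\textbf{Step 2: record edits do not move the view.} Take any coalition-induced run, together with edits of the class in Proposition~\ref{prop:recordedit}.
\begin{itemize}
\item Forgeries never enter $\Gcert$, so they cannot appear in $V_p$.
\item Under A3, no coalition member can produce a message that verifies as authored by $p$. So every certified message attributed to $p$ was actually issued by $p$; certified authorship equals conduct.
\item Deletions can only remove items from $\Gcert$, and the remainder stays among the admissible completions.
\end{itemize}
Here I must handle a subtlety: deletion of $p$'s own messages or of frontier inputs. The convention I will use is that breach is asserted only when $\neg\Phi_p$ holds on the certified view. An item that is missing or suspect yields flagged uncertainty, not a breach finding; this anticipates Theorem~\ref{thm:exon}. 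If I adopt the reading "breach must hold in every admissible completion," this becomes a monotonicity argument of the same shape as Proposition~\ref{prop:recordedit}.

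\textbf{Step 3: valid-key conduct of others enters only as inputs.} This is the step I expect to be the main obstacle. Coalition members may sign arbitrary valid messages, and those messages can appear in $p$'s certified input frontier. Definition~\ref{def:cdd} lets such payloads bear on $\Phi_p$ only as inputs that $p$ acted on. The example duties have the form "output the normative aggregation of your inputs" or "match your own working." For these, the duty is relative to whatever the inputs were, so $\Phi_p$ evaluates how $p$ transformed its inputs, not whether the inputs were true.

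Two ways the coalition might still push $\Phi_p$ to false must be ruled out:
\begin{itemize}
\item \emph{Retroactive edits to the frontier.} The frontier is fixed by the citations in $p$'s signed messages. The signature covers $C$, so the coalition cannot change which inputs $p$ claims to have acted on.
\item \emph{New duty-registered absences.} Registered absences are determined by the roster and by $p$'s own window. I need to verify that the roster's activation condition is itself content-determined in this sense.
\end{itemize}
If activation could depend on coalition messages, I would argue as follows. An absence is a breach only if $p$ failed to act when the duty was live on its certified inputs. That failure is again $p$'s own conduct relative to its frontier.

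\textbf{Step 4: conclude.} With Steps 1–3 in place, the breach finding equals $\neg\Phi_p$ evaluated on conduct authored by $p$, so a culpable verdict is witnessed by $p$'s own certified violation of $\Phi_p$. Taking the contrapositive gives the equivalent form: if $\Phi_p(V_p)$ holds, breach is false. The strongest remaining verdict is then $\underline{\rho}_p>0$ with exoneration for breach, and Attack~I shows that this verdict is attainable.
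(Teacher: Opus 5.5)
Your proposal is correct and follows the paper's own route. The paper proves the theorem in one line by combining three ingredients and then taking the contrapositive: the evidence-chain rule, under which a breach finding must cite a certified witness consisting of $p$-authored items or $p$'s registered absences on $p$'s certified frontier (your Step~1); the hearsay-freeness proposition, which confines other principals' valid-key content to that frontier (your Step~3); and the breach-locality lemma, which shows record edits can only destroy such witnesses and never create them (your Step~2). Your explicit handling of deleted frontier inputs and deleted or corrupted own messages spells out the convention that the paper's locality lemma states only briefly, as moving the finding to ``non-evaluable~/ evidence missing''. You either flag uncertainty instead of asserting breach, or require breach in every admissible completion.
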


\begin{theorem}[Blame-shift accountability]
\label{thm:blameshift}
Declare a reference standard: a registered duty-compliant policy per
principal. If a coalition's conduct deviations from that standard raise a
compliant, fully-certified principal's involvement bound, then the deviations
are an actual cause of that principal's \emph{pivotality}, and a minimal
\emph{blame-shift certificate} exists: a set of deviations whose restoration
to the declared standard destroys the raise, machine-checkable by replay,
grading each framer by the same responsibility calculus
($\rho^{\mathrm{frame}}_a$). No stronger localisation holds: by Attack~II the
deviation need lie on no path to the outcome and need not cause the outcome.
\end{theorem}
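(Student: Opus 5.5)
The plan is to treat the ``involvement bound raised'' event itself as a binary outcome variable and apply the Halpern--Pearl machinery to it, over the space of coalition conduct. Fix the compliant, fully-certified principal $v$. Let $\mathcal{D}$ be the finite set of coalition deviations: certified messages (or duty-registered absences) of coalition members whose content differs from what the registered reference policy would emit on the same certified inputs. Define an auxiliary structural model $M^{\mathrm{frame}}$. Its endogenous variables are the deviation indicators $D_d\in\{\text{actual},\text{reference}\}$ for $d\in\mathcal{D}$, together with the downstream message variables of $M_S$. Its outcome is $Z=\mathbf{1}[\underline{\rho}_v > \underline{\rho}_v^{\mathrm{ref}}]$, where $\underline{\rho}_v^{\mathrm{ref}}$ is the bound computed on the run in which every principal follows the standard. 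Three facts make this model well-defined:
\begin{itemize}
\item By A1, restoring any subset of deviations determines a unique replayed run.
\item By A2, influence travels only through recorded messages, so the replayed run is fully captured by the record.
\item By Proposition~\ref{prop:recordedit}, record edits cannot raise $\underline{\rho}_v$. Hence any raise is attributable to validly signed conduct, and we may work with the edit-free certified record. Forgeries drop out of $\Gcert$; deletions only lower the minimum.
\end{itemize}

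The key steps then go in order. (1) Restoring all of $\mathcal{D}$ gives the reference run, where $Z=0$ by definition. In the actual run $Z=1$ by hypothesis. So the full restoration $\vec D\leftarrow\text{ref}$ flips $Z$. (2) Choose a subset-minimal $X^{\mathrm{fr}}\subseteq\mathcal{D}$ whose restoration, with the remaining deviations $W^{\mathrm{fr}}$ frozen at actual values, still yields $Z=0$. Such a set exists because $\mathcal{D}$ is finite and the full set works, taking $W=\emptyset$; the modified HP definition permits freezing the complement at actuals. By construction $(X^{\mathrm{fr}},W^{\mathrm{fr}})$ is a minimal actual cause of $Z$, i.e.\ of $v$'s pivotality. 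This pair is the blame-shift certificate. (3) Machine-checkability follows by reusing Proposition~\ref{prop:sound}'s replay protocol with $Z$ in place of $Y$. On a pinned stack, replaying with $X^{\mathrm{fr}}$ reset and $W^{\mathrm{fr}}$ frozen, then recomputing the interval over admissible completions, is a deterministic check. (4) Define $\rho^{\mathrm{frame}}_a$ exactly as in Eq.~\eqref{eq:rhop}, with $Z$ as outcome and maximizing over minimal frame-causes containing a deviation authored by $a$. This grades each framer by the same calculus.

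The ``no stronger localisation'' clause is handled by exhibiting Attack~II from Supplementary Note~3 as a counterexample. There the unique minimal certificate consists of a deviation with no citation path from $v$ to $Y$ and no causal effect on $Y$ itself, yet it is a cause of $Z$. So no theorem can require the certificate to lie on $v$'s path or to cause $Y$.

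The main obstacle is step (2)'s reliance on the deviation-restoration counterfactuals being well-defined and bitwise replayable. When $X^{\mathrm{fr}}$ is reset to reference content, downstream honest principals may receive different inputs and emit different messages. Meanwhile, frozen deviations in $W^{\mathrm{fr}}$ may cite messages that no longer exist. My approach is to define ``frozen at actual'' payload-wise: keep the signed content and drop unresolved citations. Freezing this way is exactly what the modified HP definition prescribes. I also need to verify that the recomputed $\underline{\rho}_v$ is a function of the replayed certified record only. Establishing this requires showing that the admissible-completion set is determined by receipts in the replayed run. If this fails, the fallback is to restrict deviations to full-subtree restoration, so that all downstream messages are recomputed by the reference policies. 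That restriction still yields a (possibly larger) minimal certificate.
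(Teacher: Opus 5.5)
Your skeleton matches the paper's proof: a finite Boolean function of per-deviation switches is $1$ at the all-realised setting and $0$ at the all-reference one, so a nonempty minimal modified-HP cause exists by finiteness. The rest also matches: replay checking as in Proposition~\ref{prop:sound}, $\rho^{\mathrm{frame}}_a$ by Chockler--Halpern on the extended model, and Attack~II for the last clause. Attack~II alone suffices there, since its single deviation is both off every production path and not a cause of $Y$.

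\textbf{The gap is step~(1).} You measure $Z$ against ``the run in which every principal follows the standard''. But your switches range only over $\mathcal{D}$, the deviations realised in the actual run, and you then assert that restoring all of $\mathcal{D}$ reproduces that run. Under replay (A1) it need not. Let $q_1$ run the Attack~I policy, emitting $s_1$ citing $m_v$. Let $q_2$'s realised policy be ``if $m_v$ is present and $s_1$ absent, emit a harmful $s_2$ citing $m_v$; otherwise act per reference''.
\begin{itemize}
\item In the actual run $q_2$ matches its reference, so $\mathcal{D}=\{s_1\}$.
\item Restoring $s_1$ fires $s_2$, and $v$ is framed again with $\underline{\rho}_v=1$.
\item The all-standard run is benign, so $\underline{\rho}_v^{\mathrm{ref}}=0$.
\end{itemize}
Your hypothesis holds, yet $Z=1$ at both settings of $\mathcal{D}$, and no certificate exists.

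The paper makes this step definitional instead. The deviation-indexed extension $\widehat{M}[D]$ of Supplementary Note~3 switches each deviation's \emph{mechanism} between realised and reference policy and leaves every other mechanism as realised. The theorem's hypothesis is $\underline{\rho}_v(M)>\underline{\rho}_v(M^{\mathrm{ref}})$ with $M^{\mathrm{ref}}=\widehat{M}[D]|_{\sigma\equiv 0}$. The flip is therefore assumed rather than claimed. The example above, where undoing the realised deviations does not undo the raise, correctly falls outside the theorem.

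The same construction removes your ``main obstacle''. The variable universe of $\widehat{M}[D]$ is the union of the messages issued under all $2^{|D|}$ selector settings, which is finite on a pinned stack. A witness frozen at its actual value is $\sigma_W\leftarrow 1$: the realised policy is kept and re-run on the counterfactual inputs, so no citation dangles. Since selectors are set by the context, this freeze is inert, and minimisation runs over $X$ alone, starting from $X=D$, $W=\emptyset$.

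Freezing payloads and dropping citations is not what the modified definition prescribes. The citation set is covered by the author's signature and is exactly what the production gate reads. You would therefore be grading an item its author never signed, and changing how $\underline{\rho}_v$ is recomputed in that run. The subtree-restoration fallback changes what an intervention is. Its sets are then not minimal HP causes over individual deviations, and their grades no longer come from the same calculus.

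Finally, your outcome $Z=[\underline{\rho}_v>\underline{\rho}_v^{\mathrm{ref}}]$ is not the paper's $P^r_v=[\underline{\rho}_v\ge r]$, where $r$ is the realised level. Both flip, so existence survives either choice, but they select different certificates. Take $Y=m_v\vee r_1\vee r_2$, where $r_i$ is a command that $a_i$'s reference conduct would issue and its realised deviation withholds. Then $\underline{\rho}_v$ is $1$ with both deviations, $\tfrac12$ with either restored, and $\tfrac13$ with both restored.
\begin{itemize}
\item $P^1_v$ yields singleton certificates with $\rho^{\mathrm{frame}}_{a_i}=1$.
\item Your $Z$ needs both deviations and grades each framer $\tfrac12$.
\end{itemize}
Your event reads ``destroys the raise'' literally; the paper's grades who pushed $v$ to its realised level.
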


\begin{theorem}[No silent exoneration]
\label{thm:exon}
Let $q$ be a certified culprit, $\underline{\rho}_q(R) = r > 0$ on the sealed
record. Any record edit that lowers $\underline{\rho}_q$ leaves a nonempty
missing set meeting every family of certified items that forced level $r$,
each missing item reported with its author and citation frontier; and the
upper end $\overline{\rho}_q$ never falls below $r$. The same holds for the
breach prong: erasing or corrupting the certified witnesses of a breach
finding moves them, author-attributed, to the missing or suspect set rather
than yielding a clean non-breach. Destruction converts forced responsibility
into flagged, pre-attributed uncertainty, never into certified innocence,
on either prong.
\end{theorem}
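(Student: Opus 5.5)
The plan is to prove Theorem~\ref{thm:exon} by tracking exactly how an edit of Proposition~\ref{prop:recordedit}'s class acts on the set of admissible completions. Two observations drive it. First, after forgeries are discarded, the only edits that can change $\Gcert$ are deletions against sealed epochs. Second, a deletion of a sealed item cannot make that item vanish without trace. The Merkle commitment, the delivery receipts, and the citations of surviving items still reference it, so it lands in the missing set. If the item was cited, the citing message (certified, by another author) still carries its hash. If it was only committed under the root, the sealed leaf still binds its author field, because the signature domain $(a,\pi,C,\varphi)$ hashes into the leaf. This is the author-attribution step. I will isolate it first as a lemma: every certified item of $R$ removed by a sealed-epoch deletion appears in the missing set of the edited record $R'$, together with its author and its citation frontier.

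Next comes the interval statement. Write $\mathcal{C}(R)$ for the admissible completions of $R$. The key inclusion is $\mathcal{C}(R') \supseteq \mathcal{C}(R)$, or more precisely: the actual assignment, in which every deleted item is present, is admissible for $R'$ because it is consistent with the surviving receipts and commitments. This is the same monotonicity step used for Proposition~\ref{prop:recordedit}. The original record itself, read as a completion of $R'$, then yields $\rho_q \ge r$, so $\overline{\rho}_q(R') \ge r$. For the hitting claim, fix the families $F_1,\dots,F_k$ of certified items that force level $r$. A family \emph{forces} level $r$ if every completion containing all of $F_i$ admits a minimal cause of $Y$ with a message of $q$ graded at least $r$. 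Suppose $\underline{\rho}_q(R') < r$ while some $F_i$ avoids the missing set of $R'$. Then $F_i \subseteq \Gcert(R')$ and lies in every completion, so every completion grades $q$ at least $r$, a contradiction. Hence every forcing family meets the missing set, and the lemma supplies authors and frontiers for those items.

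For the breach prong, Definition~\ref{def:cdd} does most of the work. $\Phi_q$ depends only on $q$'s certified messages, its registered absences, and their certified input frontiers. An erasure or corruption of a witness either removes a sealed item, which the lemma sends to the missing set, or changes its bytes, which breaks the signature or Merkle path and sends it to the suspect set. Author metadata persists in both cases. The verdict layer must therefore report breach as undetermined whenever a breach witness lies outside $\Gcert$, rather than evaluate $\Phi_q$ on the remainder. I would check that the verdict rule in the paper's verdict subsection indeed quantifies over completions. If it does, a clean non-breach would require every completion to comply. The actual completion is admissible and violates $\Phi_q$, so this is impossible.

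I expect the main obstacle to be the lemma: showing that a deletion always leaves a trace. Two cases are delicate. In the first, the deleted item is uncited and the whole tail of an epoch is truncated; here the argument needs the chained headers, so that the next published root or receipt pins the epoch's leaf count. In the second, the adversary deletes both an item and its only citer. I would handle this second case by recursion up the citation chain to the nearest surviving certified item or Merkle leaf. That recursion terminates because sealed epochs are immutable and the published root is held by all parties.
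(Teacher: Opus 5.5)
Your overall structure matches the paper's proof. Injections are inert by Lemma~\ref{lem:sound}, so any drop in $\underline{\rho}_q$ requires deletions. Your forcing-family contradiction (a family that survives lies in every completion of $R'$ and so forces $\rho_q\ge r$) is exactly clause~1 of the paper's proof. Your upper-end bound, obtained by lifting a completion of $R$ with the deleted items set present, is the model-preserving embedding of Lemma~\ref{lem:mono}. All of that is fine.

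The gap is the attribution lemma, which you try to derive from the cryptography and which does not follow from it. A citer's hash, or a Merkle leaf under a published root, \emph{binds} the author field $a$ but does not \emph{disclose} it. Once the item is deleted, the surviving record holds only $\mathrm{hash}(d)$, and nothing lets the auditor read $a$ off that hash. The same is true of the deleted item's own citation set; what survives is only its certified citers. The suspect case has the same problem: a corrupted copy's author field is chosen by the adversary, so ``author metadata persists'' is not a certified fact. The paper does not derive any of this. It adds Assumption~\ref{ass:receipts} (attributed commitments): sealed blocks retain, in the clear, an identifier list binding each item's author identity to its hash, and receipts are signed over those identifiers. You need that assumption or an equivalent one; Assumptions~\ref{ass:sig}--\ref{ass:seal} do not supply it.

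\textbf{Your ``main obstacle''.} In the paper, the fact that a deletion leaves a trace is immediate from Assumption~\ref{ass:seal} and the definition of $M(R)$. That definition already counts as missing any hash that appears in a sealed commitment but has no preimage in the record, so no recursion up citation chains is needed. Conversely, your leaf-count and recursion argument can at best show that \emph{some} leaf is missing, not which hash it was or whose. To get more, the block must retain its leaf list, which is again the attributed-commitment assumption.

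\textbf{The breach prong.} Do not route this through completions. Completions assign only present or absent, not content, so $\Phi_q$ cannot be evaluated on a completion in which a deleted witness is ``present.'' The paper's verdict rule also does not quantify over completions. The paper argues instead from three facts:
\begin{itemize}
\item a breach finding must cite a certified $q$-authored witness (Definition~\ref{sdef:culp});
\item Lemma~\ref{slem:locality}(b) shows that edits can only remove such witnesses, and removed witnesses land in the missing or suspect partitions;
\item the verdict then reports the evaluation as degraded, attributed to $q$ via Assumption~\ref{ass:receipts}, rather than as a clean non-breach.
\end{itemize}
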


\begin{theorem}[Completeness barrier]
\label{thm:barrier}
Call a content-determined breach standard \emph{sound} if it passes all
duty-compliant conduct, and \emph{complete} if it fires on all conduct whose
committed output was wrong with respect to ground truth and caused the
adverse outcome. Wherever duty-compliant conduct is fallible (honest errors
exist), no content-determined standard is both sound and complete; and any
standard that closes the gap decides output correctness for the audited task
itself. Innocence is certifiable from the record alone;
outcome-guilt-completeness is not.
\end{theorem}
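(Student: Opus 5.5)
The proof is an indistinguishability argument. A content-determined standard sees only a principal's certified messages, absences and input frontiers. If two runs present identical such views but differ in ground truth, the standard must give the same answer on both.

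\textbf{Step 1: a pair of indistinguishable runs.} Fallibility gives a duty-compliant conduct $c$ whose committed output is wrong with respect to ground truth. We embed $c$ in an incident whose declared outcome $Y$ depends on that output, for instance a downstream aggregator that passes the answer to an actuator, so that $c$ is a minimal certified cause of $Y$ with $\underline{\rho}_p>0$. The production gate passes, and Proposition~\ref{prop:sound} certifies the cause by replay. The sole job of this run is to make the completeness premise bite: $c$ is wrong-and-causal, so a complete standard must fire on it.

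\textbf{Step 2: soundness versus completeness.} Let $\Phi_p$ be any content-determined standard. By Definition~\ref{def:cdd}, $\Phi_p$ is a function of $p$'s certified view only; no ground-truth label enters. Since $c$ is duty-compliant, soundness forces $\Phi_p(c)=$ pass. Completeness forces $\Phi_p(c)=$ fire, which is a contradiction. This is almost immediate once Step 1 exists, and it needs no second run.

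\textbf{Step 3: closing the gap decides correctness.} The indistinguishability pair is used here. Suppose $\Phi$ is complete, whether or not it is sound. Fix an input frontier $I$ and a candidate output $o$ in the audited task. We build a view in which $p$ receives $I$ and emits $o$, and $o$ feeds $Y$ as in Step 1. If $o$ is wrong, completeness forces $\Phi$ to fire. If $\Phi$ also passes correct outputs, meaning it exactly separates wrong from right causal conduct, then evaluating $\Phi$ on this synthetic view computes whether $o$ is correct for $I$. So any standard that closes the gap yields a decision procedure for output correctness of the audited task. The closing sentence of the theorem follows. Sound standards exist, for example one that passes everything or the paper's roster, so innocence is certifiable. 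By Step 2, however, guilt-completeness is not achievable.

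\textbf{Main obstacle.} The hardest step is Step 1's embedding: showing that the synthetic incident is a legitimate run whose certified view is exactly the given conduct. We need a run-construction lemma: for any certified message content, there is a coalition-free deployment whose $\Gcert$ contains that message with the stated frontier and a monotone mechanism to $Y$. I would build this directly from Definition~\ref{def:outcome} by declaring $Y$ to be ``downstream actuation on $p$'s output.''

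A second point is what ``reduction to output correctness'' should mean for tasks that are not decision problems. I would state it as computing the correctness predicate on all instances reachable as certified views, which is what ``decides'' requires.
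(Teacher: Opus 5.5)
This is essentially the paper's proof: your Step~2 is its part~(1), where the honest-error conduct must pass by soundness and fire by completeness, and your Step~3 is its part~(2), with your synthetic view playing the role of the constructor $K(t,a)$ that fires for $a\neq g(t)$ by completeness and passes for $a=g(t)$. The only differences are in where the burden sits: the paper defines honest fallibility as compliant conduct that is already outcome-culpable (wrong \emph{and} harm-causing), and it assumes the existence of $K$ outright (template-uniform realizability, with $K(t,g(t))$ compliant-producible so that the pass on correct answers follows from soundness, whereas you assume that pass directly), so your Step~1 embedding and run-construction lemma are not needed, and the announced indistinguishability pair is never actually used.
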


Theorem~\ref{thm:grounded} formalises the presumption of innocence for
certified records: culpability moves only on the accused's own certified
conduct. Theorem~\ref{thm:blameshift} makes manufactured involvement itself
attributable: the framing is graded by the calculus it abused. Three
adjacent lines sharpen what this certificate is and is not.
Security-protocol accountability treats deviations from a specification as
actual causes of a \emph{violation}\cite{datta2015program,
kunnemann2019accountability}; here the deviations are causes of another
principal's \emph{pivotality}, one level up. Non-frameability notions
guarantee that honest parties are never \emph{falsely}
blamed\cite{kuesters2010accountability,haeberlen2007peerreview}; the
blame-shift victim is \emph{genuinely} pivotal, a case those definitions
cannot express, and the certificate shows the true pivotality was
manufactured. And higher-order responsibility in strategic games asks who
could have prevented a responsibility \emph{gap}\cite{jiang2026higherorder};
the certificate instead grades who \emph{created} a responsibility
\emph{surplus} on a compliant victim. To our knowledge it is the first
causal certificate of manufactured true pivotality with graded framer
responsibility, and we claim exactly that.
Theorem~\ref{thm:exon} is the dual guarantee against manufactured innocence.

\begin{theorem}[Grounded framing at every order]
\label{thm:hierarchy}
Fix the certified record and the registered content-determined duty roster,
and let the order-one certificate be as in Theorem~\ref{thm:blameshift}. For
$k \ge 1$ define the order-$k$ certificate by taking as endogenous variables
deviation selectors over conduct on the record, each owned by the signer of
its realised conduct, and as outcome event any pivotality predicate definable
from the order-$(k-1)$ analysis. Then (i) \emph{completeness}: whenever the
outcome predicate differs between the all-realised and all-reference
assignments, the certificate exists and every graded variable is a deviation
signed by its owner; and (ii) \emph{groundedness}: a culpable-framing
finding against principal $p$, a positive grade on a deviation of $p$'s that
itself breaches the registered roster, is witnessed by $p$'s own certified
conduct. Principals with no deviating conduct receive no grade at any order,
and principals whose deviations are duty-compliant appear in the involvement
layer only. The involvement and culpability separation of
Theorem~\ref{thm:grounded} is therefore preserved under meta-lifting at
every order. Proof in Supplementary Note~3.
\end{theorem}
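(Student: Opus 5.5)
The plan is an induction on the order $k$, with Theorem~\ref{thm:blameshift} as the base case and Theorem~\ref{thm:grounded} reused at each level. I will first fix the order-$k$ structural model $M^{(k)}$. Its endogenous variables are binary deviation selectors $D_c\in\{\text{realised},\text{reference}\}$, one for each item of certified conduct $c$ on the record that differs from the registered reference policy. Each selector is owned by the signer of $c$. The outcome is a pivotality predicate $Y^{(k)}$ computed by running the order-$(k-1)$ analysis on the record induced by a selector assignment.

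Under A1 and A2, that analysis is a deterministic function of the selector assignment. Its inputs are the certified record, which is fixed, and the duty roster, which is content-determined, so it depends on nothing beyond the record. Hence $M^{(k)}$ is a well-defined finite binary structural causal model. Principals with no deviating conduct own no selector, so they can never appear in a graded cause. This gives the ``no grade at any order'' clause directly, because $\rho$ in Eq.~\eqref{eq:rho} is only assigned to variables of the model.

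\textbf{Completeness (i).} Suppose $Y^{(k)}$ differs between the all-realised assignment $\vec d$ and the all-reference assignment $\vec r$. Walk from $\vec r$ to $\vec d$ one selector at a time. Some single flip changes $Y^{(k)}$, so the set of selectors on which $\vec d$ and $\vec r$ differ is a (non-minimal) sufficient set under the modified Halpern--Pearl definition with empty witness. Shrink it to a minimal subset $X^\ast$ whose restoration to reference falsifies $Y^{(k)}$; such a subset exists by finiteness.

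This is exactly the argument behind the order-one certificate of Theorem~\ref{thm:blameshift}, with the pivotality predicate of order $k-1$ in place of order zero. Replay-checkability is inherited from Proposition~\ref{prop:sound}. Every variable in $X^\ast$ is a selector, and by construction each selector is a deviation signed by its owner. Certification of authorship then follows from A3, since the selector ranges only over items in $\Gcert$.

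\textbf{Groundedness (ii).} A culpable-framing finding against $p$ is by definition a positive grade on a selector $D_c$ owned by $p$ together with a breach finding on $c$. The breach prong is evaluated by the same content-determined $\Phi_p$ of Definition~\ref{def:cdd}, applied to $p$'s certified messages and input frontiers. So I will argue exactly as in Theorem~\ref{thm:grounded}: $\Phi_p$ reads only $p$'s own certified conduct, so a firing of $\Phi_p$ is witnessed by that conduct, and a compliant deviation cannot make it fire. Such a principal therefore receives at most a grade in the involvement layer.

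The induction hypothesis is needed only to ensure that $Y^{(k-1)}$-based pivotality is a function of the record and the roster alone. In particular, it must not depend on lower-order breach verdicts that could import third-party assertions about $p$. I expect this to be the main obstacle: showing that ``any pivotality predicate definable from the order-$(k-1)$ analysis'' never lets another principal's payload bear on $p$'s breach verdict except as an input $p$ acted on.

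I would handle this by proving, inside the induction, a separation invariant at every order. The invariant states that breach verdicts are computed only from $\Phi$ evaluated on the owner's certified conduct. Grades at higher orders may depend on arbitrary lower-order structure, but they feed only the involvement layer. Culpability at every order is then the conjunction of a grade with a breach finding under $\Phi_p$, and the breach finding never depends on the order, which closes the induction.
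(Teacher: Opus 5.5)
Your proposal is correct and follows essentially the paper's proof. For completeness, you take the full deviation set (it already falsifies the predicate because restoring all of it yields the all-reference assignment, so your one-flip-at-a-time walk is unnecessary), minimise by finiteness, and secure ownership of every graded selector through A3. For groundedness, your induction preserves the same two invariants the paper uses: every endogenous variable is a selector on signed conduct, and breach is always the same content-determined $\Phi_p$ evaluated on the owner's own certified conduct, so higher-order grades feed only the involvement layer.
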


\begin{proposition}[Dilution is not free]
\label{prop:dilution}
Padding the deviation set with causally inert deviations changes neither the
existence of the certificate nor any grade: minimal witness sets exclude
variables on which the outcome predicate does not depend. Contrapositively,
any deviation whose inclusion changes a grade is causally active conduct,
certified and signed, and itself subject to Theorem~\ref{thm:hierarchy}.
Proof in Supplementary Note~3.
\end{proposition}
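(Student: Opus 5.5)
The proof rests on one structural fact: under the modified Halpern--Pearl definition, a variable on which the outcome predicate does not depend can never be part of a minimal cause or a minimal witness. Once that is established, both claims of Proposition~\ref{prop:dilution} follow by comparing the certificate problem before and after padding.

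\textbf{Setting up the two problems.} Let $D$ be the original deviation set and $D' = D \cup I$ the padded set. Here $I$ is a set of \emph{causally inert} deviations. By inert I mean: for every assignment of the deviation selectors in $D$, and for every setting of the variables in $I$ to realised or reference values, the order-$(k-1)$ pivotality predicate takes the same value. So the outcome predicate factors through the $D$-coordinates.

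\textbf{Step 1: existence of the certificate.} The all-realised and all-reference assignments over $D'$ restrict to the corresponding assignments over $D$. By the factoring, the outcome predicate differs between the two extreme assignments over $D'$ if and only if it differs over $D$. The completeness clause (i) of Theorem~\ref{thm:hierarchy} then gives existence in one case exactly when it gives existence in the other.

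\textbf{Step 2: minimal cause--witness pairs are unchanged.} Take any minimal cause--witness pair $(X^\ast, W^\ast)$ in the padded model, and suppose some $i \in I$ lies in $X^\ast \cup W^\ast$.

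\emph{Case $i \in X^\ast$.} Flipping $i$ does not affect the predicate, so $X^\ast \setminus \{i\}$, with $W^\ast$ unchanged, still falsifies the outcome. This contradicts the minimality of $X^\ast$. A singleton $X^\ast = \{i\}$ cannot falsify anything at all.

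\emph{Case $i \in W^\ast$.} Freezing $i$ at its actual value is redundant, since the predicate ignores $i$. Removing $i$ from $W^\ast$ therefore preserves the counterfactual test.

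For the witness case, the argument needs the witness to be chosen minimal, i.e.\ $W^\ast$ minimal among admissible witnesses for the given $X^\ast$. I expect this to be the main obstacle. The modified HP definition only imposes minimality on $X^\ast$, yet Eq.~\eqref{eq:rho} depends on $|W^\ast|$. I would therefore make explicit that the grade in Eq.~\eqref{eq:rho} and its order-$k$ lift take a maximum over pairs, which is equivalent to minimising $|W^\ast|$. Under that reading, any witness containing an inert variable is strictly dominated by the same witness with that variable removed.

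With both cases handled, the pairs attaining the maximum in Eq.~\eqref{eq:rhop} (lifted to order $k$) avoid $I$ entirely. They coincide with the pairs of the unpadded model, and so every $\rho^{\mathrm{frame}}_a$ is unchanged.

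\textbf{Step 3: the contrapositive.} Suppose including a deviation $d$ changes some grade. By Step 2, $d$ is not inert: some assignment of the outcome predicate depends on $d$. So $d$ is causally active conduct.

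Deviation selectors are defined over conduct on the certified record. Each is owned by the signer of its realised conduct (the construction in Theorem~\ref{thm:hierarchy}), and A3 gives that the signer is the author of that conduct. Hence $d$ is certified and signed by its owner. Finally, $d$ is a legitimate endogenous variable of the next-order analysis, so it falls within the scope of clauses (i) and (ii) of Theorem~\ref{thm:hierarchy}.
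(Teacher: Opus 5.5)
Your proposal is correct and is essentially the paper's argument. An inert selector cannot sit in a minimal $X^\ast$, because dropping it preserves the falsifying intervention, and it is never needed in a minimal witness. So existence and every grade $1/(|X^\ast|+|W^\ast|)$ come from the same minimal pairs, and the contrapositive rests on selectors being signed, owner-attributed conduct covered by Theorem~\ref{thm:hierarchy}.

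Two small points of comparison. Your explicit reading of witness minimality as minimising $|W^\ast|$ is the convention the paper uses implicitly (as in Lemma~\ref{lem:witness}). Existence in both directions already follows from the coincidence of minimal pairs in your Step~2, which is how the paper argues it; your Step~1 appeal to clause~(i) is harmless, but it only transfers a sufficient condition.
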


Theorem~\ref{thm:hierarchy} answers the natural question about
Theorem~\ref{thm:blameshift}: whether the certificate can itself be turned
into a weapon, by scapegoating a principal whose deviation was benign, or by
flooding the analysis with decoys. It cannot, and both halves are exercised
empirically in Results by a level-2 attack construction.
Theorem~\ref{thm:barrier} bounds what \emph{any} auditor confined to the
record, human or algorithmic, can certify, and predicts in particular that
attribution without ground-truth access should reach parity with, not
superiority over, ground-truth-informed judges on culprit identification.

\subsection*{From causes to verdicts}
Causation alone is not culpability. The verdict layer maps the certified
causal analysis into the four-element structure that negligence doctrine has
used for a century\cite{wright1985,harthonore1985}. \emph{Duty} asks which
declared obligations, such as safety envelopes, review requirements, and halt
authorities, were active for each principal in the window. \emph{Breach} asks
which certified messages, or duty-indexed absences, violated them, assessed
under both a specification standard and a reasonable-agent standard, each
declared and content-determined in the sense of Definition~\ref{def:cdd}, as
Theorem~\ref{thm:grounded} requires. \emph{Causation} is the certified cause
sets and grades of Eqs.~\eqref{eq:rho}--\eqref{eq:rhop}. \emph{Harm} is the
declared $Y$ with its measured severity. The output for each principal is the quadruple
(duty findings, breach findings, $\rhointerval_p$ with receipts, conduit and
exoneration flags), illustrated in Figure~\ref{fig:verdict}. The verdict is an \emph{input} to insurers, regulators,
and courts: an evidence-linked account, not an apportionment of legal
liability, which involves doctrine and discretion beyond causal
structure\cite{friedenberg2019}. And breach without causation is reported as
exactly that (the negligent-but-inert principal is named for the breach and
exonerated for the outcome), which single-culprit formats cannot express.

\subsection*{Implementation and models}
The implementation is in Python: Ed25519 message
signing\cite{bernstein2012}, Merkle epoch commitments\cite{merkle1988}, a
deterministic discrete-event facility simulator, structural-model construction
with exhaustive gate-and-grade search over incident slices, and replay by
seeded re-execution. Language models play three distinct
roles: as \emph{subjects}, they are the planner, solvers, aggregator, and
verifier whose failures are audited; as the \emph{baseline}, the same model
runs the field's judge formats; and as \audita{}'s \emph{attributor}, a model
reasons over the certified record only where the degraded static-log variant
applies. All served models are open-weight (Qwen2.5-72B-Instruct as the
primary model, with Qwen2.5-7B-Instruct and Llama-3.1-8B-Instruct for additional study), run under vLLM\cite{kwon2023vllm} with tensor-parallel
serving on NVIDIA A100-80GB GPUs, greedy decoding (temperature $0$) with a
fixed seed, and the serial replay protocol that makes receipts bitwise exact.
Each experiment uses three generation seeds. 

% audita-results.tex — Results section (Nature order: Results -> Discussion -> Methods).
% Narrative: benchmark head-to-head -> contributed dataset -> robustness -> ablations ->
% implementation. All numbers trace to archived artifacts (verify_paper_numbers.py).

\section*{Results}

We evaluate \audita{} on two families of failures. The first is a live
corpus, a multi-agent pipeline solving public GSM8K
problems\cite{cobbe2021gsm8k} whose failures arise from real model behaviour;
the second is CulpaBench, our benchmark of physical incident structures in a
deterministic multi-robot facility, with responsibility ground truth computed
by exact counterfactual re-execution. We compare \audita{} against the
field's attribution baselines: the large-language-model judge of the
Who\&When benchmark\cite{zhang2025who} (all three published formats, with the
live corpus running all-at-once), single-site counterfactual scoring as
published in CAR and CausalFlow\cite{shah2026car,bonagiri2026causalflow},
Shapley-value attribution\cite{ma2025causal}, and, on the benchmark
structures, a statistical anomaly detector. The primary metric is
responsibility error, the $\ell_1$ distance between the reported and the
true graded responsibility profile (lower is better); on the external
Who\&When benchmark we additionally report that benchmark's own agent- and
step-accuracy.

\subsection*{Attribution on live multi-agent failures}
The first question is whether \audita{} attributes real failures better than
the methods the field currently uses. The pipeline has four roles: a planner, three parallel
solvers, an aggregator that carries a declared normative duty, and a verifier.
Three independently generated corpora ($480$ cases each, one development
corpus and two holdouts) supply the incidents.
Fault injection is disclosed and calibrated to the empirical multi-agent
failure taxonomy\cite{cemri2025mast}, and every label is replay-verified.

Re-deriving ground truth from causal structure rather than from injection
bookkeeping produced an unexpected finding. We predicted that joint causation, a structure prior
methods\cite{zhang2025who,shah2026car,bonagiri2026causalflow} cannot
express, would account for very few single-fault incidents. Instead, under duty-aware
graded ground truth, $95$--$98\%$ of adverse incidents ($56/57$, $43/47$,
$39/41$ across the three corpora) are joint solver--aggregator incidents, and
not one is the monotone single-culprit structure the field's metric presumes.
An aggregator holding a declared duty is a but-for co-cause of nearly every
adverse outcome that duty would have prevented, so the single-culprit label,
which names only the erring solver, discards a co-cause in almost every
incident. The field's own benchmark shows the same pattern: in $58.7\%$ of
Who\&When traces a duty-bearing role (a verifier or orchestrator, by the
benchmark authors' own role names) stayed silent through the decisive error
and is not the labelled culprit.

Results are shown in Table~\ref{tab:live}. The full graded verdict, which reads the correct answer,
reduces the judge's responsibility error from $0.589$ to $0.170$. And the
deployable variant attains the lowest error in the table, $0.159$, while
reading no answer key at all. The reason lies in what the answer key is used for. The two breach tests
agree whenever an agent violated a registered duty; they can disagree
essentially in one place, the \emph{honest error}, where a solver follows
every duty, reasons consistently, and still lands on a wrong answer. There
the gold-informed test, seeing the wrong answer, declares a breach. Meanwhile, the deployable checks conduct only against the registered duties. The ground truth sides with the latter: under negligence
semantics an agent that violated no duty is not culpable. Among the baselines,
Shapley-value attribution is the strongest at $0.223$ and single-site
scoring reaches $0.408$; causal-only, \audita{} stripped of its duty layer,
also stops at $0.408$, so the duty layer is what separates the verdict from
the field.

\begin{table}[t]
\centering
\caption{\textbf{Attribution error on live multi-agent failures} (GSM8K
pipeline; responsibility error against duty-aware Chockler--Halpern ground
truth). Baselines are the
LLM judge of the Who\&When benchmark, CAR~/ CausalFlow
single-site scoring, and Shapley-value attribution. \audita{} appears in
three variants that differ only in what the verdict uses.
\emph{Causal-only} removes the duty layer entirely and reports graded
causation alone. \emph{Full} is the complete duty--breach--causation verdict,
whose breach test may consult the gold answer, the problem's correct final
answer, when deciding whether an agent's output was wrong.
\emph{Deployable} is the same verdict except that its breach test never sees
the correct answer and judges each agent only against its own certified
messages (Definition~\ref{def:cdd}), so it can run in production, where no
answer key exists. Errors are mean $\pm$ s.d.\
across the three independently generated corpora. The full and deployable
verdicts attain the two lowest errors, roughly a third of the judge's.}
\label{tab:live}
\small
\begin{tabular}{@{}lc@{}}
\toprule
Method & Responsibility error $\downarrow$ \\
\midrule
LLM judge (all-at-once)\cite{zhang2025who} & $0.589 \pm 0.017$ \\
CAR~/ CausalFlow (single-site)\cite{shah2026car,bonagiri2026causalflow} & $0.408 \pm 0.020$ \\
Shapley-value attribution\cite{ma2025causal} & $0.223 \pm 0.042$ \\
\midrule
\audita{} (causal-only, duty layer ablated) & $0.408 \pm 0.020$ \\
\audita{} (full, gold-informed) & $0.170 \pm 0.035$ \\
\audita{} (deployable, no answer key) & $\mathbf{0.159 \pm 0.030}$ \\
\bottomrule
\end{tabular}
\end{table}

\subsection*{CulpaBench: a benchmark of causal-attribution structures}
Certifying what a method recovers requires planted, analytically derived
ground truth. We therefore contribute \emph{CulpaBench}, a benchmark of
physical incident structures that provides it, so estimator correctness is
measured against causal structure instead of agreement with our own analysis. Its structures are grounded in $42$ public robot-accident records
(OSHA and NIOSH reports; Supplementary Information), its scale of over 1,000 incidents is roughly $10 \times$ the $184$-trace field benchmark, and
we release it as a resource for attribution
research. Extended Data Table~\ref{tab:scenarios} lists the
structures; each targets a specific way single-culprit attribution fails.

Evaluation results are shown in Table~\ref{tab:scenario_results}. The certified
graded verdict (Figure~\ref{fig:verdict}) recovers every structure exactly
(responsibility error $0.000$, mean over $300$ draws per structure),
including the $\tfrac12,\tfrac12$ split of an overdetermined accident and
the omission graded through its absence variable. The baselines fail
structurally and predictably. Single-site scoring cannot represent
overdetermination (deleting either sufficient cause alone leaves the outcome
standing), cannot see an omission, and cannot separate the inert twin, a
command recorded identically to a real cause but whose effect was locked out
before reaching the world. The anomaly detector always has a most-anomalous
message, so it can never say ``no one is responsible''. Shapley-value
attribution is graded but blind to duty, spreading mass across non-culpable
candidates. Language-model judges clear the trivial control exactly but err
wherever a cause is shared or absent: a single-culprit output cannot express
either, however well the model reasons.

\begin{table}[t]
\centering
\caption{\textbf{Attribution on the CulpaBench structures} (responsibility
error; mean over $300$ draws per
structure). Cells are mean $\pm$ s.d.\ across the three
generation seeds. Some LLM-free arms are
seed-invariant, so their s.d.\ is identically zero. The Average column is the
mean of the five structure means. The LLM judge, CAR~/
CausalFlow single-site scoring, Shapley-value attribution, and the anomaly
detector are the baselines; causal-only is \audita{} with
the duty layer ablated. \audita{}'s exact-match rate is $1.000$ in
every structure.}
\label{tab:scenario_results}
\small
\resizebox{\textwidth}{!}{%
\begin{tabular}{@{}lcccccc@{}}
\toprule
Method & Redundancy $\downarrow$ & Inert twin $\downarrow$ & Preemption $\downarrow$ & Omission $\downarrow$ & Chain $\downarrow$ & \textbf{Average} $\downarrow$ \\
\midrule
\audita{} (full, certified) & $\mathbf{0.000 \pm 0.000}$ & $\mathbf{0.000 \pm 0.000}$ & $\mathbf{0.000 \pm 0.000}$ & $\mathbf{0.000 \pm 0.000}$ & $\mathbf{0.000 \pm 0.000}$ & $\mathbf{0.000}$ \\
\audita{} (causal-only, duty layer ablated) & $0.667 \pm 0.000$ & $0.667 \pm 0.000$ & $0.000 \pm 0.000$ & $0.500 \pm 0.000$ & $0.000 \pm 0.000$ & $0.367$ \\
\midrule
LLM judge (all-at-once)\cite{zhang2025who} & $0.500 \pm 0.000$ & $0.017 \pm 0.029$ & $0.000 \pm 0.000$ & $0.500 \pm 0.000$ & $0.500 \pm 0.000$ & $0.303$ \\
LLM judge (step-wise)\cite{zhang2025who} & $0.500 \pm 0.000$ & $0.000 \pm 0.000$ & $0.017 \pm 0.029$ & $0.500 \pm 0.000$ & $0.500 \pm 0.000$ & $0.303$ \\
LLM judge (binary-search)\cite{zhang2025who} & $0.567 \pm 0.038$ & $0.150 \pm 0.050$ & $0.000 \pm 0.000$ & $0.500 \pm 0.000$ & $0.500 \pm 0.000$ & $0.343$ \\
CAR~/ CausalFlow (single-site)\cite{shah2026car,bonagiri2026causalflow} & $1.000 \pm 0.000$ & $0.667 \pm 0.000$ & $0.000 \pm 0.000$ & $0.500 \pm 0.000$ & $0.000 \pm 0.000$ & $0.433$ \\
Shapley-value attribution\cite{ma2025causal} & $0.833 \pm 0.000$ & $0.667 \pm 0.000$ & $0.095 \pm 0.000$ & $0.667 \pm 0.000$ & $0.000 \pm 0.000$ & $0.452$ \\
Statistical anomaly detector & $1.000 \pm 0.000$ & $1.000 \pm 0.000$ & $1.000 \pm 0.000$ & $1.000 \pm 0.000$ & $0.500 \pm 0.000$ & $0.900$ \\
\bottomrule
\end{tabular}}
\end{table}

% audita-fig4.tex — Fig. 2: the verdict schema and what it can express.
\begin{figure}[t]
\centering
\resizebox{\textwidth}{!}{%
\begin{tikzpicture}[font=\scriptsize,
  leg/.style={draw=inkP,fill=fillP,rounded corners=2pt,align=center,minimum height=18mm,text width=30mm,inner sep=2pt,line width=0.7pt},
  fl/.style={-{Stealth[length=1.8mm]},inkG,line width=0.6pt}]

% ===== type signature =====
\node[draw=inkP,fill=white,line width=1.1pt,rounded corners=3pt,align=center,text width=142mm,minimum height=9mm] (V) at (7.4,5.1)
  {\textbf{Verdict}$(p)\;=\;\big\langle\, \mathrm{duty}\!\in\!\{0,1\},\;\; \mathrm{breach}\!\in\!\{\varnothing,\,\sigma_{\mathrm{spec}},\,\sigma_{\mathrm{ra}}\},\;\; \rhointerval+\text{receipts},\;\; \mathrm{flags}\,\big\rangle$};

% ===== four legs, equal height, straight vertical arrows =====
\node[leg] (d) at (1.8,3.05) {\textbf{Duty}\\[2pt] $\mathrm{duty}(p)=[\,p\!\in\!\mathrm{dom}(D)\,]$\\[2pt] {\color{inkG}delegation \&\\ duty registry $D$}};
\node[leg] (b) at (5.55,3.05) {\textbf{Breach}\\[2pt] $R(p)\not\models \sigma$\\[2pt] {\color{inkG}receipts $R(p)$ vs.\ standards $\sigma_{\mathrm{spec}}$, $\sigma_{\mathrm{ra}}$}};
\node[leg] (c) at (9.3,3.05) {\textbf{Causation}\\[1pt] $\rho_p=\dfrac{1}{|X^\ast|+|W^\ast|}$\\[1pt] {\color{inkG}minimal cause $X^\ast$,\\ witness $W^\ast$; replay-certified}};
\node[leg] (h) at (13.05,3.05) {\textbf{Harm}\\[2pt] $\mathrm{sev}(Y)$\\[2pt] {\color{inkG}recorded terminal\\ state \& severity}};
\foreach \x in {d,b,c,h} \draw[fl] (\x.north) -- (\x.north |- V.south);

% ===== breach x causation truth table (single-line grid) =====
\node[font=\bfseries,text=inkP] at (7.4,1.55) {Breach $\times$ causation: what the quadruple can express};
\node[font=\scriptsize,text=inkT] at (5.65,0.85) {causation: $\rho_p>0$};
\node[font=\scriptsize,text=inkT] at (9.15,0.85) {no causation: $\rho_p=0$};
\node[font=\scriptsize,text=inkP,rotate=90,anchor=center] at (3.55,-0.10) {breach};
\node[font=\scriptsize,text=inkP,rotate=90,anchor=center] at (3.55,-1.40) {no breach};
% cell fills first, then one clean grid on top
\fill[fillcert] (3.9,0.55) rectangle (7.4,-0.75);
\fill[fillsusp] (7.4,0.55) rectangle (10.9,-0.75);
\draw[inkA,line width=1.1pt] (7.4,0.55) rectangle (10.9,-0.75);
\draw[inkG,line width=0.6pt] (3.9,0.55) rectangle (10.9,-2.05);
\draw[inkG,line width=0.6pt] (7.4,0.55) -- (7.4,-2.05);
\draw[inkG,line width=0.6pt] (3.9,-0.75) -- (10.9,-0.75);
% cell texts (no boxes)
\node[align=center,text width=30mm] at (5.65,-0.10) {\textbf{culpable}\\ cited \& responsible\\ ($\rho_p>0$)};
\node[align=center,text width=30mm] at (9.15,-0.10) {\textbf{cited, exonerated}\\ negligent-but-inert};
\node[align=center,text width=30mm] at (5.65,-1.40) {\textbf{involved, not culpable}\\ e.g.\ a faithful conduit\\ (Thm.~\ref{thm:grounded})};
\node[align=center,text width=30mm] at (9.15,-1.40) {\textbf{cleared}\\ no breach, no cause};
% one-line note under the table
% \node[font=\tiny\itshape,text=inkA] at (7.4,-2.45)
%   % {the amber cell, breach without causation, is expressible here (Thm.~\ref{thm:exon}) and impossible for a single-culprit label};

\end{tikzpicture}}
\caption{\textbf{The verdict schema and what it can express.} \textit{Top.} \audita{}'s output
is not a single culprit but a typed verdict per principal $p$: a duty bit, a
breach label assessed under two standards, a graded responsibility interval
with receipts, and conduit or exoneration flags. Each leg is computed
from a concrete record artifact and a formal object: duty from the
machine-readable delegation and duty registry $D$; breach from the receipts
$R(p)$ tested against both a specification standard $\sigma_{\mathrm{spec}}$ and
a reasonable-agent standard $\sigma_{\mathrm{ra}}$; causation from the graded
score $\rho_p=1/(|X^\ast|+|W^\ast|)$ over the minimal cause set $X^\ast$ and
witness set $W^\ast$, certified by replay (Eq.~\eqref{eq:rho}); and harm from
the recorded terminal state and the declared severity of $Y$. \textit{Bottom.} Because breach
and causation are separate axes, the quadruple expresses cases a single-culprit
label cannot: a principal in breach but not a cause is cited and
simultaneously exonerated for the outcome, the negligent-but-inert principal,
while a principal that is a cause but in no breach is reported as involved yet
never convicted, faithful relays carrying the \emph{conduit} flag; that
separation is what Theorem~\ref{thm:grounded} guarantees.}
\label{fig:verdict}
\end{figure}

The transcript-identical twin isolates what the certified evidence buys: the
redundancy and inert-twin structures present bitwise-identical records and
differ only in whether the second command's effect physically reached the
world, so every transcript-level method returns one verdict for both and is
right on one structure and wrong on the other, while the certified graded
engine separates them in all draws. The structures also compose: on a sampled
distribution of $1{,}002$ incidents drawing redundancy width, delegation
depth, omission placement, and the adverse predicate independently ($57.8\%$
adverse), \audita{} attains error $0.000$ and exact cause-set match $1.000$ on
the $579$ adverse draws, against single-site's $0.370$.

\emph{Negative controls.} The most falsifying test is the accident with no
culprit. On $300$ no-cause draws \audita{} manufactures a culprit in zero
cases. On the same incidents the anomaly detector manufactures one in
every case, and language-model judges name someone in $100\%$ of runs across
all three formats, under a neutral prompt that explicitly offers ``NONE''. A
judge shown a genuine accident always blames a principal; \audita{} does not.

\emph{Generalisation.} With the engine, oracle and experimental arms frozen, we conducted an out-of-loop evaluation on an unexpected-startup structure, the modal fatal pattern in the accident
corpus\cite{layne2023robot}: a worker is struck during maintenance because
the pre-entry halt was never issued, so every command is duty-compliant and
the sole breach is the omission. On $74$ held-out incidents \audita{} is
exact ($0.000$) and blames the omission alone; single-site pins the innocent
sole actor every time, and judges convict an innocent in $67$--$100\%$ of
runs unless the silent duty-holder is offered as a candidate. \audita{}
needs no such offering: the culprit is the principal who did nothing.

\subsection*{Ablations and analysis}
Removing the duty layer breaks the structures where involvement and
culpability diverge (redundancy and inert twin: error $0.667$ in
Table~\ref{tab:scenario_results}; live error rises from $0.170$ to $0.408$),
while the production gate is load-bearing on preemption and the inert twin,
where it retires commands whose effects never reached the outcome. The
deployable breach standard matches the
gold-informed one on live data ($0.159$ versus $0.170$). This confirms that the
completeness barrier (Theorem~\ref{thm:barrier}) bounds what an auditor can
\emph{certify}.

Verdicts are relative to the
registered duty standard: holding causal facts fixed and varying only the
standard leaves the causal profile identical but flips
culpability in $69.8\%$ of them, \textbf{so a verdict must ship with its standard
attached}, which no prior method does because none has one.

The advantage
transfers across model families, a three- to fivefold gap over the judge on
both Qwen2.5-7B and Llama-3.1-8B (Figure~\ref{fig:results}a). The comparison against Shapley-value attribution,
the strongest baseline, depends on scale. On the
weakest model Shapley is the more accurate of the two ($0.041$ versus $0.111$). The graded
verdict overtakes it on the 8B and 72B models, and the margin widens with
model capability from $-0.070$ to $+0.037$ to $+0.053$. Shapley is also alone
among the four baselines in carrying no duty verdict, no certificate, and no
resistance to the record forgery (below).

% Three further measurements bound what the method
% costs and what it misses. Exact recovery holds at every tested incident width
% while compute doubles per added candidate (Extended Data
% Table~\ref{tab:cost}), so the limit is computational rather than statistical,
% and it falls near $k\approx8$ at a one-second budget. The record's perimeter is
% measured: when $31\%$ of coordination is deliberately moved
% off the recorded bus, $0.157$ of responsibility becomes invisible, yet
% over-attribution stays $0.000$ across all $1{,}200$ draws, so what the auditor
% loses is coverage and never soundness. Repair guided by the verdict averts
% $2.4\times$ the physical harm of a budget-matched random fix at a single edit. Finally, replaying a served language model is
% bitwise exact only when the replays are executed one at a time: a parallelised
% scoring pass once fabricated an innocence violation purely through
% floating-point drift in the batched reduction order. Serial replay eliminates
% it, so serial execution is part
% of the pinned-stack definition, not an optimisation. The deterministic
% facility benchmark is immune by construction.

% audita-fig-results.tex — Fig. 3: scaling + adversarial robustness (two panels).
% Rendered by scripts/gen_fig3.py -> audita-fig3-results.pdf (data from archived artifacts).
\begin{figure}[t]
\centering
\includegraphics[width=\textwidth]{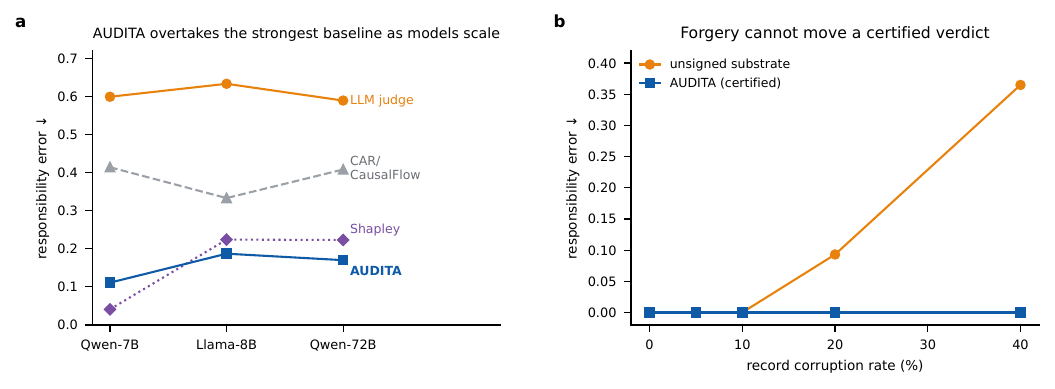}
\caption{\textbf{Scaling and adversarial robustness.}
\textbf{a}, Four methods across model families (Qwen2.5-7B, Llama-3.1-8B,
Qwen2.5-72B). The gap over the LLM judge (all-at-once) and the single-site baseline
(CAR/CausalFlow) holds at every scale. Shapley-value attribution is the exception on the weakest model: on Qwen2.5-7B it attains
$0.041$ against \audita{}'s $0.111$, but \audita{} overtakes it on the 8B and 72B
models, its margin growing from $-0.070$ to $+0.037$ to $+0.053$ as model
capability rises. \textbf{b}, Under record forgery at rising corruption rates,
an unsigned substrate is driven to $0.365$ error and frames an innocent victim
in all probes, while the certified verdict stays exact.}
\label{fig:results}
\end{figure}

\subsection*{Robustness to tampering and manipulation}
An audit layer must survive an adversary. We test three attack classes:
record forgery and deletion (forgery summarised in
Figure~\ref{fig:results}b), volume manipulation, and insiders with
valid keys attacking through conduct.

\emph{Record attacks.} We replay the same forgeries and deletions against two
substrates: the certified record, and an unsigned copy that keeps every
message's text but drops the signatures and seals. On the certified
substrate every forged item fails verification and never enters the
certified evidence, so the verdict is bitwise unchanged (error $0.000$); on
the unsigned substrate the same forgeries drive error to $0.365$ and frame
an innocent victim in all $450$ probes. Deletion cannot be blocked, but the
verdict's output format absorbs it: \audita{} reports responsibility as an
interval over all completions of the record consistent with the surviving
evidence (Methods), so deleting certified evidence widens the interval rather than
shifting a single number. Measured across all $450$ deletion
cells, the widened interval always contains the true responsibility, the
verdict's error stays at most $0.122$, and every deletion is flagged with
its author. Erasure costs the record precision, never correctness, and never
silently exonerates.

\emph{Volume manipulation.} Flooding the log with an innocent principal's causally inert commands
cannot change \audita{}'s verdict: an inert command has no
production path into the outcome, so no amount of flooding ever creates a
candidate, and the flooded principal's grade stays zero by construction. The
baselines turn out to be equally immune: regressing the attribution mass
each method places on the flooded principal against the flooding level
yields a slope of $0.000$ in every arm. Volume manipulation is therefore a
dead end against certified attribution and baselines alike.

\emph{Conduct attacks by valid-key insiders.} The strongest adversary in the
threat model holds valid keys and never touches the record: it attacks
through its own signed conduct during the incident. We implemented the two
attack constructions of Methods and ran them over $600$ incidents. Both
succeed at the one thing no record machinery can prevent: the attacker makes
a fully compliant victim a genuine cause of the harm, raising the victim's
causal involvement to $1.0$ in all $600$, with every message verifying. What
the attacks cannot do is convict the victim. Its culpability stays $0.000$
in all $600$, because a culpability finding must be witnessed by the
accused's own certified breach (Theorem~\ref{thm:grounded}), and in every
incident the audit also emits a machine-checkable certificate naming the
attacker and grading its contribution to the framing
(Theorem~\ref{thm:blameshift}). We then attacked the certificate itself: a
construction that scapegoats a benign deviation convicts it in $0$ of $330$
draws while naming the true attacker in all $330$
(Theorem~\ref{thm:hierarchy}), and padding the analysis with inert decoys
changes no grade (Proposition~\ref{prop:dilution}).

\subsection*{Costs and limits}
\label{sec:A2}
The remaining measurements price what the method costs and what it cannot
see. Exact recovery holds at every tested incident width while compute
doubles per added candidate (Extended Data Table~\ref{tab:cost}), so the
limit is computational rather than statistical, and it falls near
$k\approx8$ at a one-second budget. The record's perimeter is measured:
when $31\%$ of coordination is deliberately moved off the recorded bus,
$0.157$ of responsibility becomes invisible, yet over-attribution stays
$0.000$ across all $1{,}200$ draws, so what the auditor loses is coverage
and never soundness. The verdict is also actionable: repair guided by it
averts $2.4\times$ the physical harm of a budget-matched random fix at a
single edit.

\subsection*{Comparability on the field benchmark}
\audita{}'s attribution advantage
requires counterfactual access, which live pipelines and the facility provide
but a static log does not. On the Who\&When benchmark\cite{zhang2025who}, whose
$184$ traces are static logs that cannot be re-executed, \audita{}'s
counterfactual and cryptographic machinery does not apply; what remains is a
duty- and structure-aware attributor that reads the trace alone. We evaluate
this degraded variant on Who\&When's own task and metric, against the same
model running the field's judge formats (Table~\ref{tab:whowhen}). It performs
comparably: agent-accuracy $0.408$ against the judge band of $0.41$--$0.51$
(the strongest judge format, step-wise, reaches $0.511$ and reproduces the
published $0.535$ within sampling error), and step-accuracy $0.321$, above two
of the three judge formats. We claim non-inferiority here: where the log cannot be
re-executed, \audita{} matches the field's methods, and its advantage appears
only where counterfactual access exists.

\begin{table}[t]
\centering
\caption{\textbf{Comparability on the Who\&When benchmark} ($184$ static
traces; the benchmark's own agent- and step-accuracy;
Qwen2.5-72B for every arm). \audita{}-structural is the degraded variant
that reads the static trace alone, without counterfactual replay or
cryptographic verification, matching what the benchmark provides. Stripped of its machinery,
\audita{}-structural stays within the judge band ($0.41$--$0.51$) on
agent-accuracy and exceeds two of the three formats on step-accuracy: where
the log cannot be re-executed, \audita{} matches the field, and its
advantage (Tables~\ref{tab:live} and~\ref{tab:scenario_results}) appears
exactly where counterfactual access exists.}
\label{tab:whowhen}
\small
\begin{tabular}{@{}lcc@{}}
\toprule
Method & Agent-accuracy $\uparrow$ & Step-accuracy $\uparrow$ \\
\midrule
LLM judge (all-at-once)\cite{zhang2025who} & $0.413$ & $0.255$ \\
LLM judge (step-wise)\cite{zhang2025who} & $0.511$ & $0.304$ \\
LLM judge (binary-search)\cite{zhang2025who} & $0.418$ & $0.337$ \\
\midrule
\audita{}-structural & $0.408$ & $0.321$ \\
\bottomrule
\end{tabular}
\end{table}

% audita-discussion.tex — trimmed for Nature MI.

\section*{Discussion}

\audita{} supplies what multi-agent systems lack: an evidentiary and causal
account of a concrete adverse event, produced by machinery in place and checkable by a hostile audience afterwards. Auditing the event
requires both an evidence substrate and a causal calculus, and the central
claim is that the two are useful only together: semantics without certified
evidence can be framed by whoever edits the log, and certified evidence without
causal semantics reproduces the single-culprit errors that redundancy,
preemption, and omission induce. The evaluation tests the two separately, so the composition claim is
falsifiable.

Together the results map a tight possibility frontier for post-incident
accountability: what the theorems forbid, attacks realise; what they permit,
the running system achieves. The impossibilities bind any record-confined
auditor: an attacker controlling the record can force widened uncertainty, a
valid-key insider can make an innocent principal a genuine cause of harm,
and no such auditor can certify all outcome-guilt without solving the
audited task (Theorem~\ref{thm:barrier}). The composition prevents
\emph{manufactured certainty} and \emph{silent innocence}: culpability moves
only on the accused's own certified conduct (Theorem~\ref{thm:grounded}),
shifted blame is graded by the calculus it abused
(Theorem~\ref{thm:blameshift}), and erased evidence yields flagged,
author-attributed uncertainty, never certified innocence
(Theorem~\ref{thm:exon}). The design does not assume the presumption of
innocence; it proves it for certified records. The barrier also predicts the
parity with gold-informed judges on culprit identification that the
field-benchmark comparison shows.

The guarantees are scoped by three assumptions, each paired with a measured
price of violation. Under a \emph{replayable stack} receipts are exact; on
hosted non-deterministic models they degrade to statistical corroboration, so
exact accountability favours replayable deployments; we read this as a fact
about accountability itself. Under a \emph{recorded channel} exonerations
are complete; coordination through side channels is invisible, so the verdict
clears a principal only ``on the certified evidence, to residual $r$,'' and we measure $r$ under
deliberate off-bus coordination instead of assuming it is small. Under
\emph{sound key custody} authorship equals conduct; the assumption concedes
nothing else, since insiders holding valid keys may be arbitrarily malicious,
and the theorems state what they still cannot achieve.

Two boundaries of the formalism remain. Incidents are discrete, a predicate
that becomes true at a time, so cumulative harms without a threshold event are
out of scope. And the quantitative evidence is simulated: the structures are
grounded in public accident reports and the framework is domain-agnostic, but
we do not yet demonstrate physical robots or production stacks.
Analytically-derived ground truth from an independent oracle, and scoring
against a re-executed world, are what make the simulated evidence meaningful; an
embodied simulator with contact dynamics and live planners is the natural next
step.

The path outward follows the regulation that motivated the design: the EU AI
Act requires that records exist\cite{euaiact2024}; \audita{} proposes what they
must \emph{be}, signed, cited, sealed, and causally analysable, for those
obligations to purchase accountability. The layer ports beyond factories to
logistics fleets, laboratory automation, service robots, and digital agent
economies, where the adverse outcome is a corrupted transaction. As AI brains take command of physical fleets, ``who did what, and
how much did it matter?'' will be asked with growing force, and \audita{} is
our proposal for answering it with evidence.

% audita-backmatter.tex — back matter (shared by main and combined files).
% ---------- back matter ----------
\section*{Data availability}
CulpaBench, its ground-truth specifications, the generated live corpora, and
all experiment artifacts (result files for every run, indexed by experiment
identifier and seed) are available at
\url{https://github.com/ZhixuDu/audita}. 

\section*{Code availability}
The complete \audita{} reference implementation, comprising the accountable
record substrate, the attribution engine, the scenario injectors, all
baselines, and the experiment runners, is available under the MIT license at
\url{https://github.com/ZhixuDu/audita}.

\section*{Author contributions}
Z.D. conceived the method, designed and performed the experiments, analysed
the results, and wrote the manuscript. Y.C. supervised the project and
reviewed the manuscript.

\section*{Competing interests}
The authors declare no competing interests.

% ===================== REFERENCES =====================
\bibliography{audita-refs}

% ===================== EXTENDED DATA =====================
\clearpage
% audita-extended-data.tex — Extended Data figures and tables (Nature MI).
% Overflow display items moved from the main text; up to 10 allowed.
\clearpage
\setcounter{figure}{0}\setcounter{table}{0}
\renewcommand{\thefigure}{\arabic{figure}}
\renewcommand{\thetable}{\arabic{table}}
\renewcommand{\figurename}{Extended Data Fig.}
\renewcommand{\tablename}{Extended Data Table}
\section*{Extended Data}

% audita-fig2.tex — Fig. 2: anatomy of one accountable message
\begin{figure}[H]
\centering
\begin{tikzpicture}[font=\scriptsize,
  row/.style={draw=inkG,fill=white,minimum height=5.2mm,text width=68mm,inner xsep=4pt,align=left},
  rowN/.style={draw=inkA,fill=fillA!60,minimum height=5.2mm,text width=68mm,inner xsep=4pt,align=left},
  stamp/.style={draw=inkT,fill=fillT,rounded corners=2pt,minimum width=21mm,minimum height=9mm,align=center,font=\scriptsize},
  lbl/.style={font=\scriptsize\itshape,inkG}]
\node[lbl] (cap) at (0,0.55) {a signed message $m=(a,\pi,C,\varphi,\sigma)$ (every field covered by $\sigma$)};
\node[row,below=1mm of cap.south,anchor=north] (f1) {\textbf{id} $m_{47}$ \quad\textbf{author} $a=\text{planner-A}$ \quad\textbf{dst} robot-3 \quad\textbf{t} $1024.6$};
\node[row,below=0.6mm of f1] (f2) {\textbf{payload} $\pi$ \;\texttt{move(aisle\_2, v=0.8)}};
\node[rowN,below=0.6mm of f2] (f3) {\textbf{cites} \;$C=\{m_{31}, m_{44}\}$ \hfill $\leftarrow$ authored lineage};
\node[rowN,below=0.6mm of f3] (f4) {\textbf{effect} \;$\varphi = $ commitment to the state change \hfill $\leftarrow$ replay checkpoint};
\node[rowN,below=0.6mm of f4] (f5) {\textbf{seed/model} \;$(\theta, \xi)$ \hfill $\leftarrow$ reproducible replay};
\node[row,below=0.6mm of f5,draw=inkP,fill=fillP] (f6) {\textbf{sig} \;$\sigma_{a}(a,\pi,C,\varphi)$};
\node[draw=inkP,line width=0.9pt,rounded corners=3pt,fit=(f1)(f2)(f3)(f4)(f5)(f6),inner sep=2pt] (env) {};
\node[lbl,below=1mm of env.south,text width=72mm,align=center] {amber fields are absent from standard agent logs and are exactly what the attribution engine consumes};
\begin{scope}[yshift=-52mm]
\node[stamp] (s1) at (-2.35,0)  {1. Write\\\emph{all fields}};
\node[stamp,right=3mm of s1] (s2) {2. Sign\\\emph{author key}};
\node[stamp,right=3mm of s2] (s3) {3. Cite\\\emph{or reject}};
\node[stamp,below=8mm of s1] (s4) {4. Receipt\\\emph{co-sign}};
\node[stamp,right=3mm of s4] (s5) {5. Seal\\\emph{Merkle epoch}};
\node[stamp,right=3mm of s5] (s6) {6. Store\\\emph{content-addressed}};
\draw[-{Stealth[length=1.6mm]},inkG] (s1)--(s2);
\draw[-{Stealth[length=1.6mm]},inkG] (s2)--(s3);
\draw[-{Stealth[length=1.6mm]},inkG] (s3.south) .. controls +(0,-4mm) and +(0,4mm) .. (s4.north);
\draw[-{Stealth[length=1.6mm]},inkG] (s4)--(s5);
\draw[-{Stealth[length=1.6mm]},inkG] (s5)--(s6);
\draw[-{Stealth[length=1.6mm]},inkA,dashed] (s4.west) .. controls +(-6mm,0) and +(-6mm,0) .. (s1.west)
  node[midway,left,font=\tiny,text=inkA]{ack};
\node[lbl,below=2mm of s5.south,text width=70mm,align=center] {six stamps per message; the receipt (dashed) makes a dropped message detectable};
\end{scope}
\end{tikzpicture}
\caption{\textbf{Anatomy of one accountable message and its lifecycle.} Top:
the fields of a single message (Definition~\ref{def:graph}); the three amber
fields, the authored citation set $C$, the committed effect predicate
$\varphi$, and the pinned seed/model identifiers, are absent from standard
agent-observability logs and are exactly what gate, grade, and replay
consume. Bottom: the six stamps every message collects on the bus. Citation
and receipt are enforced (an uncited command is rejected; every delivery is
acknowledged), so the recorded graph is correct by construction and silent
drops become detectable, which is what moves deletions into the missing set
of Proposition~\ref{prop:recordedit} rather than out of history.}
\label{fig:anatomy}
\end{figure}

% audita-fig3.tex — Fig. 3: the two-layer attribution engine
\begin{figure}[H]
\centering
\resizebox{\textwidth}{!}{%
\begin{tikzpicture}[font=\scriptsize,
  io/.style={draw=inkG,fill=white,rounded corners=1pt,minimum width=24mm,minimum height=9mm,align=center,font=\scriptsize},
  intern/.style={draw=inkP,fill=fillP,rounded corners=2pt,minimum width=25mm,minimum height=9mm,align=center},
  internB/.style={draw=inkA,fill=fillA,rounded corners=2pt,minimum width=25mm,minimum height=9mm,align=center},
  eng/.style={draw,dashed,rounded corners=4pt,inner sep=3mm},
  fl/.style={-{Stealth[length=2mm]},inkG,line width=0.5pt}]

% Engine A internals
\node[intern] (a1) at (0,0)   {discretise slice\\\footnotesize $\to$ structural model};
\node[intern] (a2) at (3.1,0) {exact search over\\\footnotesize gated candidates};
\node[intern] (a3) at (6.2,0) {enumerate minimal\\\footnotesize $(X^\ast,W^\ast)$ pairs};
\draw[fl] (a1)--(a2); \draw[fl] (a2)--(a3);
\node[eng,draw=inkP,fit=(a1)(a2)(a3),label={[text=inkP,font=\footnotesize\bfseries]above:{Engine A: surrogate proposer}}] (EA) {};

% Engine B internals
\node[internB] (b1) at (9.6,0)  {freeze $W^\ast$ at\\\footnotesize recorded values};
\node[internB] (b2) at (12.7,0) {re-execute: seeded\\\footnotesize models + simulator};
\node[internB] (b3) at (15.8,0) {risk drop $\geq\delta$?\\\footnotesize exact / CI + match};
\draw[fl] (b1)--(b2); \draw[fl] (b2)--(b3);
\node[eng,draw=inkA,fit=(b1)(b2)(b3),label={[text=inkA,font=\footnotesize\bfseries]above:{Engine B: replay certifier}}] (EB) {};

% inputs / outputs row
\node[io] (ai) at (0,-2.3) {\textbf{in:} certified\\subgraph $\Gcert$};
\node[io,draw=inkP,minimum width=30mm] (ao) at (6.2,-2.3) {\textbf{out:} candidates +\\$\rho=1/(|X^\ast|{+}|W^\ast|)$};
\node[io,minimum width=30mm] (bi) at (9.9,-2.3) {\textbf{in:} $(X^\ast,W^\ast)$, $\varphi$\\checkpoints, seeds};
\node[io,draw=inkA,minimum width=28mm] (bo) at (15.8,-2.3) {\textbf{out:} certified causes,\\risk curves, receipts};
\draw[fl] (ai)--(a1);
\draw[fl] (a3)--(ao);
\draw[fl] (bi)--(b1);
\draw[fl] (b3)--(bo);
\draw[-{Stealth[length=2mm]},inkP,line width=0.8pt] (ao.east) -- node[above,font=\scriptsize\bfseries,text=inkP]{propose} (bi.west);

% refine loop
\draw[-{Stealth[length=2mm]},inkA,dashed,line width=0.8pt]
  (EB.north) .. controls +(-2,1.0) and +(2,1.0) .. (EA.north)
  node[midway,above,font=\scriptsize\itshape,text=inkA]{refine: failed candidate $\Rightarrow$ split abstraction, re-propose};

\node[text=inkG,align=left,text width=34mm,font=\scriptsize] at (0.4,-3.5)
  {grading is definitional:\\smaller cause--witness set $\Rightarrow$ higher $\rho$};
\end{tikzpicture}}
\caption{\textbf{The two-layer attribution engine.} \emph{Engine~A} works on
a discrete structural abstraction of the certified subgraph: an exact search
over the gated candidates enumerates \emph{minimal} cause sets with their
witness sets, so the responsibility grade
$\rho=1/(\lvert X^\ast\rvert+\lvert W^\ast\rvert)$ of
Eq.~\eqref{eq:rho} is produced directly by the enumeration. \emph{Engine~B}
certifies each proposal by re-execution: witnesses frozen at recorded values,
seeded models and simulator rolled forward from the $\varphi$ checkpoints,
and the risk reduction checked against the margin $\delta$, exactly on a
pinned stack, with a confidence interval and action-match score on a hosted
one. A failed proposal is not reported; it triggers abstraction refinement.
The split keeps the worst-case-hard search tractable on incident slices while
keeping the verdict sound: Engine~B never trusts Engine~A's abstraction
(Proposition~\ref{prop:sound}).}
\label{fig:engines}
\end{figure}

\begin{table}[H]
\centering
\caption{\textbf{CulpaBench, the causal-attribution benchmark.} Each structure carries
planted graded ground truth and probes a distinct attribution challenge.
Complete generators and the analytic oracle are in the Supplementary
Information.}
\label{tab:scenarios}
\small
\begin{tabular}{@{}lll@{}}
\toprule
Structure & Graded ground truth & Attribution challenge it probes \\
\midrule
Redundancy & $\rho=\tfrac12,\tfrac12$; two certified causes & overdetermination \\
Inert twin & one cause $\rho{=}1$; twin: breach, $\rho{=}0$ & transcript-identical evidence \\
Preemption & preempted command exonerated & effect that never arrived \\
Omission & absence variable graded; omitter blamed & responsibility for inaction \\
Delegation chain & chain graded; faithful relay flagged & originator versus conduit \\
Unexpected startup & omitted halt graded; sole actor innocent & the culprit who did nothing \\
Record attack & verdict per Proposition~\ref{prop:recordedit} & tampered evidence \\
No-cause control & universal exoneration & manufacturing a culprit \\
Trivial control & one culprit, $\rho{=}1$ & the single-culprit floor \\
\bottomrule
\end{tabular}
\end{table}

\begin{table}[H]
\centering
\caption{\textbf{Exactness holds; the cost is computational} (facility
register). Responsibility error stays $0.000$ at every tested incident width
$k$, while wall-clock per incident-slice query roughly doubles per added
candidate. Record overhead is modest: $393$\,B per signed message and
$118$\,KB per complete incident.}
\label{tab:cost}
\small
\begin{tabular}{@{}lcccc@{}}
\toprule
Interacting candidates $k$ & $2$ & $8$ & $12$ & $14$ \\
\midrule
Responsibility error $\downarrow$ & $0.000$ & $0.000$ & $0.000$ & $0.000$ \\
Wall-clock per query (s) & $0.02$ & $0.92$ & $15.0$ & $62.1$ \\
\bottomrule
\end{tabular}
\end{table}

% ===================== SUPPLEMENTARY INFORMATION =====================
\clearpage
\beginsupplement
% audita-si-body.tex — SI content, shared by the standalone SI and the combined review file.

\begin{center}
{\LARGE\bfseries Supplementary Information\par}
\end{center}
\vspace{0.8em}

\noindent This document contains the formal model and notation
(Note~\ref{snote:model}), proofs of the record-layer results together with two
supporting propositions (Note~\ref{snote:proofs}), the guarantees against
valid-key adversaries (the two attack constructions, culpability groundedness,
blame-shift accountability, the completeness barrier, and exoneration
accountability, with full proofs; Note~\ref{snote:adversarial}), the
framing attacks as implemented and what their success does not show
(Note~\ref{snote:framingimpl}), the scenario library with planted ground truth
and the analytic oracle (Note~\ref{snote:scenarios}), 
% the registered predictions and the outcomes
% the runs assigned them (Note~\ref{snote:predictions}), 
the canonical form and
granularity-invariance result (Note~\ref{snote:canonical}),
record-substrate and replay details (Note~\ref{snote:substrate}),
reproducibility information (Note~\ref{snote:repro}),  experimental deviations and validation (Note~\ref{snote:disclosures}), and an
extended survey of related work across every line of literature this project
engaged (Note~\ref{snote:extrelated}). Citations are numbered superscripts resolved
against the reference list of this document, which the supplementary notes
share with the article and which precedes the Extended Data; main-text
equation numbers are written as ``Eq.~(1) of the main text.''

\section{Notation and formal model}
\label{snote:model}

\paragraph{Record and verification.}
A message is $m=(a,\pi,C,\varphi,\sigma)$ as in Definition~1 of the main
text: author $a$, payload $\pi$, citation set $C$ of message hashes, effect
predicate $\varphi$, and signature $\sigma$ over the preceding fields. The
record $R$ is the multiset of items held by the recorder, together with
delivery receipts (co-signatures acknowledging receipt) and per-epoch Merkle
commitments over the hashes of sealed items. The verification predicate
$V(m;R)$ holds when (i) $\sigma$ verifies under $a$'s public key, (ii) every
hash in $C$ resolves to an item of $R$ satisfying $V$ (well-founded because
citations point backwards in sealed order), (iii) receipts referencing $m$
are consistent, and (iv) $m$'s hash carries a valid inclusion proof in its
epoch commitment. This induces the partition of $R$ into the certified graph
$\Gcert(R)=\{m : V(m;R)\}$ with citation edges, the \emph{suspect} set
(present, failing some check), and the \emph{missing} set
\begin{equation}
\begin{split}
M(R) \;=\; \{\, h \;:\;{}& h \text{ is cited by some } m\in\Gcert(R),
\text{ or appears in a receipt}\\
& \text{or a sealed epoch commitment, and no item of } R \text{ hashes to }
h \,\}.
\end{split}
\end{equation}

\paragraph{Slice model.}
For a declared adverse predicate $Y$ true at $t^{\ast}$, the incident slice
$S(Y)$ is the citation-ancestry of the actuation events referenced by $Y$
within $\Gcert$, closed under the duties active in the window. The slice
model $\mathcal{M}_S$ is a structural causal model whose variables are (a)
one binary presence variable per certified message in $S(Y)$, (b) one binary
\emph{absence} variable per active duty whose required message was not
issued (recorded as a first-class fact by the duty registry), and (c) the
outcome $Y$; mechanisms are induced by the citation structure and the
recorded actuation semantics. Messages excluded by the production gate
(main text, Methods) do not enter $\mathcal{M}_S$; as a consequence of this
encoding convention, preemption is resolved at the evidence layer, and the
graded model over gated candidates is \emph{monotone} in all scenario
families of this paper: every mechanism is nondecreasing in the presence
variables, with omissions represented positively by their absence variables.

\paragraph{Attribution and completions.}
On a fixed model, the modified Halpern--Pearl test\cite{halpern2015,halpernpearl2005} and the grades of
Eqs.~(1)--(2) of the main text define $\rho_p(\mathcal{M})$ for each
principal $p$. To account for imperfect records, \audita{} evaluates
$\rho_p$ over \emph{admissible completions}: assignments
$c:M(R)\to\{\text{present},\text{absent}\}$ of the missing set that are
consistent with the verified receipts, commitments, and the recorded
actuation facts (in particular the occurrence of $Y$), each
inducing a model $\mathcal{M}_c$ (a missing node participates through the
citation edges declared by the certified messages that cite it; suspect
items are treated as absent for grading and reported separately). Writing
$\mathcal{C}(R)$ for the set of admissible completions,
\begin{equation}
\label{seq:interval}
\underline{\rho}_p(R) \;=\; \min_{c\in\mathcal{C}(R)}
\rho_p(\mathcal{M}_c),
\qquad
\overline{\rho}_p(R) \;=\; \max_{c\in\mathcal{C}(R)}
\rho_p(\mathcal{M}_c),
\end{equation}
and the reported interval is $\rhointerval_p$. The lower end
$\underline{\rho}_p$ is the \emph{involvement bound}: the causal involvement that
every reading of the evidence consistent with the certified record must
concede. Involvement alone is never culpability: the verdict finds a principal
culpable only on breach and causation together
(Note~\ref{snote:adversarial}).

\section{Proofs}
\label{snote:proofs}

\subsection*{Assumptions}

\begin{assumption}[Unforgeable signatures]
\label{ass:sig}
The signature scheme is existentially unforgeable under chosen-message
attack; an adversary without $p$'s signing key cannot produce a new item
passing signature verification as $p$\cite{bernstein2012}.
\end{assumption}

\begin{assumption}[Collision-resistant hashing]
\label{ass:hash}
The hash used for citations, receipts, and Merkle commitments is
collision-resistant; an adversary cannot produce a second preimage for a
committed hash\cite{merkle1988}.
\end{assumption}

\begin{assumption}[Sealed epochs]
\label{ass:seal}
Attacks on the record occur against sealed epochs: every certified message
is covered by a published, append-only epoch commitment before the
attack\cite{merkle1988,laurie2014}. (Deletion inside an unsealed window is a compromise of the live
recorder, bounded operationally by epoch length and recorder replication;
it is outside the theorem's scope, as stated in the main-text threat model.)
\end{assumption}

\begin{assumption}[Key custody]
\label{ass:key}
A certified message is attributed to the holder of the signing key. An
insider signing with a valid key is outside the record-attack adversary
class: their items certify and are attributed to them.
\end{assumption}

\subsection*{Two lemmas}

\begin{lemma}[Verification soundness]
\label{lem:sound}
Let $R'$ be obtained from $R$ by injecting items that fail verification
(forged signatures, unresolvable citations, or invalid inclusion proofs).
Then $\Gcert(R')=\Gcert(R)$, $M(R')=M(R)$, and the suspect set grows by
exactly the injected items.
\end{lemma}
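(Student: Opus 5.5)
The plan is to prove the three claims of Lemma~\ref{lem:sound} directly from the definition of the verification predicate $V(m;R)$ in Note~\ref{snote:model}. The key observation is that $V$ is defined by well-founded recursion along citations, which point backwards in sealed order. We therefore show by induction on that order that, for every item $m$ already in $R$, $V(m;R')$ holds iff $V(m;R)$ holds. Each injected item $x$ fails some clause of $V$ by hypothesis: a forged signature violates (i), an unresolvable citation violates (ii), and an invalid inclusion proof violates (iv). So $x\notin\Gcert(R')$. By Assumption~\ref{ass:sig} and Assumption~\ref{ass:hash}, $x$ cannot collide with the hash of any original item, so it does not displace or duplicate one.

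The inductive step goes clause by clause for an original $m$.
\begin{itemize}
\item Clause (i) depends only on $m$ and its author's key, so it is unchanged.
\item Clause (iv) depends only on $m$'s hash and the sealed epoch commitment, which are fixed before the attack (Assumption~\ref{ass:seal}), so it is unchanged.
\item For clause (ii), every hash in $C(m)$ resolves in $R'$ to exactly the same item as in $R$. By collision resistance, no injected item can hash to a value cited by $m$ unless it equals the original. The induction hypothesis then gives the same verification status for that cited item.
\end{itemize}

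Clause (iii), receipt consistency, is the step I expect to be the main obstacle. An injected forged receipt referencing $m$ could a priori make $m$'s receipts look inconsistent. I would handle it by reading (iii) as quantifying only over receipts that themselves verify, namely those co-signed by the recipient. A forged receipt fails signature verification and is ignored, so it cannot disturb consistency. If the definition does not already say this, I would state the restriction explicitly as the intended reading.

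With all four clauses settled, we get $\Gcert(R')=\Gcert(R)$. The identity $M(R')=M(R)$ then follows by checking the three source conditions of $M$:
\begin{itemize}
\item Hashes cited by certified messages are unchanged, because $\Gcert$ is unchanged.
\item Hashes appearing in verified receipts are unchanged, by the same receipt convention, while injected receipts are unverified and excluded.
\item Sealed epoch commitments are unchanged.
\end{itemize}
In addition, the ``no item hashes to $h$'' condition is not newly falsified by any injected item, again by collision resistance.

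Finally, the suspect set of $R'$ equals $R'\setminus\Gcert(R')$ minus nothing new. This is the old suspect set plus exactly the injected items, since each injected item is present in $R'$ and fails $V$.
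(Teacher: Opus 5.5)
Your argument follows the paper's own proof. Injected items fail $V$ by hypothesis. An original item keeps its status because $V$ looks only at its own signature, the resolution of its citations, and its inclusion proof. $M$ is read off the certified side, so it is unchanged.

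You are more careful than the paper's short proof in two places. First, your well-founded induction along sealed order makes explicit the recursion hidden in its phrase ``resolvability of $m$'s citations to certified items.'' Second, you treat clause~(iii) and the receipt source of $M$, which the paper's proof omits. Counting only recipient-signed receipts is what its phrase ``referenced from the certified side'' presupposes.

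One step does not go through as written, and the paper's ``which is unchanged'' has the same hole. You claim, by collision resistance, that no injected item newly falsifies ``no item of $R'$ hashes to $h$.'' Collision (second-preimage) resistance only excludes a \emph{different} item carrying a committed hash; it does not exclude the genuine item itself.

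Here is a concrete case. Take a sealed item $g$ that cites $g_0$, with both absent from $R$ (say, deleted earlier), so $h_g\in M(R)$ through its epoch commitment. Inject the genuine $g$. It has a valid signature and inclusion proof, but under the literal clause~(ii) you use, its citation is unresolvable. So it fails $V$ and falls under the lemma's hypothesis. Yet an item of $R'$ now hashes to $h_g$, so $h_g\notin M(R')$ and $M(R')\neq M(R)$: the item has moved from missing to suspect. This matters downstream, because a missing item ranges over completions while a suspect one is graded as absent.

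The repair is to require injected items to be \emph{fresh}: none hashes to a hash that is cited, receipted, or committed in $R$. This is what ``forged'' means, and it is what the paper tacitly assumes when it says the adversary ``cannot make a new item pass.'' With freshness in place, two of your claims hold: the collision-resistance step, and the clause-(ii) claim that every cited hash resolves in $R'$ exactly as in $R$. The rest of your proof then stands.
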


\begin{proof}
Each injected item fails $V$ by construction (Assumptions
\ref{ass:sig}--\ref{ass:seal} guarantee the adversary cannot make a new item
pass: it cannot forge a signature, cannot mint a preimage for an existing
committed hash, and cannot fabricate an inclusion proof for an unsealed
item), so no injected item enters $\Gcert$. Membership of the original items
in $\Gcert$ is unaffected: $V(m;R)$ depends on $m$'s own signature, on the
resolvability of $m$'s citations to \emph{certified} items, and on $m$'s
inclusion proof, none of which is altered by the presence of additional
non-verifying items. $M$ collects unresolved hashes referenced from the
certified side, which is unchanged. \qedhere
\end{proof}

\begin{lemma}[Completion monotonicity]
\label{lem:mono}
Let $R'$ be obtained from $R$ by deleting a set $D$ of items. Then every
admissible completion of $R$ corresponds to an admissible completion of
$R'$ inducing the same slice model; hence
$\mathcal{C}(R)\hookrightarrow\mathcal{C}(R')$ model-preservingly, and
$\mathcal{C}(R')$ may in addition contain completions with no counterpart
in $\mathcal{C}(R)$.
\end{lemma}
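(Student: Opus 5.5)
The plan is to build an explicit map $\iota:\mathcal{C}(R)\to\mathcal{C}(R')$ and check that it preserves both admissibility and the induced slice model. First we record how deletion acts on the partition. Fix the deleted set $D$ and split it as $D=D_{\mathrm{c}}\cup D_{\mathrm{s}}\cup D_{\mathrm{o}}$: certified items, suspect items, and any other items. By Assumption~\ref{ass:seal}, every item of $D_{\mathrm{c}}$ has its hash in a published epoch commitment. By the enforced-citation and receipt discipline, its hash is also referenced by any certified citer or receipt. So after deletion each $h\in\mathrm{hash}(D_{\mathrm{c}})$ is still referenced from the commitment but resolves to no item, and it therefore lies in $M(R')$. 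Surviving items that cited a deleted certified item now fail clause (ii) of $V$. They move either to the suspect set or, if they too are deleted, into $M(R')$. Items that did not depend on $D$ keep their status, by the same locality argument as in Lemma~\ref{lem:sound}. Deleting suspect items changes nothing relevant, since suspect items are graded as absent in either record. The first step thus yields the inclusion $M(R')\supseteq M(R)\cup\mathrm{hash}(D_{\mathrm{c}})$, with possible further additions for certified descendants that lost their certification.

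Next we define $\iota$. Given an admissible completion $c$ of $R$, we extend it to $M(R')$ by setting $\iota(c)(h)=c(h)$ on $M(R)$ and $\iota(c)(h)=\text{present}$ for every hash of a deleted certified item. For a surviving item demoted because its citation frontier vanished, we restore the presence and edges it had in $R$. We then check admissibility of $\iota(c)$ clause by clause: it must agree with the verified receipts, the epoch commitments, and the recorded actuation facts, including the occurrence of $Y$. The receipts and commitments of $R'$ are a subset of those of $R$, since deletion creates no new verified constraint, and $c$ already satisfied the constraints of $R$. The restored items satisfy them because they actually occurred.

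Then we check model preservation. In $\mathcal{M}_{\iota(c)}$, every reinstated missing node participates through the citation edges declared by the certified messages that cite it, which are the same edges it had in $\Gcert(R)$. The presence variables take the same values, the absence variables come from the duty registry and are unaffected, and the production gate sees the same paths. Hence $\mathcal{M}_{\iota(c)}=\mathcal{M}_c$. Injectivity of $\iota$ is immediate from restriction to $M(R)$. The final clause, that $\mathcal{C}(R')$ can contain extra completions, needs only an example: assigning \emph{absent} to a deleted certified item is admissible in $R'$ whenever no surviving verified receipt pins it, and it has no counterpart in $\mathcal{C}(R)$.

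The main obstacle is the demoted descendants: surviving certified items that fall to \emph{suspect} because a cited parent was deleted. Suspect items are treated as absent for grading, so they would break model preservation unless the completion can restore them. The first approach I would try is to read the definition of completions as ranging over the \emph{closure} of the missing set under citation dependence. Concretely, an item that fails $V$ only because its citations are unresolved, while its own signature and inclusion proof are intact, is treated as conditionally certified: it becomes present exactly when its missing ancestors are completed as present. If that reading is not available, I would fall back on strengthening the deletion class, as the main text already does, so that deletion of an item removes its dependents into $M$ as well. That would make every affected item missing rather than suspect, and the map $\iota$ goes through unchanged.
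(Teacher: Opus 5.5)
Your map $\iota$ (extend $c$ by \emph{present} on the hashes of deleted certified items) matches the paper's proof. So do the model-preservation argument through the edges declared by a reinstated node's certified citers, the observation that deleting suspect items changes nothing, and the strictness witness (a deleted certified item completed as \emph{absent}). Where you depart, the proposal does not close. Your $\iota$ also ``restores'' surviving items demoted to suspect, which no completion can do: completions are assignments on $M(R')$ only, and suspect items are graded absent. You then leave the choice between two repairs open.

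You are right that a literal reading of verification clause (ii) would demote the surviving citers of a deleted item, and the paper's proof passes over this silently. The missing observation is that the paper's own definitions rule that reading out. Read literally, (ii) makes the first disjunct in the definition of $M(R)$ (hashes cited by some $m\in\Gcert(R)$ to which no item of $R$ hashes) empty. Yet the formal model has a missing node participate ``through the citation edges declared by the certified messages that cite it''. The proof of Proposition~\ref{prop:recordedit} also asserts that deleting other items cannot invalidate $p$'s citations. So (ii) is to be read as ``every cited hash resolves to a certified item or lies in the missing set''. A certified citer of a deleted sealed item then stays in $\Gcert(R')$ and nothing is demoted. Your restoration clause becomes unnecessary, and what remains is the paper's argument; this is why the paper can speak of $d$'s certified citers in $R'$.

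Your ``conditionally certified'' reading does give the embedding, since $\iota$ completes every deleted ancestor as present. But it is not the paper's semantics, and it changes what deletion does. Take the redundancy example used to explain why only the lower end is protected. There, deleting $a_2$ leaves an admissible completion with $a_2$ absent and the dispatcher that cites it still present, which is what widens $a_1$'s interval to $[\tfrac12,1]$. Under your reading the dispatcher, and everything downstream of it, vanishes with $a_2$; $Y$ is no longer produced, and that completion becomes inadmissible.

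Drop the fallback. Restricting to deletion sets closed under citation descendants proves the lemma only for a subclass of edits. Proposition~\ref{prop:recordedit} and Theorem~\ref{thm:exon} invoke it for arbitrary sealed deletions, and the main text nowhere strengthens the deletion class in this way.
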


\begin{proof}
Consider first a deleted item $d\in\Gcert(R)$. By
Assumption~\ref{ass:seal}, $d$'s hash remains in its published epoch
commitment (and, if $d$ was cited or receipted, in those references), so
$d\in M(R')$: the deletion is detectable and $d$ becomes a missing node
rather than vanishing from the analysis. For any completion
$c\in\mathcal{C}(R)$, define $c'\in\mathcal{C}(R')$ by $c'(h)=c(h)$ on
$M(R)$ and $c'(\mathrm{hash}(d))=\text{present}$ for each $d\in
D\cap\Gcert(R)$. Since a present missing node participates through exactly
the citation edges declared by its certified citers (the edges
$d$ carried when certified), $\mathcal{M}_{c'}$ over $R'$ equals
$\mathcal{M}_{c}$ over $R$. Deleting a suspect item changes neither
$\Gcert$ nor $M$ nor any $\mathcal{M}_c$. Completions of $R'$ assigning
\emph{absent} to some deleted certified item are new members of
$\mathcal{C}(R')$ with no counterpart in $\mathcal{C}(R)$, which
establishes the (possibly strict) enlargement. \qedhere
\end{proof}

\subsection*{Proposition 3 (record-edit monotonicity)}

\begin{thmrestate}
Let $p$ be a principal all of whose messages verify, and let an adversary
modify the record by (i) injecting items that fail verification and/or (ii)
deleting items. Then the involvement bound $\underline{\rho}_p$ does not
increase. Under (i) alone, $\Gcert$, and hence the entire reported
verdict, is unchanged; under (ii), the reported interval $\rhointerval_p$
can only widen.
\end{thmrestate}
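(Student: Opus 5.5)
The plan is to handle the two edit classes separately and then compose them, reducing everything to the two lemmas already established.

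\emph{Injection (i).} By Lemma~\ref{lem:sound}, an injection leaves $\Gcert(R')=\Gcert(R)$ and $M(R')=M(R)$, and only enlarges the suspect set. The slice $S(Y)$, the production gate, the duty-indexed absence variables and the slice model $\mathcal{M}_S$ are all functions of $\Gcert$ together with the registered duties, and suspect items are graded as absent. So the admissible completion set $\mathcal{C}(R')$ coincides with $\mathcal{C}(R)$, and each completion induces the same $\mathcal{M}_c$.

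One admissibility constraint does read the receipts and commitments. I will argue that only \emph{verified} receipts and commitments enter it, and injected items are by hypothesis non-verifying, so the constraint is unchanged. Consequently $\rho_p(\mathcal{M}_c)$ is unchanged for every $c$, both ends of Eq.~\eqref{seq:interval} are unchanged, and the whole reported verdict is unchanged. This includes the breach findings, since content-determined duties read only certified conduct.

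\emph{Deletion (ii).} Here I invoke Lemma~\ref{lem:mono}: it gives a model-preserving injection $\iota:\mathcal{C}(R)\hookrightarrow\mathcal{C}(R')$ with $\mathcal{M}_{\iota(c)}=\mathcal{M}_c$. Then
\[
\underline{\rho}_p(R')=\min_{c'\in\mathcal{C}(R')}\rho_p(\mathcal{M}_{c'})\le\min_{c\in\mathcal{C}(R)}\rho_p(\mathcal{M}_{\iota(c)})=\underline{\rho}_p(R),
\]
and symmetrically $\overline{\rho}_p(R')\ge\overline{\rho}_p(R)$. Together these give $\rhointerval_p(R)\subseteq\rhointerval_p(R')$.

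For the general case, where an adversary may both inject and delete in some order, I intend to commute the operations. Deletion of suspect items is a no-op on $\Gcert$, $M$ and every $\mathcal{M}_c$, so I will first apply Lemma~\ref{lem:sound} to strip the injections and then Lemma~\ref{lem:mono} to the remaining deletions, which yields the stated inequality.

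The \textbf{main obstacle} is making sure that nonemptiness and admissibility survive the edits. For the injection case, I must check that admissibility, and in particular consistency with the occurrence of $Y$, depends only on verified material. For the deletion case, I must check that the image $\iota(c)$ really is admissible in $R'$: assigning every deleted certified item \emph{present} must stay consistent with the surviving receipts and commitments. That follows because those facts were consistent with $R$, and, by Assumption~\ref{ass:seal}, deletion removes no commitment.

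I will also note where the hypothesis that $p$ is fully certified enters. It guarantees that $p$'s own messages are not reclassified by injection. Under deletion, any deleted message of $p$ becomes a missing node carrying $p$ as author, so $\rho_p$ remains well defined on every completion and the min/max comparison is over the same quantity.
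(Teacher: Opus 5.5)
Your proposal is correct and is essentially the paper's proof. Under injection, Lemma~\ref{lem:sound} fixes $\Gcert$ and the missing set, hence the completion set and every reported quantity. Under deletion, the model-preserving embedding of Lemma~\ref{lem:mono} places $\mathcal{C}(R)$ inside $\mathcal{C}(R')$, so the minimum can only fall and the maximum can only rise. Your reordering of all injections before all deletions restates the paper's remark that composing an equality with a non-increase, in any order, gives a non-increase, and the admissibility checks you flag are the ones the paper folds into the statement of Lemma~\ref{lem:mono}.
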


\begin{proof}
Under (i), Lemma~\ref{lem:sound} gives $\Gcert(R')=\Gcert(R)$ and
$M(R')=M(R)$, so $\mathcal{C}(R')=\mathcal{C}(R)$ and every reported
quantity, including $\rhointerval_p$, is bitwise identical. Under (ii),
$p$'s own messages remain certified ($p$ is fully certified and deletion of
\emph{other} items cannot invalidate $p$'s signatures, citations to
certified items, or inclusion proofs; deletion of one of $p$'s own items
moves it to $M$ by Lemma~\ref{lem:mono}, where completions may restore it).
By Lemma~\ref{lem:mono}, $\mathcal{C}(R)$ embeds model-preservingly into
$\mathcal{C}(R')$, so
\[
\underline{\rho}_p(R') \;=\; \min_{c'\in\mathcal{C}(R')}
\rho_p(\mathcal{M}_{c'})
\;\le\;
\min_{c\in\mathcal{C}(R)} \rho_p(\mathcal{M}_{c})
\;=\; \underline{\rho}_p(R),
\]
a minimum over a superset. The same embedding gives
$\overline{\rho}_p(R')\ge\overline{\rho}_p(R)$, i.e.\ widening. Composing
(i) and (ii) in any order composes an equality with a non-increase.
\qedhere
\end{proof}

\begin{sremark}[Why the guarantee is stated for the lower end]
\label{srem:upper}
Deletion \emph{can} raise $\overline{\rho}_p$, and should. In the redundancy
the redundancy family ($Y$ true if either of two independently sufficient commands
$a_1,a_2$ executes; ground truth $\rho=\tfrac12,\tfrac12$), deleting $a_2$
from the record leaves a completion in which $a_2$ is absent; in that
reading $a_1$ is the sole cause with $\rho=1$. \audita{} then reports
$\rhointerval_{a_1}=[\tfrac12,1]$: the evidence no longer excludes sole
responsibility, and pretending otherwise would be dishonest. What the
proposition guarantees is that no such attack moves the \emph{involvement bound}:
the certified evidence still forces only $\tfrac12$, the widened upper end
is explicitly labelled as uncertainty created by missing evidence, and a
consumer applying an in-dubio-pro-reo standard is unaffected. This is the
formal sense in which the design privileges the presumption of innocence.
\end{sremark}

\subsection*{Proposition 1 (soundness by certification)}

\begin{proprestateA}
Every cause reported by \audita{} is replay-verified: intervening on
$X^{\ast}$ with $W^{\ast}$ frozen at recorded values reduces the certified
risk of $Y$ by at least $\delta$: exactly on a pinned stack, and at
confidence $1-\alpha$ with reported action-match on a hosted stack.
Certified false positives are excluded on pinned stacks and occur with
probability at most $\alpha$ per reported cause on hosted ones.
\end{proprestateA}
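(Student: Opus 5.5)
The plan is to prove the proposition by reading it off the reporting rule of the two-layer engine (Extended Data Fig.~\ref{fig:engines}), not by analysing Engine~A. The key structural fact is that the output set is a subset of the proposals that pass Engine~B's test. A pair $(X^{\ast},W^{\ast})$ is emitted only if Engine~B intervenes on $X^{\ast}$, freezes $W^{\ast}$ at its recorded values, re-executes the slice, and observes a risk reduction of at least $\delta$. Failed proposals are discarded, and may trigger refinement, but are never reported.

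Step one is therefore a one-line invariant. For every reported cause, the replay predicate $\mathrm{Pass}_\delta(X^{\ast},W^{\ast})$ held at the time of reporting. This holds whatever Engine~A proposed, so the soundness of the abstraction is never used. This matters because the structural surrogate may be wrong, and the argument must not depend on it.

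Step two splits into the two stack regimes.

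On a pinned stack, A1 and the serial replay protocol make re-execution bitwise deterministic. The observed replay outcome therefore \emph{is} the counterfactual outcome of the intervention. $\mathrm{Pass}_\delta$ thus certifies exactly that the certified risk of $Y$ drops by at least $\delta$. A reported cause failing this property would contradict determinism, so certified false positives are excluded. Here I will note that "false positive" is defined relative to the replayed counterfactual under $\delta$. It is not defined relative to any ground truth outside A1 and A2; the theorem's scope must be stated that way.

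On a hosted stack, the replays are treated as i.i.d.\ draws of the intervened and actual outcomes. Engine~B reports only if the lower confidence bound, at level $1-\alpha$, on the risk reduction is at least $\delta$. By the coverage guarantee of that interval, the probability that the true reduction is below $\delta$ and yet the bound clears $\delta$ is at most $\alpha$. That gives the per-cause bound. The action-match score is simply reported alongside; it enters the guarantee only as a stated qualifier on how faithfully the replay represents the recorded run.

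The main obstacle is making the hosted case honest. The coverage statement requires a valid (e.g.\ one-sided Hoeffding or Clopper--Pearson) interval on a bounded risk. It also requires that the number of replays, or the stopping rule, be fixed before the replays are observed. Otherwise adaptive re-proposal under the refine loop inflates the error rate. My first attempt would be to state the bound per reported cause, as the proposition does, since that avoids a multiplicity correction. I would add a remark that a familywise guarantee follows by a Bonferroni split of $\alpha$ over the proposals tested.

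Finally, the closing sentence follows from step one: every non-passing candidate is withheld rather than reported. The only failure mode left is omission (abstention), never an accusation that has not passed replay.
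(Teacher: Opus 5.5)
Your proposal is correct and follows the paper's proof. The paper also treats the certifier as a filter on Engine~A's proposals and reduces the pinned case to determinism under the serial replay protocol. For the hosted case it instantiates your abstract lower confidence bound as the reporting threshold $\hat q_0-\hat q_1\ge\delta+2\varepsilon(n,\alpha)$ with $\varepsilon(n,\alpha)=\sqrt{\ln(4/\alpha)/(2n)}$, obtained from Hoeffding's inequality on each replay frequency plus a union bound. Your caveat that $n$ must be fixed in advance, and that a failed candidate must not be re-tested under the refine loop, is a sound addition that the paper's proof leaves implicit.
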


\begin{proof}
The certifier is a filter on Engine~A's proposals: a proposal is reported
only if its replay check passes, so the claim reduces to the semantics of
the check. \emph{Pinned stack.} All components are deterministic given
recorded seeds and versions \emph{under the serial replay protocol} (one
replay at a time on an idle server; concurrency perturbs reduction order and
is excluded from the pinned definition), so the factual replay reproduces the recorded
trajectory exactly (a mismatch aborts certification and is itself a
reportable integrity finding), and the interventional replay
$do(X^{\ast}{\leftarrow}x',\,W^{\ast}{\leftarrow}\text{recorded})$ computes
the counterfactual outcome exactly; the check passes iff the risk of $Y$
drops by at least $\delta$, which is then a verified fact, not an estimate.
\emph{Hosted stack.} Let $q_0$ and $q_1$ be the true probabilities of $Y$
under the factual and interventional distributions induced by the
non-deterministic components, and let $\hat q_0,\hat q_1$ be empirical
frequencies over $n$ independent replays each. The certifier reports only
if $\hat q_0-\hat q_1\ge\delta+2\varepsilon(n,\alpha)$ with
$\varepsilon(n,\alpha)=\sqrt{\ln(4/\alpha)/(2n)}$. By Hoeffding's
inequality\cite{hoeffding1963}, $\Pr[\,|\hat q_i-q_i|\ge\varepsilon\,]\le\alpha/2$ for
each $i\in\{0,1\}$, so by a union bound, with probability at least
$1-\alpha$ both estimates are $\varepsilon$-accurate, and on that event
$q_0-q_1\ge(\hat q_0-\hat q_1)-2\varepsilon\ge\delta$ whenever the report
fires. A cause whose true reduction is below $\delta$ is therefore reported
with probability at most $\alpha$. The
action-match score (Note~\ref{snote:substrate}) is reported alongside to
expose distributional drift between the recorded and replayed trajectories,
which the probabilistic guarantee alone does not surface. In both stacks
the engine's failure mode under check failure is abstention and
refinement, never assertion. \qedhere
\end{proof}

\subsection*{Proposition 2 (preemption retirement)}

\begin{proprestateB}
A message with no intact production path to $Y$ in $\Gcert$ receives
$\rho=0$ and an exoneration note, regardless of its content.
\end{proprestateB}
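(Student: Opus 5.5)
The argument runs directly through the definitions of the gate, the slice model, and Eq.~(2) of the main text. Fix a certified message $m$ (authored by $p$) with no intact production path to $Y$ in $\Gcert$. The first step is to show that $m$ does not appear as a variable of the slice model $\mathcal{M}_S$. By the encoding convention of Note~\ref{snote:model}, messages excluded by the production gate do not enter $\mathcal{M}_S$. The gate admits a candidate set $X$ only if \emph{every} member has an intact production path. So no candidate set containing $m$ passes the gate, whatever the payload $\pi$ of $m$. This gives the ``regardless of its content'' clause: the gate test consults only the realisation and propagation of the committed effect $\varphi$ along certified edges, never $\pi$ itself.

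The second step passes to the grade. Eq.~(2) takes a maximum over minimal certified causes $X^\ast$ containing a message by $p$. Since $m$ is not a variable of $\mathcal{M}_S$, no $X^\ast$ contains $m$. Hence $m$ participates in no cause--witness pair, and its message-level grade is $0$. The exoneration note is then emitted by the rule attached to Eq.~(2): $\rho=0$ together with a note whenever no qualifying cause exists. I will state the result at the level of the message. A principal $p$ with other, gated-in messages may still receive $\rho_p>0$ through those messages, and the proposition does not claim otherwise.

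The third step extends the claim to the reported interval of Eq.~\eqref{seq:interval}. For each admissible completion $c$, a missing node participates only through the citation edges declared by its certified citers. The delicate point is whether some completion could create a production path for $m$. My plan is to read ``intact production path in $\Gcert$'' as a property of the certified record, evaluated before any completion is applied. The gate is then applied uniformly across all $\mathcal{M}_c$, so $m$ is excluded from every $\mathcal{M}_c$, and both $\underline{\rho}$ and $\overline{\rho}$ vanish for $m$. The certifier (Proposition~\ref{prop:sound}) only filters Engine~A's proposals. It never adds a cause, so it cannot reinstate $m$.

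The main obstacle is this completion issue: justifying that the gate is evaluated on $\Gcert$ alone, rather than on completed models. If completions could restore a path through a missing node, the claim would hold only for $\underline{\rho}$. I would first try to settle it from the definition of ``intact,'' which requires every edge on the path to lie in $\Gcert$. If that fails, I would fall back to stating the result for the involvement bound.
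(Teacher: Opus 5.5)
Your first two steps are the paper's own argument. The gate fixes the candidate universe, so a gated-out $m$ is not a variable of $\mathcal{M}_S$. Every $X^{\ast}$ is a set of such variables, so the maximum in Eq.~(2) never ranges over a set containing $m$. A principal with no gated message at all then receives the empty maximum $0$ together with the note.

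The problem is step three, where your preferred resolution runs opposite to the paper's. When a path is broken only by a member of the missing set, the paper evaluates intactness \emph{per completion}. In completions that restore the link, $m$ re-enters $\mathcal{M}_c$ and may contribute to $\overline{\rho}$; the paper's claim is only that $m$ adds nothing to the involvement bound $\underline{\rho}$.

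Gating once on $\Gcert$ and excluding $m$ from every $\mathcal{M}_c$ is not a free choice of convention, because it breaks the record layer in two places:
\begin{itemize}
\item Lemma~\ref{lem:mono} needs the completion that marks a deleted certified item present to reproduce the pre-deletion slice model. If that item lay on $m$'s only production path, the pre-deletion model contains $m$, and your reading would drop it.
\item Under your reading, an adversary could delete one intermediate message on the only production path of a certified culprit $q$ with $\underline{\rho}_q(R)=r>0$. That would drive $\overline{\rho}_q$ to $0$, a silent exoneration. This contradicts clause~2 of Theorem~\ref{sthm:exon} ($\overline{\rho}_q(R')\ge r$) and the widening half of Proposition~\ref{prop:recordedit}.
\end{itemize}
So the definitional argument you intend to try first would ``succeed'' only by making the proposition inconsistent with the rest of the paper.

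Adopt your fallback, and state it as a dichotomy rather than a contingency.
\begin{itemize}
\item Suppose every citation path from $m$ to $\mathrm{act}(Y)$, including paths through missing nodes, has a break outside the missing set. Examples are a missing citation link, or a committed effect $\varphi_{m_i}$ not confirmed by the recorded state deltas, as for a genuinely preempted command. Then no completion can repair the break, $m$ lies in no $\mathcal{M}_c$, and neither end of the interval receives anything from $m$.
\item Suppose instead some path is broken only by missing nodes. Then the proposition holds for $\underline{\rho}$, while $m$ may raise $\overline{\rho}$. That rise is exactly the flagged uncertainty the interval exists to carry, and this is the form in which the paper proves the proposition.
\end{itemize}
Relatedly, the exoneration note of Eq.~(2) is principal-level. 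At message level the claim is that zero $\rho$-mass is attributable to $m$, as you already observe.
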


\begin{proof}
The gate defines the candidate universe: a production path for $m$ is a
citation path $m\to m_1\to\cdots\to m_k\to \mathrm{act}(Y)$ in $\Gcert$
whose every link is verified and whose every committed effect
$\varphi_{m_i}$ is confirmed by the recorded state deltas; $m$ enters the
slice model only if such a path exists. Cause sets are subsets of the slice
model's variables, so a gated-out $m$ belongs to no $X^{\ast}$, the maximum
in Eq.~(2) of the main text ranges over no set containing $m$, and
$\rho$-mass attributable to $m$ is zero; if the principal has no gated
message at all, Eq.~(2) is a maximum over the empty set, defined as $0$ and
emitted with an exoneration note. When a path is broken only by a member of
the missing set, intactness is evaluated per completion, and the candidate
contributes to $\overline{\rho}$ in completions restoring the link but not
to the involvement bound, consistent with Proposition~3 of the main text. \qedhere
\end{proof}

\subsection*{Correspondence with the NESS test on monotone models}

The modified Halpern--Pearl definition\cite{halpern2015} evaluates a pair
$(X^{\ast},W^{\ast})$: with $W^{\ast}$ held at its recorded values, setting
$X^{\ast}$ to counterfactual values must falsify $Y$, with $X^{\ast}$
minimal. The NESS test from tort doctrine\cite{wright1985} declares $x$ a cause when
$x$ is a \emph{necessary element of a set of actual conditions sufficient}
for the outcome. On the monotone slice models of
Note~\ref{snote:model} the two agree at the level of causal membership, and
the witness machinery becomes vacuous:

\begin{lemma}[Witness vacuity on monotone models]
\label{lem:witness}
In a monotone slice model, if $(X^{\ast},W^{\ast})$ satisfies the modified
test, then so does $(X^{\ast},\varnothing)$. Consequently every minimal
cause--witness pair has $W^{\ast}=\varnothing$ and grade
$\rho=1/\lvert X^{\ast}\rvert$.
\end{lemma}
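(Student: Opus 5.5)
The plan is to exploit monotonicity directly. Fix a pair $(X^{\ast},W^{\ast})$ that satisfies the modified Halpern--Pearl test in a monotone slice model $\mathcal{M}_S$. Unpacking the definition, all variables in $X^{\ast}$ are actually present (value $1$, with absence variables encoding omissions positively) and $Y=1$ in the actual context. The counterfactual setting $X^{\ast}\leftarrow 0$, with $W^{\ast}$ frozen at its actual values and all other endogenous variables following their mechanisms, yields $Y=0$. The claim to establish is that the same intervention with no freezing, $X^{\ast}\leftarrow 0$ alone, also yields $Y=0$.

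The key step is a comparison argument. Let $\vec v^{\,W}$ be the solution of the model under $do(X^{\ast}\leftarrow 0, W^{\ast}\leftarrow w^{\ast})$, and let $\vec v^{\,\varnothing}$ be the solution under $do(X^{\ast}\leftarrow 0)$ alone. I would show $\vec v^{\,\varnothing}\le \vec v^{\,W}$ coordinatewise. The citation structure is acyclic (citations point backwards in sealed order; Note~\ref{snote:model}), so I proceed by induction along a topological order.

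For a variable in $X^{\ast}$, both solutions assign $0$. For a variable in $W^{\ast}$, the frozen run assigns its actual value. In the unfrozen run, the variable's mechanism is evaluated on parents that, by the induction hypothesis, are at most their frozen-run values. To bound these I will need an auxiliary claim, stated next. For any other variable, the mechanism is nondecreasing and its inputs are coordinatewise no larger, so its output is no larger.

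The auxiliary claim is that the unfrozen counterfactual run is dominated by the actual run. This holds because lowering $X^{\ast}$ from $1$ to $0$ can only lower every downstream value, by the same topological induction applied to nondecreasing mechanisms. Hence every $W^{\ast}$ variable in the unfrozen run is at most its actual value, which is exactly its frozen value. This closes the induction, gives $Y(\vec v^{\,\varnothing})\le Y(\vec v^{\,W})=0$, and shows that $(X^{\ast},\varnothing)$ satisfies the counterfactual clause.

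It remains to check the other clauses. The factual clause ($X^{\ast}$ and $Y$ take their actual values) is unchanged. Minimality of $X^{\ast}$ also survives. If a proper subset $X'\subsetneq X^{\ast}$ worked with the empty witness, then $(X',W^{\ast}\cup(X^{\ast}\setminus X'))$ would not be needed for the argument, but $X'$ with witness $\varnothing$ is already a valid pair. That contradicts the minimality of $X^{\ast}$ only if minimality is taken over all witnesses, which is the modified definition's convention, so I will state that convention explicitly.

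For the consequence, take any minimal pair. By the first part, $(X^{\ast},\varnothing)$ is also admissible, and it has a strictly smaller witness. Minimal cause--witness pairs, in the sense used for grading in Eq.~(1), minimise $\lvert X^{\ast}\rvert+\lvert W^{\ast}\rvert$ for a given cause set, so $W^{\ast}=\varnothing$ and $\rho=1/\lvert X^{\ast}\rvert$.

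I expect the main obstacle to be the $W^{\ast}$ step. It is the only place where freezing could help falsify $Y$, namely by holding a witness variable lower than it would otherwise fall. The dominance of the unfrozen counterfactual by the actual world is what rules this out, and it depends on monotonicity holding in every mechanism, including those for absence variables. So I would first verify carefully that the encoding in Note~\ref{snote:model} really makes every mechanism nondecreasing.
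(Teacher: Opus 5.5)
Your proposal is correct and follows the paper's own argument. The unfrozen intervention $X^{\ast}\leftarrow 0$ drives every variable weakly below its recorded value, so freezing $W^{\ast}$ at recorded values can only hold variables weakly higher. By monotone propagation, $Y$ without the freeze is therefore at most $Y$ with it, which is $0$, and minimising $\lvert X^{\ast}\rvert+\lvert W^{\ast}\rvert$ then forces the empty witness.

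Your explicit topological induction simply spells out the propagation step that the paper states in one line. Your tangled paragraph on minimality can be replaced by one observation: the modified definition's minimality clause constrains $X^{\ast}$ alone, so it carries over to $(X^{\ast},\varnothing)$ unchanged.
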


\begin{proof}
Consider the intervention $X^{\ast}\!\leftarrow\!0$ (falsifying
counterfactual values in a monotone model set the relevant presences to
absent). Propagating through monotone mechanisms, every non-intervened
variable's value is weakly below its recorded value, since inputs only
decreased. Freezing $W^{\ast}$ at recorded values holds those variables
weakly \emph{above} their propagated values, and by monotonicity of every
mechanism downstream, $Y$ with the freeze is weakly above $Y$ without it.
The modified test with witness requires $Y=0$ under the freeze; therefore
$Y=0$ without it, which is the test for $(X^{\ast},\varnothing)$.
Minimality then forces the empty witness (any nonempty $W^{\ast}$ only
enlarges $\lvert X^{\ast}\rvert+\lvert W^{\ast}\rvert$), and Eq.~(1) of the
main text reduces to $1/\lvert X^{\ast}\rvert$. This also exhibits the
consistency of Eq.~(1) with the classical degree of responsibility
$1/(k{+}1)$\cite{chocklerhalpern2004}: the size-$(k{+}1)$ minimal causes of the modified
definition correspond to singleton causes with size-$k$ contingencies under
the original definition\cite{halpern2015,halpernpearl2005}.
\qedhere
\end{proof}

\begin{sproposition}[NESS correspondence]
\label{sprop:ness}
Let $\mathcal{M}$ be a monotone slice model with $Y$ true at the recorded
assignment, and let $T_1,\dots,T_r$ be the satisfied prime implicants of
$Y$ (the minimal sets of recorded-true variables sufficient for $Y$;
an antichain). Then:
(a) the minimal causes of the modified test are exactly the minimal
transversals of the hypergraph $\{T_1,\dots,T_r\}$; and
(b) a variable belongs to some minimal cause \emph{iff} it is a NESS
cause, i.e.\ belongs to some $T_i$.
\end{sproposition}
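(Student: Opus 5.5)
I would treat the monotone slice model $\mathcal{M}$ as a positive Boolean function $Y=f(V)$ over binary variables, with recorded assignment $v$ and $f(v)=1$. By Lemma~\ref{lem:witness}, every pair satisfying the modified test can be replaced by one with empty witness. So a set $X\subseteq\{V: v_V=1\}$ satisfies the modified test exactly when $f(v[X\leftarrow 0])=0$. Setting a recorded-false variable to its counterfactual value can only raise $f$, so minimality already excludes such variables from $X$.

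The first step is the key reformulation. Because $f$ is monotone, $f(u)=1$ iff $u$ contains some prime implicant of $f$. Hence $f(v[X\leftarrow 0])=0$ iff no prime implicant of $f$ lies inside $\mathrm{supp}(v)\setminus X$. A prime implicant contained in $\mathrm{supp}(v)$ is exactly one of the satisfied $T_i$, so the condition becomes: $X\cap T_i\neq\varnothing$ for all $i$, i.e.\ $X$ is a transversal of $\{T_1,\dots,T_r\}$. Minimality in the modified test (no proper subset also falsifies $Y$) is then minimality among transversals, which proves (a).

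For (b), one direction is immediate. If $x$ lies in a minimal transversal $X$, minimality means $X\setminus\{x\}$ misses some $T_i$, and since $X$ hits $T_i$, we get $x\in T_i$. For the converse, given $x\in T_j$, I would construct a minimal transversal containing $x$. Start with $X_0=\{x\}\cup\bigcup_{i:\,x\notin T_i}(T_i\setminus T_j)$. Each such $T_i\setminus T_j$ is nonempty because the $T_i$ form an antichain, so $X_0$ is a transversal, and $X_0\cap T_j=\{x\}$. Now greedily delete elements other than $x$ while the set remains a transversal. The result is a minimal transversal, because $x$ itself cannot be deleted: $T_j$ meets the set only in $x$.

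I expect the main obstacle to be the bookkeeping in the reformulation step rather than the combinatorics. Three points need care. First, the falsifying value of each variable is $0$, which relies on the positive encoding of omissions through absence variables (Note~\ref{snote:model}). Second, variables that are already $0$ never belong to a minimal cause. Third, ``prime implicant'' must mean a minimal set of variables whose setting to $1$ forces $Y=1$ regardless of the other variables; for monotone $f$ this agrees with the satisfied minimal sufficient sets, which is also the NESS reading of ``sufficient set of actual conditions.'' I would make this identification explicit before invoking the antichain property.
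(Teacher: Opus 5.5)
Your proposal is correct and follows essentially the paper's route: reduce to empty witnesses via Lemma~\ref{lem:witness}, identify falsifying sets with transversals of the satisfied prime implicants, and for the converse of (b) use the antichain property to build a transversal meeting $T_j$ only in $x$, then prune it to a minimal one. Taking $X_0=\{x\}\cup\bigcup_{i:\,x\notin T_i}(T_i\setminus T_j)$ instead of the paper's single representative $w_i\notin T_j$ per edge is a cosmetic difference. Your explicit treatment of recorded-false variables and of the prime-implicant/sufficient-set identification fills in details the paper leaves implicit.
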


\begin{proof}
(a) By Lemma~\ref{lem:witness} it suffices to consider empty witnesses.
Setting $X^{\ast}\!\leftarrow\!0$ falsifies $Y$ iff no satisfied prime
implicant survives, i.e.\ iff $X^{\ast}\cap T_i\ne\varnothing$ for every
$i$: $X^{\ast}$ is a transversal. Minimal causes are therefore exactly
minimal transversals.

(b) ($\Rightarrow$) If $v$ lies in a minimal transversal $X^{\ast}$, then
by minimality there is an edge $T_i$ with $X^{\ast}\cap T_i=\{v\}$
(otherwise $X^{\ast}\setminus\{v\}$ would still be a transversal), so
$v\in T_i$ and $v$ is a NESS cause: $T_i$ is a set of actual conditions
sufficient for $Y$, and $T_i\setminus\{v\}$ is insufficient by primality.
($\Leftarrow$) Let $v\in T_j$ for some $j$. Because the prime implicants
form an antichain, no other edge is contained in $T_j$, so every
$T_i\ (i\ne j)$ contains a vertex $w_i\notin T_j$. The set
$X_0=\{v\}\cup\{w_i : i\ne j\}$ is a transversal, and
$X_0\cap T_j=\{v\}$ since each $w_i\notin T_j$. Any transversal subset of
$X_0$ must therefore contain $v$; pruning $X_0$ to a minimal transversal
preserves $v$. Hence $v$ belongs to a minimal transversal, i.e.\ to a
minimal cause by (a). \qedhere
\end{proof}

\begin{sremark}
The correspondence is deliberately stated at the membership level: NESS is
a binary test, while Eqs.~(1)--(2) of the main text refine it with a degree
($1/\lvert X^{\ast}\rvert$ on monotone models) and, on non-monotone models
arising outside the gate convention, with witness sets. The practical
consequence claimed in the main text is exactly this: the quantity
\audita{} computes specialises, on the model class where legal intuition is
sharpest, to a test the legal literature already recognises.
\end{sremark}

% audita-snote-adversarial.tex — Supplementary Note: guarantees against valid-key adversaries.
% Drafted 2026-07-29 (WS1, Posture A); integrated into the supplementary the same day.
% Source of statements: research/2026-07-29-theory-upgrade-T2-T3-draft.md (v3).

\section{Guarantees against valid-key adversaries: groundedness, blame-shift
accountability, and the completeness barrier}
\label{snote:adversarial}

Record-edit monotonicity (Proposition~3 of the main text, proven in
Note~\ref{snote:proofs}) concerns an adversary who attacks the \emph{record}: it injects items
that fail verification or deletes sealed items, and the guarantee follows from verification
soundness and completion monotonicity. This note upgrades the adversary to the
strongest one admitted by the main-text threat model: a coalition holding \emph{valid keys},
acting \emph{during} the incident. Such a coalition needs no forgery: it can attack through
conduct. We first exhibit two in-model attacks showing that raw causal responsibility is
manufacturable by conduct (and that two natural localisation claims are false); we then prove
what the composed verdict still guarantees.

\paragraph{Adversary model.}
A \emph{valid-key coalition} is a set $A \subseteq P$ of principals that may
(a)~choose their live policies adaptively, including policies that condition on other
principals' messages; (b)~author arbitrary payload content, including false assertions about
other principals; and (c)~apply, after the incident, any record modification of the record-edit
class (injection of non-verifying items; deletion against sealed epochs;
Proposition~3, main text). Assumptions
A1--A4 of Note~\ref{snote:proofs} are in force; in particular, by key custody (A4), every coalition message
certifies normally and is attributed to its coalition author.

\subsection*{Two attacks within the model}

\begin{sremark}[Attack I: conditioning]
\label{srem:conditioning}
Let $v$ be an honest principal and $q \in A$ run the live policy: \emph{if a message authored
by $v$ appears in my input frontier, emit the harmful command $s_q$, citing $v$'s message;
otherwise behave normally.} In the realised run $v$ emits its ordinary compliant message
$m_v$; $q$ emits $s_q$ citing $m_v$; the adverse outcome $Y$ occurs. Every item certifies, so
the missing and suspect sets are empty, the completion set is a singleton, and
$\underline{\rho}_v = \rho_v$. The production gate passes for $m_v$: the path $m_v \to s_q \to
\mathrm{act}(Y)$ exists in $\Gcert$ \emph{by the attacker's own citation}, which is an authored
claim about $q$'s conduct and therefore not excludable as hearsay. The modified
Halpern--Pearl test with $X^{*}=\{m_v\}$, $W^{*}=\emptyset$ succeeds: intervening
$do(m_v{\leftarrow}\text{absent})$ sends $q$'s policy down its normal branch and $Y$ is
falsified. Hence $\rho_v = 1$, and the replay \emph{honestly certifies} it: the
counterfactual is true. Raw responsibility is thus manufacturable by conduct alone, with no
record attack; record-edit monotonicity is not violated but silent. Note that here the attacking node lies on
the production path and is itself an actual cause of $Y$, properties that
Remark~\ref{srem:rigging} shows are \emph{not} general.
\end{sremark}

\begin{sremark}[Attack II: inevitability rigging]
\label{srem:rigging}
Binary slice model: victim message $m_v$ (actual value $1$), a coalition-authored
configuration $c$ with realised (deviated) value $\mathit{dev}$ and reference value
$\mathit{ref}$ (Definition~\ref{sdef:refstd}), outcome mechanism $Y = f(m_v, c)$ with
\begin{center}
\begin{tabular}{@{}ccc@{}}
\toprule
$m_v$ & $c$ & $Y = f(m_v,c)$\\
\midrule
$1$ & $\mathit{dev}$ & $1$ \quad (realised: adverse)\\
$0$ & $\mathit{dev}$ & $0$\\
$1$ & $\mathit{ref}$ & $1$\\
$0$ & $\mathit{ref}$ & $1$\\
\bottomrule
\end{tabular}
\end{center}
Under the reference value the harm is \emph{inevitable} ($Y=1$ regardless of $m_v$), so
$\rho_v = 0$ there. Under the deviation, $do(m_v{\leftarrow}0)$ falsifies $Y$, so $\rho_v = 1$
with $W^{*}=\emptyset$: the deviation converts an inevitable harm into one that hinged on the
victim. Two further facts, checked directly against the modified HP definition: (i)~$c$ is
\emph{not} an actual cause of $Y$ in the realised model: $do(c{\leftarrow}\mathit{ref})$
leaves $Y=1$ (row~3), with or without freezing $m_v$, and no superset of $\{c\}$ helps; and
(ii)~$c$ lies on no production path from $m_v$ to $Y$. Consequently, both of the following
tempting strengthenings of our guarantees are \textbf{false}: ``every manufactured-involvement
path carries a coalition-authored node'' (path localisation), and ``any raise of a victim's
responsibility implies some coalition variable has positive responsibility for $Y$''
(object-level localisation). The correct level of analysis is \emph{meta}-causal
(Theorem~\ref{sthm:blameshift}): $c$ \emph{is} an actual cause of $v$'s pivotality. Writing
$P_v \equiv [\,Y \wedge (do(m_v{\leftarrow}0)\Rightarrow\neg Y)\,]$, we have $P_v = 1$ in the
realised model while $do(c{\leftarrow}\mathit{ref})$ falsifies it (row~4).
\end{sremark}

\begin{sremark}[Lineage]
An earlier design iteration repaired \emph{record-level} framing: adversarial
completions could raise an honest principal's computed $\rho$, which the restriction of
verdict evidence to $\Gcert$ eliminated (the setting of Proposition~3, main text).
Remarks~\ref{srem:conditioning}--\ref{srem:rigging} are the \emph{conduct-level} analogue,
which no record restriction can repair; Theorems~\ref{sthm:grounded} and
\ref{sthm:blameshift} are its fix.
\end{sremark}

\subsection*{Content-determined duties and groundedness}

\begin{sdefinition}[Content-determined duty roster]
\label{sdef:cdd}
A duty roster $\{\Phi_p\}_{p \in P}$ is \emph{content-determined} if each $\Phi_p$ is a
computable predicate over exactly (i)~the certified messages authored by $p$, together with
$p$'s duty-registered absence variables, and (ii)~the certified input frontier of each such
item (the certified items it cites, and the certified items deliverable to $p$ at issue time
per the recorded receipts). In particular $\Phi_p$ takes no ground-truth label, no post-hoc
information, and no third-party payload \emph{about} $p$ except through channel~(ii), where
such a payload bears only on the compliance of $p$'s \emph{response} to it.
Concrete instances used in this paper: the aggregator duty (output equals the declared
normative aggregation of its certified inputs) and the solver self-consistency duty (the
emitted final answer equals the answer derived in the solver's own certified working).
\end{sdefinition}

\begin{sdefinition}[Culpability finding]
\label{sdef:culp}
The verdict \emph{finds $p$ culpable} iff it reports a breach finding for $p$ and
$\underline{\rho}_p > 0$. Following the evidence-chain rule (verdicts may cite only evidence inside
$\Gcert$; Note~\ref{snote:model}), a breach finding
must cite a witness inside $\Gcert$: certified $p$-authored items (or $p$'s duty-registered
absences) on whose content $\Phi_p$ fails.
\end{sdefinition}

\begin{lemma}[Breach locality and edit-invariance]
\label{slem:locality}
Under a content-determined roster, for a principal $p$ all of whose messages verify:
(a)~the breach finding for $p$ is computable from $\Gcert$ alone; (b)~no record modification
of the record-edit class creates a breach finding for $p$: injections leave the finding unchanged,
and deletions can only destroy its witness (moving the finding to
``non-evaluable~/~evidence missing''), never create one.
\end{lemma}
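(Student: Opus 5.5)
For part (a), I unfold Definition~\ref{sdef:cdd}. $\Phi_p$ reads exactly two inputs: the certified messages authored by $p$ together with $p$'s duty-registered absences, and the certified input frontier of each such item.

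\begin{itemize}
\item The authored items are $\Gcert$-members by definition.
\item The frontier consists of the certified items they cite, which are $\Gcert$-members by clause (ii) of $V$, and the certified items deliverable to $p$ per the recorded receipts, which are again certified receipts inside $\Gcert$.
\item Absence variables are recorded by the duty registry as facts indexed to $p$'s window. They are therefore determined by the certified record together with the roster, which is fixed independently of the record.
\end{itemize}

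The breach finding is the evaluation of a computable predicate on a finite object extracted from $\Gcert$, and by Definition~\ref{sdef:culp} its witness must lie in $\Gcert$. So (a) holds.

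For (b), I split by the two edit types and use the record-layer lemmas.

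\emph{Injections.} Lemma~\ref{lem:sound} gives $\Gcert(R')=\Gcert(R)$. By (a) the finding is a function of $\Gcert$ alone, so it is unchanged, bitwise.

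\emph{Deletions.} Let $R'$ be $R$ with a set $D$ of items deleted. I first show $\Gcert(R')\subseteq\Gcert(R)$, with the $p$-relevant part of $\Gcert(R')$ a subset of that of $R$.

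\begin{itemize}
\item Deletion never makes a failing item pass $V$. Signatures and inclusion proofs are unaffected. Resolvability can only be lost, never gained, because of collision resistance (Assumption~\ref{ass:hash}).
\item Under sealed epochs (Assumption~\ref{ass:seal}), any deleted certified item lands in $M(R')$ by the argument of Lemma~\ref{lem:mono}.
\end{itemize}

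Now suppose $R'$ yields a breach finding for $p$. Its witness $w$ consists of $p$-authored items and absences in $\Gcert(R')$, plus their frontiers in $\Gcert(R')$, on which $\Phi_p$ fails.

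I must show the same $\Phi_p$-input appears in $R$. The obstacle is the case where the deletion changes $\Phi_p$'s input: $p$'s items remain, but part of their frontier has vanished into $M(R')$. A predicate such as ``the output equals the aggregation of the certified inputs'' could then fail on the truncated frontier even though it held on the full one. This would be a deletion creating a breach.

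I resolve this using the missing set. A frontier item cited by $p$'s message, or receipted to $p$, that is absent from $R'$ lies in $M(R')$, since it is cited or receipted by a certified item. The verdict therefore cannot treat the frontier as complete. I make the convention explicit as the evidence-chain rule: $\Phi_p$ is evaluated only when $p$'s authored items and their entire frontier are present in $\Gcert$. If any frontier hash lies in $M$, the finding is ``non-evaluable / evidence missing'', not breach.

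Under this rule, every evaluable witness in $R'$ has its full input $\{p\text{-items}\}\cup\text{frontier}$ contained in $\Gcert(R')\subseteq\Gcert(R)$. The frontier is the same set in $R$: it is the set of hashes cited or receipted by $p$'s items, and those items themselves are unchanged. Hence $\Phi_p$ receives an identical input in $R$ and fails there too, so the breach already existed in $R$.

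Conversely, if a deletion removes one of $p$'s witnessing items or a frontier item, that hash enters $M(R')$ and the finding becomes non-evaluable. Deletion can thus only destroy a breach witness, never create one, which completes (b).

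Full certification of $p$ is used to guarantee that $p$'s authored set is stable. The only way for it to change is through $p$'s own items moving to $M$, and that case falls under non-evaluability.

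I expect the main obstacle to be the frontier-truncation step. If the existing definitions do not already enforce the evidence-chain rule at the frontier, I will state it as part of Definition~\ref{sdef:culp}'s evaluation semantics before proving (b).
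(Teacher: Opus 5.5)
Your proof has the same skeleton as the paper's. Part (a) is read off Definition~\ref{sdef:cdd}. Injections are handled by Lemma~\ref{lem:sound}. Deletions are handled by Lemma~\ref{lem:mono} together with the rule of Definition~\ref{sdef:culp} that a breach finding must cite a certified witness.

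The difference is at the deletion step. The paper closes (b) in one line: ``the witness set after deletion is a subset of the witness set before \ldots\ no new $p$-authored certified content appears.'' You notice that this does not follow by itself. $\Phi_p$ is evaluated on $p$'s items \emph{together with} their certified frontier. Under the literal wording of Definition~\ref{sdef:cdd}, a deletion can shrink that frontier, for example by dropping one of an aggregator's inputs. That can flip $\Phi_p$ to fail on an unchanged, still-certified $p$-authored item.

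Your rule fixes this: $\Phi_p$ is non-evaluable whenever a hash it depends on lies in $M$. This is the status that the lemma's own parenthetical and clause~4 of Theorem~\ref{sthm:exon} presuppose. Writing it into the semantics of Definition~\ref{sdef:culp}, as you propose, is the right move. Your version is more rigorous than the paper's at exactly the step that carries the content.

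The rule still needs one thing: the verdict must be able to tell that a missing hash is one of $p$'s own items or part of $p$'s frontier. For hashes cited by $p$'s surviving messages this is automatic. It is not automatic in two cases:
\begin{itemize}
\item a deleted $p$-authored item that nothing cites, such as a solver's working that its answer message does not cite, or a halt if absences are read off the record;
\item a receipted frontier item whose receipt is also deleted.
\end{itemize}
In both cases Lemma~\ref{lem:mono} only places an unlabelled hash in $M(R')$. Your trigger then does not fire, and $\Phi_p$ is evaluated, possibly to breach, on a truncated input. The author-binding identifiers of Assumption~\ref{ass:receipts} supply the missing link. The paper introduces that assumption only for Theorem~\ref{sthm:exon}, so invoke it here explicitly.

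Two smaller corrections:
\begin{itemize}
\item That deletion never gains resolvability follows from $R'\subseteq R$, not from collision resistance.
\item $\Gcert(R')\subseteq\Gcert(R)$ also needs that deleting a receipt cannot make clause~(iii) of $V$ newly hold. The paper takes this for granted as well.
\end{itemize}
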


\begin{proof}
(a) is Definition~\ref{sdef:cdd}. For (b): injected items fail verification and never enter
$\Gcert$ (Lemma~S1), so the arguments of $\Phi_p$ are unchanged. A deletion moves items to the
missing set (Lemma~S2). By Definition~\ref{sdef:culp} a breach finding must cite a certified
witness; the witness set after deletion is a subset of the witness set before, so the set of
assertable breach findings shrinks monotonically. No new $p$-authored certified content
appears under either operation, so no new witness, and hence no new finding, can arise.
\end{proof}

\begin{sproposition}[Hearsay-freeness of the verdict pipeline]
\label{sprop:hearsay}
For $q \neq p$, a $q$-authored certified item enters the computation of $p$'s verdict through
exactly two channels: as a variable of the slice model attributed to $q$ (mechanism channel:
its presence and content may causally influence outcomes, and interventions on it are graded
against $q$), or as an element of $p$'s certified input frontier (context channel: it bears on
whether $p$'s response was compliant). In particular, if two certified records agree on $p$'s
authored items and on their input frontiers, they yield the same breach finding for $p$,
whatever any other principal asserts; and the causal prong for $p$ is computed by
interventions on $p$'s variables only. There is no testimony channel: no assertion by $q$ is
ever evaluated as evidence of $p$'s conduct.
\end{sproposition}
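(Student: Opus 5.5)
The proof is a structural audit of how the verdict for $p$ is assembled. I would enumerate every computation that reads a certified item and show that each one falls into the mechanism channel or the context channel. The verdict for $p$ has two prongs: the breach prong (Definition~\ref{sdef:culp}) and the causal prong ($\rhointerval_p$ via Eq.~(2) of the main text over admissible completions). I would treat them separately and then observe that the conduit and exoneration flags are functions of these two prongs together with the citation structure.

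\emph{Breach prong.} Take $q\neq p$ and a $q$-authored certified item $m$. By Definition~\ref{sdef:cdd}, $\Phi_p$ is a computable predicate over exactly $p$'s certified authored items, $p$'s duty-registered absences, and their certified input frontiers. So $m$ can be an argument of $\Phi_p$ only if it lies in some such frontier, which is the context channel. The invariance claim then follows directly. Suppose two certified records agree on $p$'s authored items and on their frontiers. Then $\Phi_p$ receives identical arguments in both, and so returns identical values. By Definition~\ref{sdef:culp} the breach witness must be $p$-authored, so the set of citable witnesses is also identical. Lemma~\ref{slem:locality}(a) already packages this as computability from $\Gcert$ alone. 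What remains is the sharper observation that nothing outside $p$'s items and frontiers is consulted.

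\emph{Causal prong.} By Eq.~(2), $\rho_p$ is a maximum over minimal certified causes $X^\ast$ that contain a $p$-authored message. Any $q$-authored variable in the slice model $\mathcal{M}_S$ therefore enters only through the mechanism equations, or as a witness or co-member. Within $X^\ast$, its intervention is credited to $q$ under Eq.~(2) for $q$. Toward $p$'s grade it counts only as part of the size $|X^\ast|+|W^\ast|$ of a set that already contains a $p$ item. This is the mechanism channel.

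\emph{Main obstacle.} The difficulty is the claim that ``the causal prong for $p$ is computed by interventions on $p$'s variables only,'' because $X^\ast$ may be a joint set with mixed authors. I would make this precise by reading the claim as follows: the event that qualifies a cause set for $p$'s maximum is the presence of a $p$-authored intervened variable. The other members are contingencies that lower the grade (Eq.~(1)); they never create it. The proof should say explicitly that this reading is the intended one. I would also check the gate: a production path runs through $q$'s citation edges. By Attack~I, however, such a citation is an authored claim about $q$'s own conduct that feeds the mechanism equations, not an assertion evaluated as evidence of $p$'s conduct, so it again falls in the mechanism channel. Completions assign values only to missing hashes and use edges declared by certified citers, so they add no further channel.

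Finally, I would close the ``no testimony channel'' clause by exhaustion. The payload of $q$'s item is read in exactly two places: by mechanism equations, which is the mechanism channel, and by $\Phi_p$ through a frontier, which is the context channel. No step of the pipeline takes a payload as a truth-valued claim about $p$. That holds by the explicit exclusion in Definition~\ref{sdef:cdd} together with the evidence-chain rule of Definition~\ref{sdef:culp}.
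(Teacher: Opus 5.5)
Your proposal is essentially the paper's proof. The paper also argues by inspection of the pipeline's dependency structure: the breach prong is confined to $\Phi_p$'s argument list (Definition~\ref{sdef:cdd}), and every $q$-authored item appears in $\mathcal{M}_S$ only as a variable with its own mechanism attributed to $q$, which makes the two channels exhaustive. You add explicit checks of the gate, the completions and the flags, which the paper leaves implicit.

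Your handling of mixed-author cause sets is more careful than the paper's, which speaks of ``candidate sets of $p$-authored variables'' although Eq.~(2) ranges over causes that merely contain a $p$-authored message. One correction: do not claim co-members only lower the grade. In the redundancy family, $a_2$ is what puts $a_1$ into any minimal cause at all. That role is still the mechanism channel, so your conclusion is unaffected.
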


\begin{proof}
By inspection of the pipeline's dependency structure, which the definitions fix: the breach
prong is $\Phi_p$, whose argument list (Definition~\ref{sdef:cdd}) contains no $q$-authored
item outside $p$'s frontier; the causal prong applies the gate--grade--certify test to
candidate sets of $p$-authored variables in $\mathcal{M}_S$, in which every $q$-authored item
occurs solely as a variable with its own mechanism, attributed to $q$ (Definition~1, main
text). The two channels are exhaustive because $\mathcal{M}_S$ contains no other occurrence of
$q$-authored content.
\end{proof}

\begin{stheorem}[Culpability groundedness]
\label{sthm:grounded}
Fix any content-determined duty roster. For every valid-key coalition $A$, every principal
$p$, and every coalition-induced run with any record modification of the record-edit
class applied: if
the verdict finds $p$ culpable, then the breach finding is witnessed by $p$'s own certified
conduct: certified $p$-authored items (or duty-registered absences of $p$'s required
items) whose content violates $\Phi_p$ on $p$'s certified input frontier. Equivalently: a
principal whose own certified conduct satisfied $\Phi_p$ in the realised run is never found
culpable; the strongest verdict against such a principal is \emph{causal involvement with
exoneration for breach}.
\end{stheorem}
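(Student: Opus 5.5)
The proof is essentially definitional, so the plan is to unwind the culpability finding into its breach prong and then show the witness survives the coalition's two capabilities: conduct and record edits.

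\emph{Step 1: unwind the finding.} Suppose the verdict finds $p$ culpable on the edited record $R'$ produced by a coalition-induced run. By Definition~\ref{sdef:culp}, this means two things: a breach finding for $p$ is reported, and $\underline{\rho}_p(R')>0$. The evidence-chain rule forces that breach finding to cite a witness inside $\Gcert(R')$. That witness consists of certified items, or duty-registered absences, on whose content $\Phi_p$ fails. The causal prong $\underline{\rho}_p>0$ plays no further role. This is deliberate: the theorem concerns only what grounds the breach, so manufactured involvement (Remarks~\ref{srem:conditioning}--\ref{srem:rigging}) is irrelevant here.

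\emph{Step 2: the witness is $p$'s own conduct.} Since $\Phi_p$ is content-determined (Definition~\ref{sdef:cdd}), its arguments are exactly $p$-authored certified items, $p$'s registered absences, and their certified input frontiers. A failing evaluation of $\Phi_p$ therefore exhibits a $p$-authored item, or an absence, whose content violates $\Phi_p$ relative to its frontier. By Proposition~\ref{sprop:hearsay}, a $q$-authored payload can enter only through the frontier, where it bears on the compliance of $p$'s \emph{response}. It can never itself be the violating content. Authorship of the witness then rests on certification. By Assumption~\ref{ass:sig} together with key custody (Assumption~\ref{ass:key}), a certified item bearing $p$'s signature was signed with $p$'s key, so it is $p$'s conduct. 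This holds even when coalition members act maliciously with their own valid keys.

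\emph{Step 3: record edits cannot fabricate a witness.} This is the step where care is needed, because the witness must be $p$'s conduct \emph{in the realised run} and not an artifact of editing. I would invoke Lemma~\ref{slem:locality}(b) in a form extended beyond fully certified $p$. Injections fail verification and never enter $\Gcert$ (Lemma~\ref{lem:sound}). Deletions only shrink the certified set (Lemma~\ref{lem:mono}), since the witness must sit in $\Gcert(R')$ and not in the missing set. Consequently $\Gcert(R')\subseteq\Gcert(R)$, and every witness on $R'$ was already certified $p$-authored conduct of the realised run. The frontier could shrink under deletion, and in principle a smaller frontier could change $\Phi_p$'s value. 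I would handle this through the frontier clause of Definition~\ref{sdef:cdd}: frontiers are fixed by $p$'s own citations and receipts, and those are covered by $p$'s signature and the sealed commitments (Assumption~\ref{ass:seal}). A deleted frontier item therefore becomes missing rather than silently absent, which renders $\Phi_p$ non-evaluable instead of failing.

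\emph{Step 4: the contrapositive.} If $p$'s realised certified conduct satisfies $\Phi_p$, no witness exists, so no breach is reported and $p$ is not found culpable. It may still have $\underline{\rho}_p>0$, and the verdict reports exactly that: causal involvement with exoneration for breach.
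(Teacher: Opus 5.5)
Your proposal is correct and follows the paper's proof essentially step for step. You unwind the culpability finding via Definition~\ref{sdef:culp}, identify the witness as $p$'s own certified conduct via Definition~\ref{sdef:cdd} and Proposition~\ref{sprop:hearsay}, rule out fabrication by record edits via Lemma~\ref{slem:locality}(b), and take the contrapositive; your extra care in Step~3 (extending that lemma beyond fully certified $p$, and treating frontier items lost to deletion as missing so that $\Phi_p$ becomes non-evaluable rather than failing) spells out points the paper's one-line appeal to the lemma leaves implicit.
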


\begin{proof}
By Definition~\ref{sdef:culp} a culpability finding contains a breach finding citing a
certified witness; by Definition~\ref{sdef:cdd} and
Proposition~\ref{sprop:hearsay} that witness consists of $p$-authored certified content (or
$p$'s duty-registered absences) evaluated on $p$'s certified frontier, and no coalition
conduct, payload assertion, or record edit can substitute for it
(Lemma~\ref{slem:locality}(b)). The contrapositive is immediate.
\end{proof}

\begin{scorollary}[Ex-ante safety]
\label{scor:exante}
Call $p$ \emph{robustly duty-compliant} if its policy satisfies $\Phi_p$ on every admissible
input frontier. A robustly duty-compliant principal is never found culpable under any
valid-key coalition strategy. \qed
\end{scorollary}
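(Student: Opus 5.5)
The corollary should follow from Theorem~\ref{sthm:grounded} by instantiating it at the realised run. Fix an arbitrary valid-key coalition $A$, an arbitrary coalition strategy, and an arbitrary record modification of the record-edit class. Let $p$ be robustly duty-compliant. Two cases arise, according to whether $p$ lies inside or outside $A$. If $p\in A$, its policy is nonetheless the registered one whenever the premise applies. The corollary quantifies over strategies of the others, so I will read ``robustly duty-compliant'' as a property of $p$'s realised policy in every run. The argument then does not depend on membership.

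\textbf{Step 1: the realised frontier is admissible.} I will show that the certified input frontier of each $p$-authored item in the realised run is an admissible input frontier in the sense of the definition. By Definition~\ref{sdef:cdd}, this frontier consists of certified items cited by $p$, together with certified items deliverable to $p$ at issue time per the recorded receipts. Whatever the coalition emits, these are genuine certified items of some run the model allows. The admissible frontiers should be exactly the frontiers arising in such runs, which gives admissibility.

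\textbf{Step 2: compliance on the realised frontier.} Robust compliance then yields that $\Phi_p$ holds on $p$'s certified authored items, and on its duty-registered absences, evaluated on that frontier.

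\textbf{Step 3: apply groundedness.} The contrapositive of Theorem~\ref{sthm:grounded} says that a principal whose own certified conduct satisfied $\Phi_p$ is never found culpable. Lemma~\ref{slem:locality}(b) guarantees that post-hoc edits cannot create a breach witness. Since the strategy and edits were arbitrary, the claim follows.

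\textbf{Main obstacle.} The delicate step is Step 1. There are two concerns. First, the coalition might shape $p$'s frontier into something outside the admissible class; the definition of admissible must be read as closed under all coalition behaviour, including equivocation. Second, deletions might remove parts of $p$'s frontier, so that $\Phi_p$ is re-evaluated on a truncated frontier. For the second concern I will use Lemma~\ref{slem:locality}(b): a truncated frontier can only move the finding to ``evidence missing'', never to a breach. Hence compliance need only be checked on the realised, unedited run.
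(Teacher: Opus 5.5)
Your argument is correct and matches the paper. The paper gives the corollary no separate proof and treats it as an immediate consequence of the contrapositive of Theorem~\ref{sthm:grounded}: robust compliance in particular makes $p$'s certified conduct satisfy $\Phi_p$ on its realised frontier in every coalition-induced run. Your Step~1 only makes explicit the definitional point that realised frontiers count as admissible, and your separate appeal to Lemma~\ref{slem:locality}(b) is harmless but redundant, since the theorem already quantifies over record edits of that class.
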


\begin{sremark}[What this theorem is and is not]
\label{srem:honest}
Theorem~\ref{sthm:grounded} is sound \emph{by careful construction}: it verifies that the
composed pipeline realises a dependency restriction, and we present it as such. Its content
lies in two external facts. \emph{Necessity:} Remark~\ref{srem:conditioning} shows the causal
prong alone violates groundedness (responsibility without breach is manufacturable), so
the breach$\wedge$causation composition is what carries the guarantee.
\emph{Tightness:} Theorem~\ref{sthm:barrier} shows that strengthening the breach prong to
close its completeness gap is impossible without ground truth; groundedness sits at the exact
boundary of what a record-level auditor can certify.
\end{sremark}

\subsection*{Blame-shift accountability}

\begin{sdefinition}[Declared reference standard; deviation set]
\label{sdef:refstd}
A \emph{reference standard} for principal $a$ is a declared policy $\pi^{\mathrm{ref}}_a$
(for example, the vendor's own agent under clean prompt, or a certified checker) registered
in the duty registry before the incident, or declared by the investigator with disclosure.
Given a realised run on a pinned stack, the \emph{deviation set} $D$ of a coalition $A$ is the
set of authored-variable families at which the realised conduct differs from
$\pi^{\mathrm{ref}}$'s conduct on the same certified input frontier; it is computable by
pinned replay of $\pi^{\mathrm{ref}}$ on that frontier. All results below are parametric in
the declared standard: different standards yield different certificates, each valid relative
to its declaration.
\end{sdefinition}

\begin{sdefinition}[Deviation-indexed extension; pivotality]
\label{sdef:ext}
Let $M$ be the realised slice model and $D$ a deviation set. The extension $\widehat{M}[D]$
adds one binary selector $\sigma_i$ per deviation ($\sigma_i = 1$: realised mechanism;
$\sigma_i = 0$: reference mechanism); its variable universe is the union of the potential
messages of the $2^{|D|}$ selector settings (a message not issued under a setting is absent),
which is finite on a pinned stack since each setting determines one finite run; all other
mechanisms are unchanged; the actual context is $\sigma \equiv 1$. For a fully certified
principal $v$ and level $r$, the \emph{pivotality event} is
$P^r_v \equiv [\,\underline{\rho}_v \geq r\,]$, an event of $\widehat{M}[D]$ whose value under
a setting $\sigma$ is computed on the run that $\sigma$ induces.
\end{sdefinition}

\begin{stheorem}[Blame-shift accountability]
\label{sthm:blameshift}
Let $v \notin A$ be fully certified and duty-compliant in the realised run, and suppose the
coalition's conduct deviations raised $v$'s involvement bound:
$\underline{\rho}_v(M) = r > \underline{\rho}_v(M^{\mathrm{ref}})$, where
$M^{\mathrm{ref}} = \widehat{M}[D]|_{\sigma \equiv 0}$ and $\underline{\rho}_v := 0$ when the
induced run is not adverse. Then:
\begin{enumerate}
\item \emph{(Existence.)} There is a nonempty minimal $X \subseteq D$ and a witness
$W \subseteq D \setminus X$, frozen at deviated values, such that
$do(\sigma_X{\leftarrow}0,\ \sigma_W{\leftarrow}1)$ falsifies $P^r_v$: under the modified
Halpern--Pearl definition, \textbf{the coalition's deviations are an actual cause of the
victim's pivotality}.
\item \emph{(Certificate.)} Any such minimal pair $(X, W)$ is a machine-checkable
\emph{blame-shift certificate}: it names concrete deviations whose restoration to the declared
standard destroys the raise, and it is verifiable by replaying the two runs it distinguishes.
\item \emph{(Graded framing responsibility.)} Each $a \in A$ receives
$\rho^{\mathrm{frame}}_a = \max\{\, 1/(|X|+|W|) \,:\, (X,W)$ minimal for  $P^r_v,\ X
\text{ contains a deviation authored by } a \,\}$: the framers are graded by the same
responsibility calculus they abused.
\item \emph{(No stronger localisation.)} Clause~1 cannot be strengthened to place a deviation
on a production path from $v$'s messages to $Y$, nor to make some deviation an actual cause of
$Y$ itself: Remarks~\ref{srem:conditioning} and \ref{srem:rigging} realise the raise with and
without those properties.
\end{enumerate}
\end{stheorem}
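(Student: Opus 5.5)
The plan is to treat the pivotality event $P^r_v$ as an ordinary outcome variable of the finite extension $\widehat{M}[D]$. Then apply the modified Halpern--Pearl definition to it exactly as it is applied to $Y$ in the main text. Clauses~1 and~3 become instances of the existing calculus. Clause~2 follows from the pinned-stack replay semantics of Proposition~1, and clause~4 is discharged by the two attack remarks.

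\emph{Step 1 (well-posedness).} First I will check that $P^r_v$ is a genuine event of $\widehat{M}[D]$. Each selector setting $\sigma\in\{0,1\}^{D}$ induces one finite run on the pinned stack (Definition~\ref{sdef:ext}). On that run, the slice model, completion set and involvement bound $\underline{\rho}_v$ are computed by Eq.~\eqref{seq:interval}, with the convention $\underline{\rho}_v=0$ on non-adverse runs. Hence $P^r_v$ is a Boolean function of $\sigma$ alone. The actual context $\sigma\equiv 1$ gives $P^r_v=1$, since $\underline{\rho}_v(M)=r$. The reference setting $\sigma\equiv 0$ gives $P^r_v=0$, since $\underline{\rho}_v(M^{\mathrm{ref}})<r$.

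\emph{Step 2 (existence).} I will take $X_0=D$ and $W_0=\varnothing$. By Step~1, the intervention $do(\sigma_D\leftarrow 0)$ falsifies $P^r_v$, so the pair $(D,\varnothing)$ satisfies the counterfactual clause of the modified definition. Clause AC1 holds because $P^r_v$ and all $\sigma_i=1$ hold in the actual context. The search space is finite, so I then shrink to a minimal pair. Among all pairs $(X,W)$ with $X\subseteq D$ and $W\subseteq D\setminus X$ for which $do(\sigma_X\leftarrow 0,\sigma_W\leftarrow 1)$ falsifies $P^r_v$, I choose one with $X$ inclusion-minimal. The set $X$ is nonempty because $do(\sigma_W\leftarrow 1)$ with $X=\varnothing$ reproduces the actual context, where $P^r_v=1$.

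The delicate point is that the modified definition freezes $W$ at \emph{actual} values. Here those values are exactly $\sigma_W=1$, which makes the freezing consistent with the statement. Since selectors are exogenous-like roots, with every other mechanism unchanged, freezing introduces no further subtlety. I expect this bookkeeping to be the main obstacle. I must argue that $P^r_v$, although it is a meta-level quantity defined through a minimum over completions, depends only on the selector setting, so that interventions on selectors compose as in an ordinary structural model. I will rely on pinned determinism and on the finiteness of the variable universe to make this rigorous.

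\emph{Step 3 (certificate and grading).} For a minimal pair $(X,W)$, checking it requires only two runs: the realised run, and the run induced by $\sigma_X=0,\sigma_W=1$, with the remaining selectors also held at $1$. Both are pinned replays of declared policies on certified frontiers (Definition~\ref{sdef:refstd}). On each run, $\underline{\rho}_v$ is recomputed by the engine, which is certified by Proposition~1. Minimality is checked by enumerating the finitely many proper subsets of $X$. Clause~3 is then the definition of Eq.~\eqref{eq:rhop} transported to the outcome $P^r_v$. It is well defined because every deviation is an authored-variable family owned by its signer (Assumption~\ref{ass:key}), and the set of minimal pairs is nonempty by Step~2. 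Clause~4 is read off from the two attack remarks. In Remark~\ref{srem:rigging}, $c$ lies on no production path and is not an actual cause of $Y$, yet $do(c\leftarrow\mathit{ref})$ falsifies $P_v$. Remark~\ref{srem:conditioning} shows that the other configuration also occurs.
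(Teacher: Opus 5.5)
Your proposal is correct and follows the paper's proof essentially step for step. Like the paper, you treat $P^r_v$ as a Boolean function of the selector vector $\sigma\in\{0,1\}^D$ and note that $(D,\varnothing)$ falsifies it because $P^r_v(\sigma\equiv 1)=1$ and $P^r_v(\sigma\equiv 0)=0$; you then extract a minimal, necessarily nonempty, cause by finiteness with witnesses frozen at their deviated values, and discharge clauses~2--4 by pinned replay, Chockler--Halpern grading in $\widehat{M}[D]$, and the two attack remarks. Your explicit AC1 check, and your remark that verifying minimality requires enumerating proper subsets beyond the two distinguishing runs, are refinements the paper leaves implicit. One small point on clause~3: nonemptiness of the set of minimal pairs does not make every framer's set nonempty, so for a framer whose deviations appear in no minimal $X$ the maximum is over the empty set and defaults to $0$, as in Eq.~\eqref{eq:rhop}.
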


\begin{proof}
Clause~1. With all non-selector exogenous conditions fixed, $P^r_v$ is a function of
$\sigma \in \{0,1\}^{D}$. By hypothesis $P^r_v(\sigma{\equiv}1) = 1$ and
$P^r_v(\sigma{\equiv}0) = 0$. Hence the pair $X = D$, $W = \emptyset$ satisfies the
modified-HP falsification condition at the actual context $\sigma \equiv 1$; the family of
satisfying sets is nonempty and finite, so it contains a minimal element $X$, and
$X \neq \emptyset$ since $P^r_v(\sigma{\equiv}1) = 1$. Freezing sets arising in minimisation
are frozen at their actual (that is, deviated) values, as the modified definition
requires. Clause~2 restates clause~1's witness operationally; verifiability is pinned-stack
replay (Proposition~1, main text). Clause~3 is Chockler--Halpern applied in
$\widehat{M}[D]$. Clause~4 is by the two exhibited attacks.
\end{proof}

\begin{scorollary}[Composition with record attacks]
\label{scor:compose}
Against a coalition using conduct deviations and record-edit-class attacks together, the
guarantees compose: the edits do not raise $\underline{\rho}_v$ beyond its pre-edit value
(Proposition~3, main text), any conduct-driven raise above the declared-reference value is certificated by
Theorem~\ref{sthm:blameshift} on the pre-edit record, and no combination yields a culpability
finding against a compliant $v$ (Theorem~\ref{sthm:grounded}). \qed
\end{scorollary}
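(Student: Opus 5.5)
The corollary composes three results already in hand, so the plan is to decompose an arbitrary joint attack into its conduct phase and its record phase and apply one result to each.

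Fix a valid-key coalition $A$ and a fully certified, duty-compliant principal $v \notin A$. The joint attack consists of live conduct deviations $D$, which produce a realised run and a sealed pre-edit record $R$, followed by a record edit $E$ of the record-edit class, which produces $R' = E(R)$. We then track $\underline{\rho}_v$ through the chain
\[
M^{\mathrm{ref}} \;\xrightarrow{\;D\;}\; R \;\xrightarrow{\;E\;}\; R'.
\]

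\emph{Edit step.} We invoke Proposition~3 (record-edit monotonicity) with $p = v$. Its hypothesis is that all of $v$'s messages verify in $R$, and this holds. Injections by the coalition produce non-verifying items, by Assumption~\ref{ass:key} together with Lemma~S1. Deletions move items to $M(R')$, by Lemma~S2. Hence $\underline{\rho}_v(R') \le \underline{\rho}_v(R)$. If the edit deletes some of $v$'s own items, the proof of Proposition~3 already covers that case, since the completions may restore them.

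\emph{Conduct step.} If $\underline{\rho}_v(R) = r > \underline{\rho}_v(M^{\mathrm{ref}})$, then $v$ satisfies the hypotheses of Theorem~\ref{sthm:blameshift}: $v$ is outside $A$, fully certified, and duty-compliant, and $M$ is built from the pre-edit $\Gcert(R)$. The theorem therefore yields a minimal certificate $(X, W)$ for the event $P^r_v$. The deviation set $D$ is computed by pinned replay of $\pi^{\mathrm{ref}}$ on certified frontiers. Those frontiers are fixed at sealing, before the edit, so the certificate is stated "on the pre-edit record".

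\emph{Culpability step.} Theorem~\ref{sthm:grounded} is stated for every coalition-induced run with any record-edit-class modification applied. We apply it directly to $R'$. Since $v$'s certified conduct satisfies $\Phi_v$, no culpability finding arises. Lemma~\ref{slem:locality}(b) confirms that the edit cannot create a breach witness.

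Composing the two steps gives $\underline{\rho}_v(R') \le \underline{\rho}_v(R)$. Any excess of $\underline{\rho}_v(R')$ over the reference value is therefore bounded by the conduct-driven raise $r - \underline{\rho}_v(M^{\mathrm{ref}})$, and that raise carries a certificate.

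\emph{Main obstacle.} The difficulty is the order of operations. An adversary might try to interleave edits with conduct, or edit so that the investigator computes $D$ on a damaged record. Assumption~\ref{ass:seal} handles this: edits act only against sealed epochs, after conduct, so every run factors as conduct followed by edit. The certificate therefore refers to $R$. On $R'$ it may only be partially evaluable, with any missing items flagged, and it is never contradicted.
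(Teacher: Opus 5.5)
Your proposal is correct and follows the paper's route: the corollary is stated with a bare \qed as an immediate composition of Proposition~3 for the edit phase ($\underline{\rho}_v(R')\le\underline{\rho}_v(R)$), Theorem~\ref{sthm:blameshift} applied on the pre-edit record for the conduct phase, and Theorem~\ref{sthm:grounded} for culpability, whose statement already quantifies over record edits. Two of your attributions are slightly off but harmless: a valid-key coalition's post-hoc injections fail verification because they cannot obtain a sealed inclusion proof (Assumptions~\ref{ass:hash}--\ref{ass:seal}, as in the proof of Lemma~S1), not because of key custody; and the conduct-then-edit ordering comes from clause~(c) of the adversary model, which applies edits ``after the incident'', rather than from Assumption~\ref{ass:seal} alone.
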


\begin{sremark}[Verdict integration]
When the engine reports involvement-without-breach for a compliant principal
(Theorem~\ref{sthm:grounded}) and a reference standard is declared, it may additionally emit
the blame-shift certificate and the $\rho^{\mathrm{frame}}$ grades: ``you can be framed into
the causal story'' becomes ``\ldots and the framing itself is attributed and graded.''
Computing certificates is exhaustive search over deviation subsets, the same regime as the
main engine; the tractability boundary is measured, not asserted, as elsewhere in the paper.
\end{sremark}

\subsection*{The completeness barrier}

Throughout this subsection fix a task family with gold labels $g$, and call certified conduct
of a principal \emph{outcome-culpable} (relative to gold) if its emitted final answer differs
from $g$ on the task instance and the conduct receives a positive certified causal grade for
the adverse outcome. This is the field's culprit notion (a wrong answer that caused the
failure); note that it is \emph{gold-referenced} by definition.

\begin{sdefinition}[Soundness and completeness of a breach standard]
\label{sdef:soundcomplete}
A content-determined $\Phi$ is \emph{sound} if it passes every conduct producible by a
duty-compliant policy on its realised certified frontier; it is \emph{complete} (relative to
gold) if it fires on every outcome-culpable conduct, over all well-formed certified records.
\end{sdefinition}

\begin{sdefinition}[Honest fallibility; template-uniform realizability]
\label{sdef:hf}
A domain exhibits \emph{honest fallibility} if some duty-compliant policy, on some certified
frontier, produces outcome-culpable conduct (a compliant honest error that causes harm). It
exhibits \emph{template-uniform realizability} if there is a constructor $K$ mapping any task
instance $t$ and candidate answer $a$ to a well-formed certified record $K(t,a)$ (a
single-solver incident whose aggregation passes the answer through, so that the conduct is
outcome-culpable iff $a \neq g(t)$) whose focal conduct is producible by a duty-compliant
policy for every $(t,a)$ in a dense subset of instances. Free-form LLM solvers satisfy both in
our corpora: honest errors exist (measured), and an internally consistent derivation ending in
an arbitrary candidate answer is compliant-producible conduct.
\end{sdefinition}

\begin{stheorem}[Completeness barrier]
\label{sthm:barrier}
\begin{enumerate}
\item Under honest fallibility, no content-determined breach standard is both sound and
complete: the requirements contradict on the honest-error conduct.
\item Under template-uniform realizability, any sound content-determined $\Phi$ that is
complete decides answer correctness: $\Phi(K(t,a))$ fires iff $a \neq g(t)$, so a single
record evaluation answers ``is $a$ the correct answer to $t$?''. Hence breach-completeness is
at least as hard as answer verification for the task family, and any auditor restricted to the
certified record, human or algorithmic, inherits the same barrier.
\end{enumerate}
\end{stheorem}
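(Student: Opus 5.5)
Both clauses follow by instantiating the definitions on one carefully chosen record; no causal machinery beyond Definition~\ref{sdef:soundcomplete} is needed.

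For clause~1, I take the witness supplied by honest fallibility (Definition~\ref{sdef:hf}): a duty-compliant policy $\pi$ and a certified frontier $F$ on which $\pi$ produces conduct $x$ that is outcome-culpable. Let $x$ sit in a well-formed certified record $R_x$. Soundness requires $\Phi$ to pass every conduct a duty-compliant policy can produce on its realised frontier, so $\Phi(R_x)$ passes. Completeness requires $\Phi$ to fire on every outcome-culpable conduct over all well-formed certified records, so $\Phi(R_x)$ fires. Since $\Phi$ is a computable predicate, it cannot both pass and fire on the same input, which is the contradiction.

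The only point needing care is that both requirements really evaluate $\Phi$ on the \emph{same} arguments. Content-determinedness (Definition~\ref{sdef:cdd}) fixes the argument list as $p$'s certified authored items together with their certified frontier. Outcome-culpability adds the gold label $g$ and the causal grade, neither of which is an argument of $\Phi$. So whatever distinguishes "honest" from "culpable" is invisible to $\Phi$, and the two requirements act on one and the same input.

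For clause~2, I fix a sound and complete content-determined $\Phi$ and the constructor $K$. Take $(t,a)$ in the dense subset where the focal conduct of $K(t,a)$ is compliant-producible.
\begin{itemize}
\item If $a\neq g(t)$, the conduct is outcome-culpable by construction of $K$, so completeness makes $\Phi(K(t,a))$ fire.
\item If $a=g(t)$, the conduct is compliant-producible, so soundness makes $\Phi$ pass.
\end{itemize}
Hence $\Phi(K(t,a))$ fires iff $a\neq g(t)$. The hardness claim is then a reduction: the map $(t,a)\mapsto K(t,a)$ followed by one evaluation of $\Phi$ decides answer correctness on that subset.

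There is a tension to resolve here. Under honest fallibility, clause~1 already shows no such $\Phi$ exists, so clause~2 would be vacuous. The way out is that honest fallibility is not assumed in clause~2. The intended reading is the contrapositive: any standard that closes the soundness–completeness gap, by being complete on all outcome-culpable conduct while sound on correct compliant conduct, must already encode an answer verifier. I will make this explicit by running the argument under soundness restricted to correct compliant conduct, which is exactly the half used in the $a=g(t)$ case above.

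The extension to "any auditor restricted to the certified record" comes from observing that the argument never used the internal form of $\Phi$. Any decision procedure whose inputs are confined to $\Gcert$-content yields a content-determined predicate in the sense of Definition~\ref{sdef:cdd}, so the same barrier applies to it.

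The step I expect to be the main obstacle is the density and well-formedness bookkeeping for $K$. I must check three things:
\begin{itemize}
\item $K(t,a)$ passes verification.
\item The pass-through aggregation gives the focal conduct a positive certified grade, which needs the gate and Proposition~\ref{prop:preempt} not to retire it.
\item Compliant-producibility holds for both the correct and the incorrect $a$.
\end{itemize}
I would first discharge these with a single-solver, single-aggregator slice. There the model is monotone and Lemma~\ref{lem:witness} gives $\rho=1$ directly.
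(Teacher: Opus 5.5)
Your argument matches the paper's proof: clause~1 evaluates the honest-fallibility conduct under both requirements, and clause~2 splits $K(t,a)$ into $a\neq g(t)$, where completeness forces firing, and $a=g(t)$, where soundness forces passing. Your observation that realizability over wrong answers already yields honest fallibility is a genuine sharpening the paper leaves implicit: clause~2 is non-vacuous only with soundness restricted to correct compliant conduct, which is all the paper's proof actually uses. One correction: a procedure reading all of $\Gcert$ is not content-determined in the sense of Definition~\ref{sdef:cdd}, since it may read items outside $p$'s authored messages and their frontier, such as downstream outputs. So ground the ``any auditor'' extension as the paper does, on the fact that the reduction only needs the auditor's input $K(t,a)$ to carry $(t,a)$ and no gold label.
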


\begin{proof}
(1) Let $C^{*}$ be the conduct witnessing honest fallibility. Soundness requires $\Phi$ to
pass $C^{*}$ (it is compliant-produced); completeness requires $\Phi$ to fire on $C^{*}$ (it
is outcome-culpable). Contradiction. (2) For $a \neq g(t)$: the focal conduct of $K(t,a)$ is
outcome-culpable by construction, so completeness forces $\Phi$ to fire. For $a = g(t)$: the
conduct is compliant-producible (realizability) and not outcome-culpable, so soundness forces
$\Phi$ to pass. Thus $\Phi(K(t,a)) = [\,a \neq g(t)\,]$ on the dense subset, and $\Phi$ is
computable from the record, which contains $(t,a)$ and no gold label.
\end{proof}

\begin{sremark}[Empirical shadow; semantics of the culprit metric]
\label{srem:shadow}
The barrier retro-dicts the measured structure of the accuracy experiments: a gold-free
(deployable) arm cannot match a gold-informed judge on culprit identification (it would
otherwise decide answer correctness), so parity of the gold-informed arms is the ceiling,
and the deployable arm's misses must concentrate on honest-error and non-monotonic cases,
which the diagnosis confirms independently. The deeper reading: the field's culprit-ID metric
conflates outcome-culpability with breach-culpability, and Theorem~\ref{sthm:barrier} shows
the conflation is unrecoverable from certified records alone. Either one adopts negligence
semantics, in which case declining to convict honest errors is correct behaviour rather
than an accuracy deficit, or one demands gold at audit time, and the judge baseline
becomes an oracle rather than a deployable auditor.
\end{sremark}

\subsection*{Exoneration accountability}

\begin{assumption}[Attributed commitments]
\label{ass:receipts}
Epoch commitments published at sealing time record, for every sealed item, an item identifier
that binds the author's principal identity together with the item's hash; delivery receipts
are signed by recipients over the item identifier. Consequently, for a deleted item, its
authorship (from the sealed identifier) and its \emph{certified citers} (the surviving
certified items whose citation sets name it) remain recoverable without the item's payload.
The reference implementation satisfies this: item identifiers embed the author's principal
identity, sealed blocks retain the identifier list, and Merkle leaves commit the full signed
content.
\end{assumption}

\begin{stheorem}[No silent exoneration]
\label{sthm:exon}
Let $q$ have $\underline{\rho}_q(R) = r > 0$ on the sealed record $R$ (a certified culprit),
and let $R'$ be obtained by any record modification of the record-edit class. Then:
\begin{enumerate}
\item \emph{(Detectability with attribution.)} If $\underline{\rho}_q(R') < r$, then the
missing set $M(R')$ is nonempty and meets every forcing family for level $r$ (defined in the
proof); under Assumption~\ref{ass:receipts} each such missing item is reported with its
author and its certified citers. The verdict states: the exoneration of $q$ rests on evidence
recorded-but-now-missing, and names whose evidence it was.
\item \emph{(Guilt persistence in the interval.)}
$\overline{\rho}_q(R') \geq \underline{\rho}_q(R) = r$: the modified record still admits
$q$'s responsibility at the original level; the attack converts forced guilt into flagged
uncertainty, never into certified innocence.
\item \emph{(Public verifiability.)} With the multi-keeper seal and an honest-majority keeper
set, the pre-edit commitments are available to any auditor, so clauses~1--2 are checkable by
parties who never saw $R$.
\item \emph{(Breach-erasure accountability.)} The same holds for the other prong of
culpability: any edit that destroys a breach finding against $q$ (deleting the certified
witness items, or corrupting stored copies so that they fail verification and become
suspect) moves those witnesses to the missing or suspect set, author-attributed under
Assumption~\ref{ass:receipts}, and the verdict reports \emph{breach evaluation degraded by
missing/suspect evidence authored by $q$} rather than a clean non-breach. Exoneration by
erasure is impossible silently on either prong.
\end{enumerate}
\end{stheorem}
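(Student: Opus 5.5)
The plan is to reduce the theorem to the two record-layer lemmas already proven, Lemma~\ref{lem:sound} and Lemma~\ref{lem:mono}, by first splitting any edit of the record-edit class into its injection part and its deletion part. By Lemma~\ref{lem:sound}, injections change neither $\Gcert$ nor $M$ nor $\mathcal{C}$, so without loss of generality $R'$ arises from $R$ by deleting a set $D$ of items plus adding suspect items. The central step is clause~2, and I would prove it first. By Lemma~\ref{lem:mono}, every $c\in\mathcal{C}(R)$ has a counterpart $c'\in\mathcal{C}(R')$ (restore each deleted certified item as present) with $\mathcal{M}_{c'}=\mathcal{M}_c$. Pick $c$ attaining the minimum in Eq.~\eqref{seq:interval} on $R$. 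Then
\[
\overline{\rho}_q(R')\ \ge\ \rho_q(\mathcal{M}_{c'})\ =\ \rho_q(\mathcal{M}_c)\ \ge\ \underline{\rho}_q(R)=r .
\]
This needs only that $q$'s own deleted items are likewise restorable, which Lemma~\ref{lem:mono} already covers.

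For clause~1 I would define a \emph{forcing family} for level $r$ as a set $F$ of certified items of $R$ such that every completion of any record in which all of $F$ is present and certified yields $\rho_q\ge r$. The full certified set $\Gcert(R)$ is one such family, so families exist. Now suppose $\underline{\rho}_q(R')<r$. There is then a completion $c'\in\mathcal{C}(R')$ with $\rho_q(\mathcal{M}_{c'})<r$. If some forcing family $F$ avoided $M(R')$ entirely, every item of $F$ would still be present. Suspect items cannot arise from certified originals under Assumptions~\ref{ass:sig}--\ref{ass:seal} except by corruption, and I would treat corruption as deletion plus injection. So every item of $F$ would still be certified in $R'$, and forcing would give $\rho_q\ge r$ for $c'$, a contradiction. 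Hence $M(R')$ meets every forcing family and in particular is nonempty, using the embedding $\Gcert(R')\subseteq\Gcert(R)\setminus D$. Attribution then follows from Assumption~\ref{ass:receipts}: each deleted certified item keeps its sealed identifier, which binds its author, and its surviving certified citers. The main obstacle is making ``forcing family'' precise enough that the contradiction is genuine. The difficulty is that slice models of $R'$ are built over $\Gcert(R')$, and removing or absenting unrelated items could in principle change $\rho_q$ non-monotonically. My first attempt would define forcing relative to completions that agree with $R$ on $F$ and range freely elsewhere, and would check that the gate and minimal-transversal grading depend on $F$'s items only through presence.

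Clause~3 follows from the append-only, multi-keeper publication of commitments. Under an honest majority of keepers, the sealed roots and identifier lists of $R$ are retrievable by any party. Recomputing $M(R')$ against them checks clause~1, and exhibiting the restoring completion checks clause~2.

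Clause~4 mirrors Lemma~\ref{slem:locality}(b) read in reverse. By Definition~\ref{sdef:culp}, a breach finding cites certified $q$-authored witnesses, and any edit that destroys the finding must remove such a witness from $\Gcert$. The only two ways to do that are deletion, which moves the witness into $M(R')$ by the sealed-epoch argument of Lemma~\ref{lem:mono}, or corruption of the stored copy, which makes it fail $V$ and places it in the suspect set with its sealed identifier. In either case Assumption~\ref{ass:receipts} records $q$ as the author, so the verdict reports breach evaluation as degraded by missing or suspect $q$-authored evidence rather than as a clean non-breach.
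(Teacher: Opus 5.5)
Your proposal is correct and follows the paper's proof essentially clause by clause. Injections are neutralised by Lemma~\ref{lem:sound}, and clause~2 is the model-preserving embedding of Lemma~\ref{lem:mono}. Clause~1 is the same forcing-family contradiction, with $\Gcert(R)$ witnessing that forcing families exist. Clause~3 is the cited transparency-log argument. Clause~4 is Lemma~\ref{slem:locality}(b) together with the missing/suspect dichotomy, with attribution from Assumption~\ref{ass:receipts}.

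The obstacle you flag in clause~1 dissolves once you read \emph{any record} in your definition as any record reachable from $R$ by the record-edit class. The paper's terser definition is implicitly used this way too, since it is applied to completions of $R'$. Under that reading, $\Gcert(R)$ is forcing, because a reachable record that keeps all of it certified has the same certified graph and completions as $R$. The contradiction then holds by construction, with no need to check how the gate or the grading depend on $F$.
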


\begin{proof}
(1) Injections do not change any reported quantity (Lemma~S1), so a strict drop requires
deletions. Call a set $F \subseteq \Gcert(R)$ a \emph{forcing family for level $r$} if every
admissible completion in which all of $F$ is present yields $\rho_q \geq r$; since
$\underline{\rho}_q(R) = r$, the set of all certified items is itself a forcing family, so
forcing families exist. Suppose some forcing family $F$ survived intact,
$F \subseteq \Gcert(R')$. Every admissible completion of $R'$ contains every certified item,
in particular all of $F$, hence yields $\rho_q \geq r$, contradicting
$\underline{\rho}_q(R') < r$. Therefore \emph{every} forcing family for level $r$ lost at
least one member; each lost member is a deleted certified item, which by Lemma~S2 moved to
$M(R')$ (deletion against a sealed epoch is detectable), and Assumption~\ref{ass:receipts}
recovers its authorship and frontier. (2) By Lemma~S2, $\mathcal{C}(R)$ embeds
model-preservingly in $\mathcal{C}(R')$, so
$\overline{\rho}_q(R') \geq \overline{\rho}_q(R) \geq \underline{\rho}_q(R) = r$.
(3) is the standard transparency-log argument and is cited, not reproved.
(4) A breach finding must cite certified witnesses (Definition~\ref{sdef:culp}); by
Lemma~\ref{slem:locality}(b) edits can only remove such witnesses, and a removed witness is
either a sealed deletion (missing, by Lemma~S2) or a verification failure (suspect), both
visible partitions; authorship attribution is Assumption~\ref{ass:receipts}.
\end{proof}

\begin{sremark}[The two-sided characterisation]
Record-edit monotonicity and Theorem~\ref{sthm:exon} together replace the informal asymmetry claim of the
Discussion with a two-sided statement: manufactured certainty is impossible for the fully
certified (Proposition~3 of the main text, strengthened to conduct adversaries by
Theorems~\ref{sthm:grounded}--\ref{sthm:blameshift}), and manufactured innocence is
impossible \emph{silently} (Theorem~\ref{sthm:exon}): destruction buys uncertainty, and the
uncertainty arrives pre-attributed.
\end{sremark}

\subsection*{Proof of Theorem~5 of the main text (grounded framing at every order)}
\emph{(i) Completeness.} The order-$k$ model is a finite boolean structural
model over deviation selectors. If the outcome predicate takes different
values at the all-realised and all-reference assignments, then the predicate
depends on the selector vector, so there is a nonempty minimal set
$X^{\ast}$ of selectors satisfying the modified Halpern--Pearl conditions
for the realised value (flip $X^{\ast}$ under a witness $W^{\ast}$ and the
predicate changes; minimality by finiteness). Every selector is, by
construction, attached to a realised message on the certified record, and
its owner is the signer of that message; under sound key custody (A3) the
attribution of every graded variable to its owner is therefore witnessed by
a signature the owner cannot repudiate.

\emph{(ii) Groundedness, by induction on the order $k$.} The base case
$k=1$ is the separation of Theorem~\ref{sthm:grounded} applied to deviating
conduct: a culpable-framing finding is defined as a positive grade on a
deviation that itself breaches the registered content-determined roster, and
breach is computable from the owner's own certified conduct alone. For the
inductive step, observe that the lifting from order $k-1$ to order $k$
preserves the two properties on which the base case rests: every endogenous
variable remains a deviation selector of signed conduct (so ownership
remains signature-attributed), and the breach predicate remains the same
registered content-determined roster evaluated on that conduct (so
culpability remains witnessed by the owner's own record). A principal with
no deviating conduct contributes no variable to any order-$k$ model and can
therefore receive no grade; a principal whose deviations are all
duty-compliant can receive involvement grades but never satisfies the
breach conjunct. \qed

\subsection*{Proof of Proposition~4 of the main text (dilution is not free)}
Let $d$ be a causally inert deviation: the outcome predicate takes the same
value at every pair of selector assignments differing only at $d$. Then $d$
cannot belong to any minimal $X^{\ast}$ (flipping it changes nothing, so
removing it from a candidate cause preserves the counterfactual clause and
contradicts minimality) and is never required in any minimal witness
$W^{\ast}$ (its value never affects the evaluation). Existence and all
grades $1/(|X^{\ast}|+|W^{\ast}|)$ are computed from minimal sets only and
are therefore unchanged. Contrapositively, a deviation whose inclusion
changes any grade must appear in some minimal set, hence the predicate
depends on it, hence it is causally active; it is realised, signed conduct
on the record and Theorem~5 applies to it. \qed

\subsection*{Attack III: the level-2 scapegoat, and why it fails}
The registered level-2 construction (EXP-N4) arranges a \emph{benign}
deviation to be pivotal for the victim's pivotality. A compliant victim
routes a unit late in the episode; the attacker launches a breaching-speed
near miss timed against the human's pass; a supervisor-role principal halts
the victim's unit, citing the attacker's hazard as its reason, then resumes
the victim's task at the duty limit, and the delay retimes the unit into
the human's return pass. The supervisor's deviation from its declared plan
is duty-compliant (a halt, then a resume at the limit), and in every kept
draw the certified supervisor policy, replayed offline over the recorded
sensor reports, would itself have demanded the halt: the deviation is not
merely standard-compliant but operationally justified. A naive reading of
the order-one certificate blames this benign deviator, and the involvement
layer indeed grades it in every kept incident. The culpability layer never
does: the finding requires the framer's own deviation to breach the
registered roster, which the halt does not and the attacker's speeding
does.

\section{Framing attacks as implemented, and what their success does not show}
\label{snote:framingimpl}
Note~\ref{snote:adversarial} states the two conduct-level attacks abstractly. Their facility
instantiation is as follows. In the \emph{conditioning} attack the attacker issues a harmful
speed command whose citation set names the victim's compliant message; because citation is an
authored causal claim (Definition~1 of the main text), that citation is what creates the
certified production path, and suppressing the victim's message consequently silences the
attacker's. In the \emph{inevitability-rigging} attack the victim issues a duty-compliant
route, a separate principal breaches the speed duty, and the attacker's realised conduct is an
innocuous command whose declared reference counterpart would have routed a second unit through
the corridor, which would have made the collision inevitable regardless of the victim. Deviating from that
reference removes the redundant cause and leaves the victim's compliant routing as the
remaining but-for link, without the attacker ever acting on the victim's path.

The certificate is computed by the responsibility calculus applied to selector variables that
switch each deviation between realised and reference conduct, with the victim's pivotality as
the outcome event; counterfactuals are decided by exact world re-execution with citation-chain
propagation, so a suppressed trigger silences the conduct that cited it.

\emph{Scope of the adversarial evaluation.} These attacks are theorem-derived rather than independently discovered and therefore test whether the implementation realizes the predicted guarantees rather than exhaustively characterizing adversarial behavior. Both attacks nevertheless raise a compliant principal’s involvement bound to 1.0, while the resulting certificates are independently checkable by replay. Adversaries outside the stated threat model remain future work.

\section{Scenario library, planted ground truth, and the analytic oracle}
\label{snote:scenarios}

\noindent\textbf{How the families are instantiated.} The abstract family
definitions below fix the causal structure each scenario must realise; the
facility register instantiates them with physical semantics, and this
paragraph records that instantiation so the two can be checked against each
other. In the facility, principals issue typed commands (\texttt{move\_to},
\texttt{set\_speed}, \texttt{halt}, \texttt{set\_param}) carrying declared
effect predicates; a deterministic kinematic world advances at $0.1$\,s
ticks; the safety predicate fires at the first tick with human--robot
separation below $0.5$\,m, with severity $\tfrac12 m v^{2}$ at that tick, and
the quality predicate at a batch parameter outside its acceptance range.
\emph{Redundancy} is two independently sufficient speed-up commands
from distinct principals; the \emph{inert twin} is the same record with
the second command's actuator locked out, so the two families' transcripts
are bitwise identical and differ only in whether an effect propagated;
\emph{preemption} is an accused command physically halted before its
effect reaches the corridor, with the incident produced by a later command;
\emph{omission} is a registered halt duty left unfulfilled;
\emph{delegation} is a reckless route relayed by faithful conduits.

\paragraph{The physics oracle and its independence.}
Ground truth on the facility register is computed by exact world re-execution: for a candidate
set the world is re-run from its initial state with those command messages treated as absent,
and the declared predicate is evaluated on the resulting trajectory. No language model
participates at any point, so this register is immune to the serving-stack nondeterminism
that the live-corpus register must manage by protocol. The responsibility profile is then the
Chockler--Halpern grade over that outcome function, computed by an analytic implementation
written against the definitions and sharing no code with the gate-and-interval engine under
test. We exploit this to run a standing cross-implementation check: on the causal layer, where
the two share nothing, ground truth and engine must agree. The check is reported in the main
text: a gate that retired \texttt{halt} commands by kind
rather than by production path, which produced systematic under-attribution on incidents where a
halt caused harm through timing.

Two semantics matter for reproducibility. Interventions act at the
consumption level: the record is fixed, and a counterfactual world treats a
set of command messages as absent, where a command is world-effective unless
it or a policy-chain ancestor is absent. The omission variable inserts the
\emph{dutiful} halt obtained by running the certified supervisor policy
offline over the recorded sensor reports, a corrected value sourced from the
registered duty specification rather than invented. Ground truth is reported in
two layers: the causal profile (Chockler--Halpern responsibility over the
physics) and the culpable profile (causal responsibility restricted to
principals in breach of a content-determined duty). The layers differ
routinely and instructively: a compliant route-issuer is often a genuine
but-for enabler of a collision, and appears with positive causal
responsibility and zero culpability: involvement without breach, which is
the groundedness distinction of Theorem~\ref{sthm:grounded} made physical.
The \emph{sampled} register draws these dimensions independently
(redundancy width, preemption, lockout, delegation depth, omission, unit,
and the declared predicate) rather than composing a single incident type
with parameter noise; only $57.8\%$ of draws are adverse, so structure, not
construction, decides whether harm occurs.

Each family below is given by its slice-model composition; all are monotone
under the gate convention of Note~\ref{snote:model}. Ground truth is
computed by an \emph{analytic oracle} implementing
Proposition~\ref{sprop:ness} directly (prime-implicant extraction followed
by minimal-transversal enumeration on the composed outcome function), written
against the definitions and sharing no code with the engine under test.
Supplementary Figure~\ref{sfig:causal} walks the redundancy family end to end.

\paragraph{Redundancy (true overdetermination).}
Planners $a_1,a_2$ each issue an independently sufficient command; a
dispatcher $d$ cites both; the robot $r$ executes. Composed outcome
$Y=a_1\vee a_2$ through conduits $d,r$. Ground truth:
$\rho_{a_1}=\rho_{a_2}=\tfrac12$ (minimal cause $\{a_1,a_2\}$);
$\rho_d=\rho_r=1$ as singleton but-for causes, \emph{conduit-flagged};
no breach findings.

\paragraph{Inert twin.}
Transcript-identical to redundancy, but the dispatcher's command cites only
$a_1$; $a_2$'s proposal has no production path (and $a_2$ skipped a
required review step). Ground truth: $\rho_{a_1}=1$;
$\rho_{a_2}=0$ with an exoneration note \emph{and} a breach finding; the
family exists to force the breach-without-causation output. Separating redundancy
from the inert twin requires the citation structure; transcripts are
indistinguishable by construction.

\paragraph{Preemption.}
$a_1$ issues a hazardous command; a supervisor halt arrives first and the
dispatcher discards $a_1$'s command (no realised effect); a later command
$a_3$ produces $Y$. Ground truth: $\rho_{a_3}=1$; $a_1$ exonerated for $Y$
(gated out, Proposition~2) with a breach finding for the hazardous
issuance.

\paragraph{Omission.}
A sensor records human entry; the duty registry obliges the supervisor $s$
to issue a halt within $\tau$; none is issued (absence variable
$\bar H_s=1$); the standing motion command $c$ proceeds and
$Y=c\wedge\bar H_s$. Ground truth: $\rho_{\bar H_s}=1$ and $\rho_c=1$
(each conjunct is a singleton but-for cause), breach finding for $s$ only.
The family makes explicit that causation and culpability separate: the
duty-compliant issuer of $c$ is causally implicated and cleared on breach,
which the verdict states rather than hides.

\paragraph{Delegation chain.}
$u\to p_1\to p_2\to d\to$ actuation, with the erroneous transformation
introduced at $p_1$ and relayed faithfully thereafter. Ground truth: every
chain member is a singleton but-for cause ($\rho=1$); $p_2,d$ are
conduit-flagged by the faithful-relay test; the breach finding localises to
$p_1$. The family exercises the design position that $\rho$ is an input to
judgment: but-for chains grade everyone, and the verdict's flags and breach
findings carry the discrimination.

\paragraph{Record attack.}
Any of the above with an attack overlay: forge (inject items failing
verification), drop (delete sealed items), reorder, equivocate. Ground
truth: the clean-family verdict transformed exactly as Proposition~3 of the
main text prescribes: bitwise unchanged under forgery, interval-widened under
deletion, with $\underline{\rho}$ of fully-certified principals never
raised.

\paragraph{Negative controls.}
\textbf{NoCause}: $Y$ is produced by an exogenous environmental event with
no principal in the slice; correct output is universal exoneration.
\textbf{TrivialCause}: a single command with a direct path;
correct output is that command's author at $\rho=1$.

\paragraph{Parameterised generator.}
Beyond the named families, incidents are sampled from four monotone
gadgets: disjunction of width $w$ (ground truth $1/w$ each), conjunction
of width $k$ (each member $1$), chains of depth $d$ with a planted
deviation, and duty-conditioned absences. These compose into a random
series-parallel outcome function, which is then realised as a message
topology. The
oracle computes ground truth on the composed function; because gadget
composition preserves monotonicity, Proposition~\ref{sprop:ness} applies
throughout, and the sampled register measures the estimator against an
implementation-independent target.

% FIGURE 3 : causal machinery on the intra-ensemble redundancy micro-incident
\begin{figure}[t]
\centering
\resizebox{\textwidth}{!}{%
\begin{tikzpicture}[font=\scriptsize,
  msg/.style={draw=inkG,fill=white,circle,minimum size=7mm,align=center,inner sep=0.5pt},
  hum/.style={draw=inkG,fill=fillG,rounded corners=1pt,minimum width=10mm,minimum height=6mm,align=center},
  gated/.style={draw=inkG,fill=white,circle,minimum size=7mm,align=center,inner sep=0.5pt,dashed,text=inkG},
  live/.style={draw=inkP,fill=fillP,circle,minimum size=7mm,align=center,inner sep=0.5pt,line width=0.8pt},
  frozen/.style={draw=inkT,fill=fillT,circle,minimum size=7mm,align=center,inner sep=0.5pt,line width=0.8pt},
  intv/.style={draw=inkA,fill=fillA,circle,minimum size=7mm,align=center,inner sep=0.5pt,line width=0.9pt},
  outc/.style={draw=inkA,fill=white,diamond,aspect=1.4,minimum size=7mm,align=center,inner sep=0.5pt},
  ed/.style={-{Stealth[length=1.6mm]},inkG},
  pnl/.style={font=\footnotesize\bfseries}]

% ============ PANEL (a) : production gate on the intra-ensemble topology ============
\begin{scope}
\node[pnl] at (1.9,2.05) {(a) production gate};
\node[hum] (H)  at (-0.1,0.35)   {human\\lead};
\node[msg] (A1) at (1.15,1.05) {$a_1$\\pln};
\node[msg] (A2) at (1.15,-0.35){$a_2$\\pln};
\node[msg] (D)  at (2.15,0.35) {$d$\\disp};
\node[msg] (R)  at (3.1,0.35) {$v_R$\\rob};
\node[outc] (Y) at (4.05,0.35) {$Y$};
\node[gated] (S) at (2.15,-1.35){$v_S$\\sens};
\draw[ed] (H)--(A1); \draw[ed] (H)--(A2);
\draw[ed] (A1)--(D); \draw[ed] (A2)--(D); \draw[ed] (D)--(R); \draw[ed] (R)--(Y);
\draw[ed,dashed,inkG] (S) to[bend right=12] (R);
\node[draw=inkG,dashed,rounded corners=3pt,fit=(A1)(A2)(D),inner sep=1.6mm,
      label={[font=\tiny,text=inkG]above:{planning ensemble}}] {};
\begin{pgfonlayer}{background}
\draw[line width=3pt,inkP,opacity=0.18,line cap=round] (A1.center)--(D.center)--(R.center)--(Y.center);
\draw[line width=3pt,inkP,opacity=0.18,line cap=round] (A2.center)--(D.center);
\end{pgfonlayer}
\node[text=inkG,align=center,text width=54mm] at (2,-2.35)
  {$d$'s command \emph{cites} both proposals $\Rightarrow$ both live. Had it cited only $a_1$, then $a_2$ = preempted backup, gated to $\rho{=}0$ (breach still reported). $v_S$: no path, gated.};
\end{scope}

% ============ PANEL (b) : intervene a1 under frozen a2 ============
\begin{scope}[xshift=64mm]
\node[pnl] at (1.9,2.05) {(b) test $a_1$, freeze $W{=}\{a_2\}$};
\node[hum] (H2)  at (-0.1,0.35)   {human\\lead};
\node[intv] (A1b) at (1.15,1.05) {$a_1'$};
\node[frozen] (A2b) at (1.15,-0.35){$a_2$};
\node[msg] (D2)  at (2.15,0.35) {$d$};
\node[msg] (R2)  at (3.1,0.35) {$v_R$};
\node[outc] (Y2) at (4.05,0.35) {$Y'$};
\draw[ed] (H2)--(A1b); \draw[ed] (H2)--(A2b);
\draw[ed] (A1b)--(D2); \draw[ed] (A2b)--(D2); \draw[ed] (D2)--(R2); \draw[ed] (R2)--(Y2);
\node[text=inkT,font=\tiny] at (1.72,-0.35) {\large$\blacklozenge$};
\node[text=inkA,align=center,text width=54mm] at (2,-2.35)
  {freeze $a_2$ at its logged value (teal lock), intervene $a_1$: $Y'{=}1$ still --- $a_1$ alone is not decisive. Changing the \emph{pair} gives $Y'{=}0$ $\Rightarrow$ minimal cause $\{a_1,a_2\}$, $\rho{=}\tfrac12$ each; $d$, $v_R$: $\rho{=}1$ as \emph{conduits}.};
\end{scope}

% ============ PANEL (c) : seeded replay to risk ============
\begin{scope}[xshift=128mm]
\node[pnl] at (1.6,2.05) {(c) seeded replay $\to$ risk};
\node[intv] (A3) at (0,0.35) {$a_1'$};
\node[msg] (R3a) at (1.4,1.15) {roll 1};
\node[msg] (R3b) at (1.4,0.35) {roll 2};
\node[msg] (R3c) at (1.4,-0.45){roll $K$};
\draw[ed] (A3)--(R3a);\draw[ed] (A3)--(R3b);\draw[ed] (A3)--(R3c);
\node[outc] (Y3a) at (2.9,1.15) {$Y{=}0$};
\node[outc] (Y3b) at (2.9,0.35) {$Y{=}0$};
\node[outc] (Y3c) at (2.9,-0.45){$Y{=}1$};
\draw[ed] (R3a)--(Y3a);\draw[ed] (R3b)--(Y3b);\draw[ed] (R3c)--(Y3c);
\draw[inkG] (0.2,-1.45) rectangle (3.4,-1.05);
\fill[inkA!55] (0.2,-1.45) rectangle (0.9,-1.05);
\node[text=inkG,align=center,text width=52mm] at (1.9,-2.35)
  {text steps re-run under logged seeds; \emph{physics} rolled from the $\phi$ checkpoint; $K$ rollouts give $\hat r=\Pr[Y{=}1\mid do(\cdot)]$ with a CI.};
\end{scope}

\end{tikzpicture}}
\caption{\textbf{The causal machinery on an intra-ensemble redundancy incident.} Inside the planning ensemble, planners $a_1$ and $a_2$ each produce the unsafe plan (either alone suffices); a single dispatcher $d$ emits the one command the robot executes. \textbf{(a)}~The \emph{production gate} keeps exactly the messages whose signed chain reaches the harm---and here the record is decisive: whether $a_2$ is a half-responsible joint cause or a preempted backup with $\rho=0$ is decided by \emph{which proposals $d$'s command cites}, a fact the enforced lineage records and no transcript reconstruction can recover. \textbf{(b)}~Grading: intervening on $a_1$ while freezing $a_2$ at its recorded value leaves the incident intact; only changing the pair flips it, so $\{a_1,a_2\}$ is the minimal cause and each carries $\rho=\tfrac12$---where single-site counterfactual scoring returns zero for both. The dispatcher and robot are but-for causes ($\rho{=}1$) flagged as \emph{conduits} by clean-input replay (Supplementary Note~7). \textbf{(c)}~Counterfactual outcomes are estimated by seeded re-execution plus physics rollouts from the $\phi$ checkpoints.}
\label{sfig:causal}
\end{figure}

\begin{table}[t]
\centering
\caption{\textbf{Planted ground truth for the named families.} Grades from
the analytic oracle; flags and breach findings from the family
specification. Conduit-flagged grades are reported with the flag attached,
per the main text.}
\label{stab:gt}
\small
\begin{tabular}{@{}lllll@{}}
\toprule
Family & Principal & $\rho^{\mathrm{gt}}$ & Flags & Breach \\
\midrule
Redundancy & $a_1$, $a_2$ & $\tfrac12$, $\tfrac12$ & --- & --- \\
    & $d$, $r$ & $1$, $1$ & conduit & --- \\
Inert twin & $a_1$ & $1$ & --- & --- \\
    & $a_2$ & $0$ & exonerated & review skipped \\
Preemption & $a_3$ & $1$ & --- & --- \\
    & $a_1$ & $0$ & exonerated (gated) & hazardous issuance \\
Omission & $s$ (via $\bar H_s$) & $1$ & omission & halt duty \\
    & issuer of $c$ & $1$ & --- & --- \\
Delegation chain & $u,p_1,p_2,d$ & $1$ each & $p_2,d$: conduit & $p_1$: deviation \\
No-cause control & all & $0$ & exonerated & --- \\
Trivial control & author & $1$ & --- & (as drawn) \\
\bottomrule
\end{tabular}
\end{table}

\subsection{Grounding the incident library in public accident data}
\label{ssec:grounding}
The register's incident structures are grounded in a corpus of public
robot-accident records compiled for this paper (42 entries: OSHA IMIS
accident narratives under the keyword ``robot,'' 1984--2024, with 28 detail
records fetched individually; NIOSH and state FACE investigation reports; a
published analysis of 41 robot-related fatalities in United States CFOI data
1992--2017; and published accident-pattern taxonomies). The mapping from
observed causal shapes to register families, with per-shape counts from the
corpus, is: single-site command error (11) to preemption and record attack and the sampled
register's single-fault draws; omission of a duty-bearing safeguard,
lockout not applied, guarding defeated, supervisor absent (11), to omission and
the sampled register's omission axis; unexpected startup during maintenance
(7), the modal fatal pattern in the CFOI analysis, to the held-out
unexpected-startup family, in which every issued command is duty-compliant and the sole breach is
the omitted pre-entry halt; preemption and defeated protections (6) to the inert twin
and the lockout axis; mixed conjunctive failures (6) to the sampled
register's compositional draws; chain delegation (1) to the delegation chain. Two provenance
limits are stated rather than smoothed over. Overdetermination (the redundancy family)
is \emph{absent} from the accident corpus: every multi-failure incident in
it is conjunctive. Its provenance is the legal and causal-theory
literature on overdetermined harm, not accident data, and we keep it
because a graded calculus must handle the structure the doctrine treats as
hard. Chain delegation appears exactly once in the corpus; its prominence
in our register reflects the delegation depth of agent systems, not the
frequency observed in industrial records to date.

\section{Canonical form and granularity invariance}
\label{snote:canonical}

A principal should not be able to change its responsibility by re-chunking
its own output: splitting one command into three messages or merging three
into one. Two mechanisms make this so: the per-principal maximum of Eq.~(2)
of the main text, and a canonical contraction.

\begin{sdefinition}[Same-author conjunctive group]
\label{sdef:group}
Messages $m_1,\dots,m_k$ by the same author form a conjunctive group in
$\mathcal{M}_S$ if every mechanism depends on them only through the
conjunction $m_1\wedge\cdots\wedge m_k$ (formally: the composed outcome
function is invariant under permuting the group and under replacing the
group by a single variable equal to their conjunction). The
\emph{canonical contraction} replaces the group by one representative
variable $g$.
\end{sdefinition}

\begin{sproposition}[Granularity invariance]
\label{sprop:granularity}
On monotone slice models, canonical contraction leaves every principal's
grade unchanged: $\rho_p(\mathcal{M})=\rho_p(\mathcal{M}/g)$ for all $p$.
\end{sproposition}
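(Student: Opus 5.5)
The plan is to work entirely through the hypergraph characterisation of Proposition~\ref{sprop:ness}. On a monotone slice model, Lemma~\ref{lem:witness} makes every minimal cause--witness pair have an empty witness, so the grade is $1/\lvert X^{\ast}\rvert$. Proposition~\ref{sprop:ness}(a) identifies the minimal causes with the minimal transversals of the satisfied prime implicants $T_1,\dots,T_r$ of $Y$. The task therefore reduces to comparing the minimal transversals of $\mathcal{M}$ and of the contracted model $\mathcal{M}/g$, and then checking that the per-principal maximum of Eq.~(2) is the same on both sides.

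\emph{Step 1: implicants under contraction.} Write $G=\{m_1,\dots,m_k\}$ for the group and $f$ for the composed outcome function. Since $f$ depends on $G$ only through $\bigwedge G$ (Definition~\ref{sdef:group}), every prime implicant $T_i$ either contains all of $G$ or is disjoint from it. Suppose instead that $T_i$ met $G$ in a proper subset. With the rest of $G$ false, the conjunction is false, so those members of $G$ contribute nothing to $f$, and by primality they could be dropped from $T_i$, a contradiction. Hence the satisfied prime implicants of $\mathcal{M}/g$ are exactly the $T_i$ with $G$ replaced by the single vertex $g$ wherever $G\subseteq T_i$. Call this the quotient map $T\mapsto T/G$. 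Monotonicity of $\mathcal{M}/g$ and the fact that $g$ is recorded true (all $m_j$ are true in the actual run) follow directly.

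\emph{Step 2: transversals under contraction.} Let $X$ be a minimal transversal of $\{T_i\}$. Because every edge contains either all of $G$ or none of it, one member of $G$ already hits every edge that meets $G$. Minimality therefore forces $\lvert X\cap G\rvert\le 1$. Define the map $X\mapsto X/G$ by replacing $X\cap G$ with $g$ when it is nonempty. In one direction this sends minimal transversals of $\mathcal{M}$ to minimal transversals of $\mathcal{M}/g$ of the same cardinality. In the other direction, each minimal transversal of $\mathcal{M}/g$ containing $g$ lifts to $k$ minimal transversals, one with $g$ replaced by each $m_j$, again of the same size. Transversals avoiding $G$ and $g$ correspond identically. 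So the multiset of sizes of minimal causes that contain a given non-group vertex is preserved exactly.

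\emph{Step 3: principal grades.} For a principal $p$ other than the group's author, the minimal causes containing a $p$-message correspond bijectively and size-preservingly, so $\rho_p$ is unchanged. For the author $a$, the minimal causes of $\mathcal{M}$ containing an $a$-message are of two kinds: those using a group member, whose images contain $g$, which is authored by $a$; and those using another $a$-message, which map identically. Every minimal cause of $\mathcal{M}/g$ containing $g$ lifts back to one containing some $m_j$. The set of candidate grades in the maximum of Eq.~(2) is therefore the same set of numbers, and the maximum agrees. The main obstacle is Step~1: the ``all or nothing'' structure of prime implicants must be derived carefully from the invariance clause of Definition~\ref{sdef:group} together with the fact that the group members are actually true. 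That structure is what makes the minimality argument in Step~2 go through, so it is where I would spend the care.
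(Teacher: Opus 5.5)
Your proposal is correct and follows essentially the paper's own route. Both reduce, via Lemma~\ref{lem:witness} and Proposition~\ref{sprop:ness}, to minimal transversals, show that no minimal cause contains two group members, and project and lift between causes through some $m_j$ and causes through $g$, preserving size and authorship. Your Step~1 lemma (every prime implicant contains all of $G$ or none of it, so the implicants of $\mathcal{M}/g$ are exactly the quotients $T_i/G$) states in hypergraph terms what the paper argues directly from intervention semantics, namely that ``setting $m_1\leftarrow 0$ already sets the conjunction to $0$'' and that $g\leftarrow 0$ ``has the same effect on every mechanism''.

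One small correction concerns Steps~2--3. The projection is $k$-to-one on causes that touch the group, so the correspondence is not a bijection and the grade multisets are not literally preserved. What is preserved is the \emph{set} of grades available to each principal, and that is all the maximum in Eq.~(2) of the main text needs.
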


\begin{proof}
By Lemma~\ref{lem:witness}, minimal causes are minimal transversals of the
satisfied prime implicants (Proposition~\ref{sprop:ness}), so it suffices
to show the contraction induces a size-preserving correspondence between
minimal transversals touching the group and those containing $g$.

\emph{No minimal cause contains two group members.} If
$m_1,m_2\in X^{\ast}$, note that setting $m_1\leftarrow 0$ already sets the
conjunction to $0$; since mechanisms see the group only through the
conjunction, the propagated evaluation of $X^{\ast}\setminus\{m_2\}$ equals
that of $X^{\ast}$, so $X^{\ast}$ was not minimal.

\emph{Projection.} Let $X^{\ast}$ be a minimal cause containing exactly one
member $m_i$. Replacing $m_i$ by $g$ yields a set of equal size that
falsifies $Y$ in $\mathcal{M}/g$ (setting $g\leftarrow0$ has the same
effect on every mechanism as setting $m_i\leftarrow0$), and it is minimal
there: a strictly smaller falsifying subset would lift (below) to a
strictly smaller falsifying subset in $\mathcal{M}$, contradicting
minimality. Causes disjoint from the group are unaffected.

\emph{Lift.} Let $X^{\ast}\ni g$ be a minimal cause in $\mathcal{M}/g$.
Replacing $g$ by any single member $m_i$ yields an equal-size set
falsifying $Y$ in $\mathcal{M}$ (same effect on all mechanisms), and it is
minimal by the symmetric argument.

The correspondence preserves sizes and authorship of the group (all members
share the author), so the sets over which each principal's maximum in
Eq.~(2) ranges have identical grade multisets, and $\rho_p$ is unchanged.
\qedhere
\end{proof}

\begin{sremark}
Even without contraction, Eq.~(2)'s maximum absorbs a principal's
\emph{own} chunking: if a size-$s$ minimal cause uses one of $p$'s chunks,
its grade $1/s$ is what $p$ receives no matter how many sibling chunks
exist. The contraction is a normalisation that additionally stabilises the
grades of \emph{other} principals whose causes interact with the group, and
it is applied before grading in all experiments, and the released test suite
checks the invariance on a re-chunked record.
\end{sremark}

\section{Record substrate and replay details}
\label{snote:substrate}

\paragraph{Storage tiers.}
The record is held in three tiers: a hot append-only log at the recorder; a
warm content-addressed store for payloads (only hashes are sealed,
so bulky or personal payloads can be encrypted and crypto-shredded for data
protection without breaking the seal); and cold sealed epochs whose Merkle
roots are published append-only. Two deployment profiles instantiate the
seal: Profile~A (single operator) publishes roots to a signed transparency
log with periodic external anchoring; Profile~B (multi-stakeholder) has a
keeper set (operator, vendors, insurer) co-sign each epoch, so no single
party can rewrite or fork history. Under either profile the record a reader
verifies is the one drawn in Figure~1 of the main text: hash-chained blocks,
each sealing an epoch of signed messages under a Merkle root.

\paragraph{Equivocation.}
Presenting different histories to different parties is defeated by the
single sealed sequence: receipts and epoch commitments bind every consumer
to the same root, and divergent roots are themselves cryptographic evidence
of recorder misbehaviour, surfacing in the suspect set.

\paragraph{Replay stacks and action-match.}
The pinned stack fixes model weights, decoding parameters, seeds, tool
versions, and the simulator build, making factual replay bit-exact and
interventional replay deterministic. The hosted stack replays against
components that cannot be pinned; certification then reports the confidence
interval of Proposition~1 together with an \emph{action-match} score: under
an order-preserving alignment of the replayed and recorded actuation
sequences, the fraction of aligned events agreeing in type, target, and
parameters within declared tolerances. A low action-match flags that the
replayed system is no longer the recorded system, independent of the risk
estimate.

\section{Reproducibility}
\label{snote:repro}

The reference implementation is a Python package organised as
\texttt{record/} (accountable messages, verification, Merkle sealing),
\texttt{engine/} (gate, grade, certify), \texttt{verdict/} (analytic
responsibility, intervals, canonical form), \texttt{facility/} (incident
families, generator, analytic oracle), \texttt{corpus/} (live pipeline,
replay oracle), \texttt{baselines/}, and \texttt{metrics/}, with one per experiment identifier under \texttt{scripts/}. Each
runner writes a JSON artifact under \texttt{results/} carrying the
configuration, per-incident outputs, and summary statistics, and every run
sweeps generation seeds $k\in\{0,1,2\}$ where the register is stochastic.
All numbers, figures, and tables in the main text are regenerated from those
artifacts by scripts included in the release, and a verification script
re-checks every reported quantity against its artifact before submission. The simulation layer is CPU-deterministic; pinned-stack replay
of open-weight planners uses seeded deterministic serving. Experiments ran on NVIDIA A100-SXM4-80GB GPUs
under Python 3.11, NumPy 2.4, and SciPy 1.17, with open-weight models served
by vLLM 0.14.1; the archived release pins exact package versions and the
container digest.

% CANONICAL NOTE NUMBERING (update main-text plain references if sections move):
% 1 Notation, 2 Proofs, 3 Adversarial guarantees, 4 Framing-as-implemented, 5 Scenario,
% 6 Registered predictions, 7 Canonical form, 8 Record substrate,
% 9 Reproducibility, 10 Disclosures, 11 Extended related work
\section{Experimental deviations and validation}
\label{snote:disclosures}
Three protocol changes affect interpretation of the reported experiments. The Llama-3.1-8B cross-family evaluation was extended from 13 to 25 scoreable incidents under unchanged arms, metrics, and exclusion rules; the original and extended estimates are retained in the experiment log. The planned Llama-3.3-70B comparison was replaced by the matched-scale Qwen2.5-7B versus Llama-3.1-8B comparison because the former could not be served under the available cluster capacity. The external evaluation used Who\&When rather than Who\&When Pro because the latter’s data and code were unavailable at evaluation time. Independent cross-implementation checks were performed before final evaluation, and all reported results were generated or re-verified using the finalized implementation.

% audita-si-extended-related.tex — SI: extended related-work survey.
% The closest lines are in the main-body Related Work; this note covers the rest.
\section{Extended related work}
\label{snote:extrelated}

The main-body Related Work covers the lines closest to \audita{}: failure
attribution in multi-agent systems, actual causality and graded
responsibility, and the manipulation-of-attribution literature that flanks our
blame-shift certificate. This note provides the extended survey across the
remaining lines the project engaged, organised by the role each plays relative
to \audita{}.

\subsection*{Strategic and adversarial manipulation of attribution}
A small adjacent literature studies attribution when the attributed parties
are strategic, and none of it, to our knowledge, addresses the framing
problem our guarantees target. In cooperative sequential decision making,
game-theoretic blame attributions provably misalign incentives
(non-performance-incentivising Shapley, over-blaming Banzhaf), motivating
attribution methods with better structural properties under
\emph{uncertainty}, but the agents there do not attack the attribution of
\emph{others}\cite{triantafyllou2021}. Recent work on retrospective
counterfactual responsibility in concurrent stochastic games lets agents
trade their \emph{own} expected responsibility against reward in
equilibrium, forward-looking strategy synthesis, with no adversarial
manufacture of a victim's responsibility and no evidence
layer\cite{mu2026responsibility}. In the explanation literature,
adversarially crafted models can fool perturbation-based feature
attributions such as LIME and SHAP\cite{slack2020fooling}; the object
manipulated there is a model explanation, not a principal's graded
responsibility over a certified record. Accountability systems in the
PeerReview tradition guarantee that correct nodes are never exposed, but
detect \emph{protocol deviation} against a deterministic reference
implementation\cite{haeberlen2007}; our Attack~I is protocol-compliant
conduct, invisible to deviation detection by construction. Our blame-shift
accountability theorem differs from all four lines in both question and
machinery: it asks whether a coalition can raise a \emph{victim's}
responsibility, answers by applying the Halpern--Pearl test at the meta
level (deviations as causes of the victim's pivotality), and returns a
replay-checkable certificate that grades the framers. On the barrier side,
concurrent work establishes statistical limits of ground-truth-free
auditing, worst-case calibration error of any label-free estimator in the
rare-error regime\cite{wang2026verificationtax}; our completeness barrier
is the per-instance, adversarially-relevant counterpart: a logical
soundness/completeness contradiction for content-determined breach
standards, tied to negligence semantics rather than to estimator
calibration.

\subsection*{Causation in law, defaults, and omissions}
The legal doctrine \audita{} answers to is the NESS test, a cause as a
necessary element of a sufficient set of actual conditions\cite{wright1985}
, and the broader treatment of causation in the law by Hart and
Honor{\'e}\cite{harthonore1985}, from which the verdict's
duty--breach--causation--harm structure is drawn. The philosophical
literature distinguishes dependence from production as two concepts of
causation\cite{hall2004}, precisely the distinction our gate-and-grade test
operationalises; graded causation with defaults and normality gives
omissions their standing as causes\cite{halpernhitchcock2015}, which our
duty-indexed absence variables implement mechanically. Empirically, human
causal judgments track counterfactual simulation\cite{gerstenberg2021},
and human responsibility intuitions are sensitive to factors, such as how
critical or salient a contributor appears, that a defensible audit must
not inherit; our volume-confound experiment tests the machine
analogue of exactly this failure.

\subsection*{The responsibility gap and AI accountability}
That learning systems open a gap between harm and accountable agent is the
seminal observation of Matthias\cite{matthias2004}, refined into four
distinct gaps, culpability, moral accountability, public accountability,
and active responsibility\cite{santonidesio2021}, and rooted in the older
problem of many hands\cite{nissenbaum1996}. \audita{} targets the
culpability and public-accountability gaps by manufacturing the evidentiary
object they presuppose. Governance work on agent visibility proposes
identifiers, real-time monitoring, and activity logs\cite{chan2024};
authenticated-delegation frameworks issue scoped, auditable authority from
humans to agents\cite{south2025}; and on-chain identity registries are
commoditising agent identity at scale\cite{erc8004}, \audita{} consumes
these primitives (its delegation certificates and duty clauses are exactly
such tokens) and adds what they do not attempt: post-hoc causal
attribution over the recorded conduct. System- and process-level AI
auditing assesses documentation, monitoring, and governance of deployed
systems\cite{auditmai2024,fernsel2024auditability}; regulation demands
event recording and traceability\cite{euaiact2024} within safety regimes
that presume incidents can be reconstructed\cite{iso10218,isots15066}.
\audita{} audits the event rather than the system, and proposes what the
mandated records must \emph{be} for those obligations to purchase
accountability.

\subsection*{Accountability systems and tamper-evident records}
The distributed-systems ancestry of the record is direct. PeerReview
established that nodes signing all messages into tamper-evident logs,
audited by witnesses, yields two guarantees, detected faults are
irrefutably linked to a faulty node, and correct nodes can always defend
against false accusation\cite{haeberlen2007}; Theorem~1 is our
generalisation of the second guarantee from protocol deviation to graded
semantic responsibility. Accountable Virtual Machines extended the recipe
to record-replay-blame over full executions\cite{haeberlen2010}, and its
stated limits are exactly our deltas: it requires a deterministic reference
implementation and cannot fault faithful execution of flawed
software, language-model principals have no reference implementation, are
nondeterministic, and act on the physical world. CSAR made accountability
survive randomness by logging application inputs and random
choices\cite{backes2009}, the ancestor of our seed-and-model-hash fields;
formal definitions of accountability via judge-rendered verdicts, and
their relation to verifiability, come from K{\"u}sters, Truderung and
Vogt\cite{kusters2010}. The data structures beneath the seal are the
tamper-evident history trees of Crosby and Wallach\cite{crosby2009}, fork
consistency against equivocating servers from SUNDR\cite{li2004sundr},
forward-secure audit logs\cite{schneierkelsey1999}, Merkle
commitments\cite{merkle1988}, and transparency logging at Internet
scale\cite{laurie2014}, under classical Byzantine and network-adversary
models\cite{lamport1982,dolev1983}. Practitioner standardisation is
beginning: an IETF draft specifies hash-chained, optionally signed audit
records for agents with an explicit mapping to the AI Act's logging
article\cite{ietf2026audittrail}, a single-writer schema without lineage,
receipts, or physical binding, whose spirit our record extends. The
nearest new relative is the verifiable-transcript line, which applies
SUNDR-style signed digests and fork detection to LLM conversation
transcripts under an untrusted-provider threat model\cite{xing2026vct};
it certifies a two-party transcript, whereas \audita{} certifies a
multi-principal command graph with authored lineage, actuation binding,
and attribution on top.

\subsection*{Agent observability tooling}
Industry telemetry for agents is consolidating around vendor-neutral
semantic conventions for agent, tool, and model spans\cite{otelgenai}, on
the pedigree of large-scale distributed tracing\cite{sigelman2010};
academic work taxonomises what agent operations should
trace\cite{dong2024agentops} and how agentic systems can be observed and
optimised in operation\cite{moshkovich2025}. The gap this cluster leaves
is the one \audita{} fills: telemetry captures what happened without
identity, integrity, or assessment, no field binds a span to a signing
principal, nothing proves the trace complete, and no semantics turns
spans into responsibility. Our message format is deliberately close to
these conventions so that the three fields standard logs lack (authored
citations, committed effects, pinned seeds) read as an extension rather
than a replacement.

\subsection*{Threats to agent and robot collectives}
The attack literature supplies both motivation and the record-corruption manipulations used in our evaluation.
Jailbreaking attacks produce harmful physical actions on deployed
LLM-controlled robots\cite{robey2025}; in LLM-coordinated multi-robot
teams, compromising a single entry robot propagates malicious intent
through peer communication, with obedience reaching 1.00 and full-team
compromise in as few as three rounds in the strongest reported
cases\cite{huang2026compromise}, the staged version of the incidents
\audita{} investigates. Surveys of agent communication protocols document
tool poisoning, injection, and cross-boundary provenance
gaps\cite{li2025commsec}. Prevention-oriented security frameworks harden
the channel: the Aegis Protocol combines decentralised identifiers,
post-quantum signatures, and zero-knowledge policy compliance under an
extended Dolev--Yao adversary\cite{adapala2025}, and BlockA2A anchors
agent-to-agent interoperability in verifiable
identity\cite{zou2025blocka2a}. These are complements, not competitors:
they aim to prevent the incident, \audita{} to adjudicate it, and a
hardened channel makes the certified record's assumptions easier to
discharge.

\subsection*{Safety science and incident investigation}
The incumbent methodology for accident analysis is systems-theoretic:
STAMP models accidents as control-structure failures and CAST analyses
them deliberately blame-free, asking why and how rather than
who\cite{leveson2011}. \audita{} is designed to feed, not replace, that
practice: the same certified record supports a blame-free learning
analysis and the evidentiary who-and-how-much account that liability,
insurance, and regulation additionally demand. The ethical black box
argued that robots need flight recorders specified for accident
investigation\cite{winfield2017}; our record is its multi-principal,
agentic-era successor, and it attributes as well as logs. Adjacent
blockchain-robotics work uses ledgers for swarm coordination and
Byzantine defence, token economies that neutralise harmful
robots\cite{strobel2023}, reviewed for mobile multi-robot systems in the
Nature portfolio\cite{dorigo2024}; that line secures coordination at run
time, while \audita{} uses ledger machinery for post-incident causal
attribution.

\subsection*{Neighbouring methodological fields}
Three method families adjoin ours and mark its boundary. Root-cause
analysis for microservices computes type-level causal effects over
inferred dependency graphs, increasingly with partial identification
under latent confounding and robustness to graph misspecification; our
problem is token-level actual causation over a graph that is signed by
construction rather than inferred, with uncertainty that is adversarial
rather than statistical, and with graded responsibility, exoneration,
and an innocence guarantee that root-cause outputs do not carry.
Provenance-based forensics captures whole-system operating-system
provenance and reconstructs attack stories over
it\cite{pasquier2017}, the mature syscall-level analogue of our
problem, without message semantics, duties, or embodiment. Causal
statistical fault localisation pioneered causal inference over program
dependence graphs\cite{baah2010}, and its slicing-to-the-cause-effect-chain
pattern prefigures our production gate, ported here to signed
cross-principal command graphs with physical outcomes. Finally,
counterfactual credit assignment in cooperative multi-agent reinforcement
learning\cite{foerster2018,shapley1953} shares the counterfactual core
but serves training-time reward shaping; post-hoc evidentiary
attribution differs in object (a concrete incident), in adversary (the
record itself is attacked), and in consumer (parties with legal stakes),
which is why the game-theoretic attributions it favours misbehave as
responsibility measures\cite{triantafyllou2021}.

\subsection*{The deployment substrate}
The systems whose incidents \audita{} is built to audit are documented in
the main text: language-grounded robot planners and vision-language-action
models\cite{ahn2022saycan,driess2023palme,brohan2023rt2}, open-ended and
communicative agents\cite{wang2023voyager,li2023camel}, and orchestration
frameworks\cite{wu2023autogen,hong2023metagpt}, evaluated on
field-standard task suites\cite{mialon2023gaia,yoran2024assistantbench,
jimenez2024} whose shapes our facility simulation abstracts. Across all
these clusters, the pattern is the one the paper argues: each ingredient of
\audita{} has deep roots in its home field, and what is new is the
composition, certified evidence, production-gated graded causation, and
replay verification bound into one pipeline whose output is designed to
survive a hostile audience.

\end{document}